\newenvironment{claim}[1]{\par\noindent\underline{Claim:}\space#1}{}
\newenvironment{claimproof}[1]{\par\noindent\underline{Proof:}\space#1}{\hfill $\square$} %
\NewDocumentCommand{\rot}{O{45} O{1em} m}{\makebox[#2][l]{\rotatebox{#1}{#3}}}%
\renewcommand{\P}{\mathbb{P}} 
\newcommand{\Q}{\mathbb{Q}} 
\newcommand{\Prob}{\P}
\newcommand{\E}{\mathbb{E}} 
\newcommand{\Var}{\mathbb{V}} 
\newcommand{\VH}{\mathbb{V}_{\mathcal{H}}} 
\newcommand{\MMD}{\mathrm{MMD}} 
\newcommand{\1}{\mathds{1}} 
\newcommand{\R}{\mathbb{R}} 
\newcommand{\N}{\mathbb{N}} 
\renewcommand{\O}{\mathcal{O}} 
\DeclarePairedDelimiter\norm{\lVert}{\rVert} 
\newcommand\myeq{\mkern2.5mu{=}\mkern2.5mu} 
\renewcommand\mid{\mkern4mu{|}\mkern4mu} 
\newcommand{\defeq}{\overset{\textrm{def}}{=}}
\renewcommand{\H}{\ensuremath{{\mathcal H}}}
\newcommand{\rset}{\mathbb{R}}
\newcommand{\VI}{\mathrm{I}}
\newcommand{\forestass}[1]{\textbf{(F#1)}}
\newcommand{\dataass}[1]{\textbf{(D#1)}}
\newcommand{\kernelass}[1]{\textbf{(K#1)}}
\newcommand{\Ybf}{\mathbf{Y}}
\newcommand{\Yibf}{\mathbf{Y}_i}
\newcommand{\ybf}{\mathbf{y}}
\newcommand{\Xbf}{\mathbf{X}}
\newcommand{\xbf}{\mathbf{x}}
\newcommand{\Zbf}{\mathbf{Z}}
\newcommand{\Zcal}{\mathcal{Z}}
\newcommand{\PYgX}{\P_{\Ybf \mid \Xbf}}
\newcommand{\PYgXmj}{\P_{\Ybf \mid \Xbf^{(-j)}}}
\newcommand{\PYgx}{\P_{\Ybf \mid \Xbf=\xbf}}
\newcommand{\PYgxmj}{\P_{\Ybf \mid \Xbf^{(-j)} = \xbf^{(-j)}}}
\newcommand{\PY}{\P_{\Ybf}}
\newcommand{\hmun}{\mu_n(\xbf)}
\newcommand{\hmunX}{\mu_n(\Xbf)}
\newcommand{\hmunbruteXl}{{\mu}_n(\Xbf_i'^{(-j)})}
\newcommand{\hmunbruteX}{{\mu}_n(\Xbf^{(-j)})}
\newcommand{\hmunprojXl}{{\mu}_n^{(-j)}(\Xbf'_i)}
\newcommand{\hmunprojX}{{\mu}_n^{(-j)}(\Xbf)}
\newcommand{\hmunprojx}{{\mu}_n^{(-j)}(\xbf)}
\newcommand{\Kbf}{\mathbf{K}}
\newcommand{\kbf}{\mathbf{k}}
\newcommand{\Lcal}{\mathcal{L}}
\newcommand{\wbf}{\mathbf{w}}
\newcommand\blueft[1]{{\boldsymbol{\textcolor{blue}{#1}}}}
\newtheorem{corollary}{Corollary}
\newtheorem{lemma}{Lemma}
\begin{document}

\twocolumn[

\aistatstitle{MMD-based Variable Importance for Distributional Random Forest}

\aistatsauthor{Clément Bénard $^{1*}$ \And Jeffrey Näf $^{2*}$ \And  Julie Josse $^2$}

\aistatsaddress{ \\[-0.5em] $^1$Safran Tech, Digital Sciences \& Technologies, 78114 Magny-Les-Hameaux, France \\ $^2$Inria, PreMeDICaL Team, University of Montpellier
\\ $^*$ Equal contribution}]

\begin{abstract}
Distributional Random Forest (DRF) is a flexible forest-based method to estimate the full conditional distribution of a multivariate output of interest given input variables. In this article, we introduce a variable importance algorithm for DRFs, based on the well-established drop and relearn principle and MMD distance.
While traditional importance measures only detect variables with an influence on the output mean, our algorithm detects variables impacting the output distribution more generally.
We show that the introduced importance measure is consistent, exhibits high empirical performance on both real and simulated data, and outperforms competitors. In particular, our algorithm is highly efficient to select variables through recursive feature elimination, and can therefore provide small sets of variables to build accurate estimates of conditional output distributions.
\end{abstract}

\section{INTRODUCTION} \label{sec:intro}

\paragraph{Context and Objectives.}
Distributional Random Forest (DRF) \citep{DRF-paper} is an efficient algorithm designed for estimating the conditional distribution of target outputs given a set of input variables.  It is inspired from the highly popular Random Forest algorithm proposed by \citet{breiman2001random}, which has found widespread use in both classification and regression problems.  Unlike Breiman's forests, which provide only the conditional output mean, DRF goes a step further by offering the complete conditional output distribution. This capacity enables users to compute a wide range of quantities of interest with a high accuracy and low computational cost in a subsequent step.  These computations encompass the calculation of conditional quantiles, assessments of conditional independence, evaluation of conditional copulas, and the estimation of heterogeneous treatment effects.
The main features of DRF are the Maximum Mean Discrepancy (MMD) \citep{gretton2007kernel} used as splitting criterion, and the adaptive nearest neighbor formulation of Random Forest \citep{lin2006random, scornet2016random}.
Unfortunately, DRF also inherits the black-box nature of forest-based methods. Indeed, the large number of operations involved in their prediction mechanisms makes it impossible to grasp how inputs are combined to generate predictions. This lack of interpretability is a strong limitation, in particular for applications with critical decisions at stake, such as healthcare. Therefore, the pursuit of interpretability for black-box algorithms has gained considerable momentum in the machine learning community in recent years, with variable importance measures emerging as one of the main post-hoc method for elucidating these complex models.

The principle of variable importance measures is to quantify the influence of each input variable in a given prediction task. 
Two importance measures were originally proposed along with Breiman's forests: the Mean Decrease Accuracy (MDA) \citet{breiman2001random}, and the Mean Decrease Impurity (MDI) \citet{breiman2003atechnical}. The principle of the MDA is to permute the values of a given input variable to break its relation with the output, and compute the associated decrease of accuracy of the forest, defining the importance value for the permuted variable. Although this approach is widely used because of its intuitive definition and small computational cost, many empirical and theoretical studies have shown a strong bias when inputs are correlated \citep{strobl2008conditional, auret2011empirical, gregorutti2017correlation, hooker2019please, Sobol_MDA}. On the other hand, the MDI is defined as the sum of weighted decreases of impurity over all tree nodes that split on a given variable. However, \citet{strobl2007bias} highlight several practical flaws of the MDI, and \citet{scornet2023trees} shows that the MDI is ill-defined, except in restricted settings. Overall, there is a growing consensus in the machine learning community that other alternatives should be preferred to quantify variable importance for Random Forest. 
Regarding non-parametric multi-dimensional output regression, there appears to be little literature on variable importance. Recently, \citet{MRFVI} develop an importance measure for the multivariate Random Forest (MRF) of \citet{segal2011multivariate}, based on split improvement criteria. While this is an important first step, this measure suffers from similar limitations as the original MDI, mentioned above. Additionally, the \textit{drf} package \citep{drf-package} also provides an importance measure, defined as the frequency of splits involving a given variable, following the proposal of \citet{athey2019generalized} for generalized forests, and denoted by vimp-drf throughout the article. This approach is also purely empirical and does not provide a precise quantification of the impact of inputs on the output distribution.

Instead of empirical definitions, variable importance should be first defined through theoretical quantities, and then estimated in a second step with appropriate algorithms, as argued by \citet{Williamson} and \citet{Sobol_MDA} for example. In particular, the drop and relearn principle is often advocated as an efficient approach for variable importance, targeting well-defined theoretical quantities \citep{RFuncertainty, candes2018panning, lei2018distribution, Williamson, hooker2019please}. More precisely, the forest is retrained without a given input variable, and the decrease of accuracy with respect to the initial forest with all variables, provides the importance value. In the case of regression problems, this measure gives the proportion of explained output variance lost when a variable is removed, and has a well-grounded theoretical definition as the total Sobol index \citep{Sobol_indices}. Formally, Sobol sensitivity indices quantify the variance of output means conditional on input variables, and were recently extended to output distributions using the MMD \citep{daveiga2016new, daveiga2021kernelbased}.
Thus, we build on the drop and relearn principle and the generalized MMD-sensitivity indices, to propose a variable importance measure for DRF, defined as the MMD distance between the conditional output distribution given all inputs, and given all but one input. Such importance measure therefore quantifies how the output conditional distribution changes when a variable is removed, and can be estimated by refitting DRF removing variables one by one. 

The definition of a relevant importance measure hinges on the ultimate practical objective, which is typically categorized into two groups: (1) finding a small number of variables with a maximized accuracy, or (2) detecting and ranking all influential variables to focus on for further exploration \citep{Genuer}. 
These two goals differ when variables are dependent. For example, if two variables are highly correlated together and with the output, one of the two inputs can be removed without hurting accuracy for objective (1), since both variables convey the same information. However, both should be included for objective (2), since these two variables may have different meanings in practice for domain experts. 
In this article, we focus on objective (1), since we use the drop and relearn principle, which is only adapted in this case. For objective (2) other strategies can be used, such as Shapley effects \citep{owen2014sobol, lundberg2017unified, SHAFF}. 
The dependence between variables not only play a role in the definition of importance measures, but also present significant challenges when it comes to designing efficient algorithms for estimating it.
In the case of MMD-sensitivity indices, \citet{daveiga2021kernelbased} essentially introduces estimates adapted for the field of computer experiments, where inputs are independent, or assuming the input distribution is known, or using $k$-nearest neighbors, which struggle in non-trivial input dimensions. Using DRF, we can tackle realistic settings with dependent inputs, higher dimensions, and when only a data sample is available.

\paragraph{Motivating example.}
We consider the following scenario, combining two examples given in \citet{athey2019generalized}, where a Gaussian output $Y$ depends on two uniform variables $X^{(1)}$ and $X^{(2)}$, respectively through a shift in mean and a shift in variance, defined by
\begin{align} \label{Scenario1}
    Y \sim \mathcal{N}(0.8\cdot \1(X^{(1)} > 0), (1+\1(X^{(2)}>0))^2).
\end{align}
In addition, $\Xbf$ contains $X^{(3)}$ that is correlated with $X^{(1)}$, but does not influence $Y$, and also seven independent uniform variables. Previous variable importance measures for regression problems are designed to quantify the effect of $\Xbf$ on the conditional mean of $Y$. As such they cannot detect the influence of $X^{(2)}$ on the output distribution, as opposed to the measure introduced in this article. In particular, it correctly quantifies that the effect of $\Xbf$ on $Y$ is divided between $X^{(1)}$ and $X^{(2)}$, as seen in Table \ref{Scenario1table}. 
This is of critical importance, if the goal is to predict the distribution of $Y$ itself, or if one is generally interested in more targets than conditional expectation, such as quantile estimates. Moreover, the importance values of $X^{(3)}, \hdots, X^{(10)}$ are negligible, showing that the importance measure correctly identifies the irrelevant variables, despite the correlation of $X^{(3)}$ with $X^{(1)}$.
\begin{table}
\caption{Variable importance for data distribution defined in Equation \eqref{Scenario1}.}
\setlength{\tabcolsep}{8pt}
\centering
\begin{tabular}[t]{c c c c}
\hline
$X^{(1)}$ & $X^{(2)}$ & $X^{(3)}$ & $X^{(4)}, \hdots, X^{(10)}$\\
\hline
$0.21$ & $0.76$ & $0.007$ & $< 0.006$ \\
\hline
\end{tabular}
\label{Scenario1table}
\end{table}

\paragraph{Contributions.}
We formally define the new variable importance measure for DRF inspired by \citet{daveiga2021kernelbased}, in Section \ref{sec:vimp}, and show how it can be easily estimated by refitting DRF with variables removed one by one. We show that this estimator is consistent for the MMD-based sensitivity index in Section \ref{sec:theory}. To reduce the computational complexity, we also discuss an estimator based on the Projected Distributional Random Forest, extending the Sobol-MDA algorithm from \citet{Sobol_MDA}, and show that this also leads to a consistent estimator. Our approach can thus be seen as a natural extension of the Sobol-MDA developed for standard Random Forest to DRF. In particular, this allows for a principled variable importance measure for a multivariate dependent variable $\mathbf{Y}$. While the theoretical definition of the above importance measure is close to existing proposals \citep{daveiga2021kernelbased}, the core of our contribution is the introduction of an estimate of this importance measure using DRF, which is efficient when input variables are dependent, the output is multi-dimensional, and only a data sample is available, with unknown data distributions. 
Finally in Section \ref{sec:xp}, we analyze a broad range of simulated and real examples, showing the versatility of the new method. The examples raise from one or low-dimensional dependent variables $\Ybf$ to functional dependent data. In particular, we show the efficiency of our importance measure for recursive feature elimination on real datasets.

\section{DRF VARIABLE IMPORTANCE} \label{sec:vimp}

We first need to introduce several notations and concepts to formalize our variable importance measure for DRF. Throughout, we assume an underlying probability space $(\Omega, \mathcal{A}, \P)$, and denote by $\left(\H, \langle\cdot,\cdot\rangle_{\H}\right)$ the reproducing kernel Hilbert space (RKHS) induced by the positive definite, bounded, and continuous kernel $k\colon\R^d\times\R^d \to \R$ \citep[Chapter 2.7]{hilbertspacebook}, with dimension $d \in \mathbb{N}^{\star}$.
The kernel embedding function $\Phi$ maps any probability measure $\Q$ on $\R^d$ to an element $\Phi(\Q) \in \H$, defined by
$\Phi(\Q) = \E[k(\Zbf,\cdot)]$, with $\Zbf \sim \Q$, which is well-defined by continuity and boundedness of $k$. 
For two probability measures $\Q_1$ and $\Q_2$ on $\R^d$, the well-known Maximum Mean Discrepancy (MMD) distance \citep{gretton2012kernel} is given by 
\begin{align*}
    \MMD(\Q_1, \Q_2) = \| \Phi( \Q_1) - \Phi( \Q_2)  \|_{\H}.
\end{align*}

If the kernel $k$ is characteristic, then $\Phi$ is injective, and the MMD is a distance between probability measures.
Next, we consider an output vector of interest $\smash{\Ybf=(Y^{(1)}, Y^{(2)}, \ldots, Y^{(d)})^T \in \mathbb{R}^{d}}$, and an input vector $\smash{\Xbf=(X^{(1)}, X^{(2)}, \ldots, X^{(p)})^T \in \mathbb{R}^{p}}$ of dimension $p \in \mathbb{N}^{\star}$.
Finally, we focus on $\mu(\xbf)$, the Hilbert space embedding of the multivariate conditional distribution $\PYgx$ for a given input point $\xbf \in \mathbb{R}^{p}$, i.e.,
\begin{align*}
    \mu(\xbf) \defeq \Phi(\PYgx) = \E[k(\Ybf, \cdot) \mid \Xbf = \xbf] \in \H.    
\end{align*}

\paragraph{Theoretical importance measure.}
Now, in the same spirit as \cite{daveiga2021kernelbased}, we show how to obtain a variable importance measure based on the MMD embedding of the conditional distribution and the drop and relearn principle.
We first consider the estimation of the conditional mean $\smash{\tau(\xbf) \defeq \E[Y \mid \Xbf=\xbf]}$, for $d=1$. In this case, one may define the total Sobol index \citep{Sobol_indices} as
\begin{align}\label{oldmeasure}
    \mathrm{ST}^{(j)} \defeq \frac{\E[\Var[\tau(\Xbf) \mid \Xbf^{(-j)} ]]}{\Var[Y]},
\end{align}
where $\Xbf^{(-j)}=\left( X^{(\ell)}\right)_{\ell \neq j}$, and $\Var[\cdot]$ is the variance. As mentioned above, this is the expected reduction in output explained variance, once the $j$-th input variable is removed. Another way to see this measure is to write the numerator of $\mathrm{ST}^{(j)}$ as
\begin{align*}
    \E[\Var[\tau(\Xbf) \mid \Xbf^{(-j)}]] = \E[d_E(\tau(\Xbf), \E[ \tau(\Xbf) \mid \Xbf^{(-j)} ])^2],
\end{align*}
where $d_E$ is the Euclidean distance. That is, we consider the distance $d_E$ between the estimates conditional on respectively $\Xbf$ and $\Xbf^{(-j)}$.
For other target quantity than $\tau$, we need another relevant distance. For $\mu(\xbf) \in \H$, a natural choice is the distance induced by the norm $\|\cdot\|_{\H}$, i.e. $d(\xi, \xi') = \|\xi - \xi'\|_{\H}$ for $\xi, \xi' \in \H$, which leads to the variance operator $\VH$ in $\H$, defined by $\VH[\xi \mid \Xbf] = \E[ \| \xi - \E[ \xi \mid \Xbf] \|_{\H}^2 \mid \Xbf]$.
We can now formalize our theoretical importance measure as the generalized total Sobol index, defined by
\begin{align}\label{DRFvarimportance}
    \VI^{(j)} \defeq \frac{\E[ \VH[\mu(\Xbf) \mid \Xbf^{(-j)}]]}{\VH[\mu(\Xbf)]},
\end{align}
which can also be written using the MMD, as stated in the following proposition: $\VI^{(j)}$ quantifies the distance between the conditional distribution $\PYgX$ with all input variables involved and $\PYgXmj$ when one variable is dropped, with respect to the distance between $\PY$ and $\PYgX$. Therefore, this measure has different goals than the traditional Sobol indices. While the latter is designed to detect changes in the conditional expectation of the response variable, $\VI^{(j)}$ is designed to detect any change in the distribution of $\Ybf$, as we will formally prove in the following section.
All proofs of propositions and theorems are gathered in Appendix~B.

\begin{restatable}{proposition}{VIMMD}\label{prop:VI_MMD}
    If $\VI^{(j)}$ is the generalized total Sobol index defined by Equation (\ref{DRFvarimportance}), then we have
    \begin{align*}
        \VI^{(j)} &= \frac{\E[\MMD^2(\PYgX, \PYgXmj)]}{\E[\MMD^2(\PY, \PYgX)]}, \\
        \VI^{(j)} &= 1 - \frac{\E[\MMD^2(\PY, \PYgXmj)]}{\E[\MMD^2(\PY, \PYgX)]}.
    \end{align*}
\end{restatable}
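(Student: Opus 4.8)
The plan is to reduce everything to kernel mean embedding identities and then apply a Hilbert-space version of the law of total variance. The central observation, which I would establish first, is that the two ``relearned'' embeddings appearing inside $\VH$ are themselves kernel mean embeddings of the dropped-variable distributions. Since $k$ is bounded, $k(\Ybf,\cdot)$ is a bounded (hence Bochner-integrable) $\H$-valued random element, so its Hilbert-space conditional expectations exist and obey the tower property. Starting from $\mu(\Xbf)=\E[k(\Ybf,\cdot)\mid\Xbf]$, the tower property gives $\E[\mu(\Xbf)\mid\Xbf^{(-j)}]=\E[k(\Ybf,\cdot)\mid\Xbf^{(-j)}]=\Phi(\PYgXmj)$ and, marginalizing fully, $\E[\mu(\Xbf)]=\E[k(\Ybf,\cdot)]=\Phi(\PY)$.

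With these identifications in hand, the first identity is immediate. Expanding the numerator of $\VI^{(j)}$ by the definition of $\VH$ and taking the outer expectation yields $\E[\VH[\mu(\Xbf)\mid\Xbf^{(-j)}]]=\E[\|\mu(\Xbf)-\E[\mu(\Xbf)\mid\Xbf^{(-j)}]\|_{\H}^2]=\E[\|\Phi(\PYgX)-\Phi(\PYgXmj)\|_{\H}^2]$, which is exactly $\E[\MMD^2(\PYgX,\PYgXmj)]$ by the definition of the MMD. Likewise the denominator is $\VH[\mu(\Xbf)]=\E[\|\mu(\Xbf)-\Phi(\PY)\|_{\H}^2]=\E[\MMD^2(\PY,\PYgX)]$. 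Taking the ratio produces the first display.

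For the second identity I would invoke orthogonality of the conditional expectation: the vector $\mu(\Xbf)-\E[\mu(\Xbf)\mid\Xbf^{(-j)}]$ has vanishing conditional mean given $\Xbf^{(-j)}$, whereas $\E[\mu(\Xbf)\mid\Xbf^{(-j)}]-\Phi(\PY)$ is $\Xbf^{(-j)}$-measurable, so the expectation of their $\H$-inner product vanishes. This Pythagorean decomposition, the $\H$-valued law of total variance, gives $\E[\|\mu(\Xbf)-\Phi(\PY)\|_{\H}^2]=\E[\|\mu(\Xbf)-\E[\mu(\Xbf)\mid\Xbf^{(-j)}]\|_{\H}^2]+\E[\|\E[\mu(\Xbf)\mid\Xbf^{(-j)}]-\Phi(\PY)\|_{\H}^2]$. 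Rewriting the three terms via the MMD exactly as above, this reads $\E[\MMD^2(\PY,\PYgX)]=\E[\MMD^2(\PYgX,\PYgXmj)]+\E[\MMD^2(\PY,\PYgXmj)]$; dividing through by $\E[\MMD^2(\PY,\PYgX)]$ and substituting the first identity yields the second.

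The only delicate point is the justification that conditional expectation commutes with $\Phi$, namely that the $\H$-valued conditional expectation $\E[k(\Ybf,\cdot)\mid\Xbf^{(-j)}]$ coincides with the kernel mean embedding $\Phi(\PYgXmj)$ of the regular conditional distribution. Boundedness and continuity of $k$ make $\Phi$ well-defined on all probability measures and guarantee Bochner integrability, so the tower property and the orthogonality decomposition both transfer verbatim from the scalar to the Hilbert-space setting; once this is granted, the remainder is bookkeeping.
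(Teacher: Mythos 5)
Your proof is correct, but it takes a genuinely different route from the paper's for the key steps. Both arguments identify $\E[\mu(\Xbf)\mid\Xbf^{(-j)}]=\Phi(\PYgXmj)$ via the tower property for Bochner conditional expectations, so the numerator is handled identically. The divergence is in the denominator and the second identity. The paper works at the level of scalar kernel evaluations: it introduces an independent copy $\Ybf'$ of $\Ybf$ and conditionally i.i.d.\ copies $\tilde{\Ybf}$ (given $\Xbf$) and $\tilde{\Ybf}^{(-j)}$ (given $\Xbf^{(-j)}$), expands every squared MMD and every variance term into expectations such as $\E[k(\Ybf,\Ybf')]$, $\E[k(\Ybf,\tilde{\Ybf})]$ and $\E[k(\Ybf,\tilde{\Ybf}^{(-j)})]$, and obtains both identities by cancellation --- in particular it derives the explicit formula $\VH[\mu(\Xbf)]=\E[k(\Ybf,\tilde{\Ybf})]-\E[k(\Ybf,\Ybf')]$ along the way, which is of independent interest for understanding the estimator. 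You instead stay abstract: you note $\E[\mu(\Xbf)]=\Phi(\PY)$, which makes the denominator equal to $\E[\MMD^2(\PY,\PYgX)]$ immediately by the definition of the MMD, and you obtain the second identity from the $\H$-valued law of total variance, i.e.\ the orthogonality of $\mu(\Xbf)-\E[\mu(\Xbf)\mid\Xbf^{(-j)}]$ to any $\Xbf^{(-j)}$-measurable element, giving the Pythagorean relation $\E[\MMD^2(\PY,\PYgX)]=\E[\MMD^2(\PYgX,\PYgXmj)]+\E[\MMD^2(\PY,\PYgXmj)]$, which the paper instead verifies by kernel algebra. Your route is shorter, avoids the auxiliary copies entirely, and makes the structural analogy with the scalar total Sobol index transparent (the second identity literally \emph{is} the law of total variance in $\H$); the paper's route is more computational but yields closed-form kernel expressions that readers accustomed to MMD U-statistics may find more concrete. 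The one delicate point --- that the Bochner conditional expectation $\E[k(\Ybf,\cdot)\mid\Xbf^{(-j)}]$ agrees with the embedding of the regular conditional distribution --- you flag explicitly and justify by boundedness and continuity of $k$; the paper relies on the same identification implicitly through its definition of $\mu(\xbf)$, so neither argument is weaker on this score.
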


Importantly, our importance measure is defined with a different normalization constant than in \citet{daveiga2021kernelbased}, since we use $\VH[\mu(\Xbf)]$ instead of $\VH[k(\Ybf,\cdot)] = \E[k(\Ybf, \Ybf)] - \E[k(\Ybf, \Ybf')]$, where $\Ybf'$ is and independent copy of $\Ybf$. Indeed, \citet{daveiga2021kernelbased} introduces MMD-based sensitivity indices in the specific settings of computer experiments, where outputs are deterministic functions of inputs. In this case, the two normalization constants coincide, since $\mu(\Xbf) = k(\Ybf,\cdot)$. On the other hand, we consider distributions where the variability of $\Ybf$ is only partially explained by $\Xbf$, with potentially an explained variability $\VH[\mu(\Xbf)]$ much smaller than the total output variations $\VH[k(\Ybf,\cdot)]$. Therefore, using this last quantity as normalization constant would often lead to small importance values on real data, and we rather define $\VI^{(j)}$ with respect to the variability $\VH[\mu(\Xbf)]$ explained by all inputs. Notice that, if $\Ybf$ and $\Xbf$ are independent, $\VH[\mu(\Xbf)] = 0$, and no input is influential by construction.

\paragraph{Variable importance estimate.}
We assume that an independent and identically distributed data sample $\Zcal_n = \{\Zbf_i\}_{i=1}^n$ of size $n$ is available, where the $i$-th observation is defined by $\Zbf_i=(\Xbf_i, k(\Ybf_i, \cdot)) \in \R^p \times \H$. 
DRF provides nonparametric estimates of the distribution 
of the multivariate response $\Ybf$, conditional on the potentially high-dimensional input vector $\Xbf$. That is, for a given query point $\xbf \in \mathbb{R}^{p}$, DRF estimates the Hilbert space embedding $\mu(\xbf)$ of $\PYgx$, denoted by $\mu_{N,n}(\xbf)$, and defined as the average of the $N$ tree estimates. Formally, $\mu_{N,n}(\xbf)$ writes
\begin{equation*}\label{rewriting}
    \mu_{N,n}(\xbf) = \frac{1}{N} \sum_{\ell=1}^N T_n(\mathbf{x}; \varepsilon_{\ell}, \Zcal_{\ell}),
\end{equation*}
where $\Zcal_{\ell} = \{\Zbf_{\ell_1}, \ldots, \Zbf_{\ell_{s_n}}\}$ is a random subset of $\Zcal_n$ of size $s_n$ chosen for constructing the $\ell$-th tree, $\varepsilon_\ell$ is a random variable capturing the randomness in growing the $\ell$-th tree such as the choice of the splitting candidates, and $T_n(\mathbf{x}; \varepsilon_\ell, \Zcal_\ell)$ denotes the output of a single tree. More precisely, the tree estimate at query point $\xbf$, constructed from $\varepsilon_{\ell}$ and $\Zcal_{\ell}$, is given by the average of the terms $k(\Ybf_i, \cdot)$ over all data points $\mathbf{X}_i$ contained in the leaf $\Lcal_{\ell}(\mathbf{x})$ where $\xbf$ falls, i.e.,
\begin{align*}
    T_n(\xbf; \varepsilon_{\ell}, \Zcal_{\ell})=\sum_{i=1}^{s_n} \frac{\1(\Xbf_{\ell_i} \in \mathcal{L}_{\ell}(\xbf))}{|\mathcal{L}_{\ell}(\xbf)|} k(\Ybf_{\ell_i}, \cdot).
\end{align*}
Then, we can express the DRF output $\mu_{N,n}(\xbf)$ as an adaptive nearest neighbor estimate, defined by 
\begin{align*}
    \mu_{N,n}(\xbf) = \frac{1}{n} \sum_{i=1}^n w_i(\xbf) k(\Ybf_i, \cdot),
\end{align*}
where the weight for each training observation $\Xbf_i$ writes $w_i(\xbf) = 1/N \sum_{\ell=1}^N \1(\Xbf_{i} \in \mathcal{L}_{\ell}(\xbf))/|\mathcal{L}_{\ell}(\xbf)|$.

Finally, we build an estimate $\VI_n^{(j)}$ of $\VI^{(j)}$ using an initial DRF estimate $\mu_{N,n}(\xbf)$ fit with all inputs involved, combined with the DRF estimate retrained with the $j$-th variable removed, denoted by $\mu_{N,n}(\xbf^{(-j)})$. Thus, using an independent sample $\Xbf'_1, \ldots \Xbf'_n$, we define
\begin{align}\label{Injdef}
    \VI_n^{(j)} = \frac{ \sum_{i=1}^n \norm{\mu_{N,n}(\Xbf'_i) - \mu_{N,n}(\Xbf_i'^{(-j)}) }_{\H}^2 }{\sum_{i=1}^n \norm{ \mu_{N,n}(\Xbf'_i) - \overline{\mu_{N,n}} }_{\H}^2} - \VI_n^{(0)},
\end{align}
where $\overline{\mu_{N,n}} = \sum_{i=1}^n \mu_{N,n}(\Xbf'_i)/n$, and $\VI_n^{(0)}$ is defined as the first term of $\smash{\VI_n^{(j)}}$, but with the DRF $\mu'_{N,n}(\Xbf'_i)$, retrained with still all inputs involved but new independent randomizations of the trees $\varepsilon_1', \hdots, \varepsilon_N'$, i.e. 
\begin{align*}
    \VI_n^{(0)} = \frac{\sum_{i=1}^n \norm{\mu_{N,n}(\Xbf'_i) - \mu'_{N,n}(\Xbf'_i)}_{\H}^2 }{\sum_{i=1}^n \norm{ \mu_{N,n}(\Xbf'_i) - \overline{\mu_{N,n}}}_{\H}^2}.
\end{align*}
This is used in Equation (\ref{Injdef}) to mitigate the finite sample bias.
In practice, $\smash{\VI_n^{(j)}}$ is simply computed through vector and matrix multiplications. To state the formula, we introduce the kernel matrix $ \smash{\Kbf=(k(\Yibf,\Ybf_{j} ))_{i,j \in \{1,\ldots n\}}}$, the DRF weight vectors $\wbf(\xbf) = (w_1(\xbf), \hdots, w_n(\xbf))$, and $\smash{\wbf(\xbf^{(-j)}) = (w_1(\xbf^{(-j)}), \hdots, w_n(\xbf^{(-j)}))}$ for the retrained DRF without the $j$-th variable. Moreover, we consider the vector $ \smash{\kbf=( k(\Ybf_1,\cdot ) , \ldots, k(\Ybf_{n},\cdot ) )^{\top}}$, and the mean weight over the independent sample $\smash{\bar{\wbf} = \sum_{i=1}^n \wbf(\Xbf'_i)/n}$. Then, the forest estimates writes $\mu_{N,n}(\xbf) = \wbf(\xbf)^{\top} \kbf$ and $\mu_{N,n}(\xbf^{(-j)}) = \wbf(\xbf^{(-j)})^{\top} \kbf$, and $\smash{\VI_n^{(j)}}$ is calculated with the following formula,
\begin{align} \label{eq:VIn}
    \VI_n^{(j)} = \Big\{ \sum_{i=1}^n \big[ \wbf(\Xbf'_{i}) - \wbf&(\Xbf_{i}'^{(-j)}) \big]^{\top} \Kbf \\[-1.2em] &\times \big[ \wbf(\Xbf'_{i}) - \wbf(\Xbf_{i}'^{(-j)}) \big] \Big\} \nonumber \\[-0.8em]
    \times \Big\{ \sum_{i=1}^n \big[ \wbf(\Xbf'_{i}) - &\bar{\wbf} \big]^{\top} \Kbf \big[ \wbf(\Xbf'_{i}) - \bar{\wbf} \big] \Big\}^{-1} \hspace*{-1mm} - \VI_n^{(0)}, \nonumber
\end{align}
where $\VI_n^{(0)}$ takes the same form as the first term.
Thus, we in fact consider the difference in weights, with each element weighted by the kernel matrix $\Kbf$. In turn, this is standardized by the estimated variance of the embedding of $\Ybf\mid \Xbf$. For the sake of clarity, $\smash{\VI_n^{(j)}}$ is formalized with an independent dataset, but out-of-bag predictions can also be used instead.
We will see in the next section that this variable importance algorithm for DRF is consistent with respect to the theoretical importance measure defined in Equation (\ref{DRFvarimportance}), and thus provides an efficient assessment of the impact of each variable on the output conditional distribution.

\section{THEORETICAL PROPERTIES} \label{sec:theory}

The construction of our variable importance algorithm is based on well-defined quantities from Equations (\ref{oldmeasure}) and (\ref{DRFvarimportance}) and Proposition \ref{prop:VI_MMD}, and therefore enjoys good theoretical properties as we show throughout this section. To state our results, we need to formalize several assumptions, and we first characterize the required kernel properties.
\begin{enumerate}[label=(\textbf{K\arabic*})]
    \item\label{kernelass1} The kernel $k$ is bounded, and the function $(\xbf,\ybf) \mapsto k(\xbf,\ybf)$ is (jointly) continuous.
    \item\label{kernelass3} The kernel $k$ is characteristic.
\end{enumerate}
In particular, Assumption \ref{kernelass3} implies that the kernel embedding function $\Phi$ is injective, and then, for two probability measures $\Q_1$ and $\Q_2$, $\| \Phi(\Q_1) - \Phi(\Q_2) \|_{\H} = 0$ implies $\Q_1 = \Q_2$, as explained in \citet{optimalestimationofprobabilitymeasures, simon2020metrizing}. For example, all these assumptions are met for the Gaussian kernel, which is the standard kernel in DRF, see e.g., \citet[Appendix A]{DRF-paper}. When the chosen kernel $k$ satisfies these assumptions, our importance measure defined in Equation (\ref{DRFvarimportance}) detects any change in the output conditional distribution when a variable is removed, as stated in the following proposition.

\begin{restatable}{proposition}{VIpositive}\label{prop:VI_positive}
  Assume that Assumptions~\ref{kernelass1}-\ref{kernelass3} holds and that for each $\xbf$ in a set with nonzero probability, $\PYgx \neq \PYgxmj$, then $0 < \VI^{(j)} \leq 1$, and otherwise $\VI^{(j)} = 0$.
\end{restatable}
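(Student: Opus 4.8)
The plan is to argue directly from the MMD representation of $\VI^{(j)}$ supplied by Proposition~\ref{prop:VI_MMD}, namely
\[
\VI^{(j)} = \frac{\E[\MMD^2(\PYgX, \PYgXmj)]}{\E[\MMD^2(\PY, \PYgX)]},
\]
and to establish the upper bound and the positivity/vanishing dichotomy separately. As a preliminary, I would record that boundedness of the kernel (Assumption~\ref{kernelass1}) makes $\mu(\Xbf) = \Phi(\PYgX)$ a bounded $\H$-valued random variable, so that it lies in the Bochner space $L^2(\Omega;\H)$ and every (conditional) expectation below is well-defined and finite; this also guarantees the denominator is finite.

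For the upper bound $\VI^{(j)} \leq 1$, I would return to the variance form of the index in Equation~(\ref{DRFvarimportance}) and invoke the law of total variance in $\H$. Writing $\mathcal{G}$ for the $\sigma$-algebra generated by $\Xbf^{(-j)}$ and using that the conditional expectation $\E[\,\cdot \mid \mathcal{G}]$ is the orthogonal projection of $L^2(\Omega;\H)$ onto its $\mathcal{G}$-measurable subspace, the Pythagorean identity yields
\[
\VH[\mu(\Xbf)] = \E\big[\VH[\mu(\Xbf) \mid \Xbf^{(-j)}]\big] + \VH\big[\E[\mu(\Xbf) \mid \Xbf^{(-j)}]\big].
\]
The second summand is nonnegative, so the numerator of $\VI^{(j)}$ never exceeds its denominator, giving $\VI^{(j)} \leq 1$. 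This Hilbert-space variance decomposition is the crux of the argument, and I expect it to be the main obstacle: unlike the scalar law of total variance, it requires justifying that the conditional expectation of an $\H$-valued random variable acts as an orthogonal projection in $L^2(\Omega;\H)$, which is precisely where the placement of $\mu(\Xbf)$ in $L^2(\Omega;\H)$ via the boundedness assumption is needed.

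For the dichotomy I would use that the kernel is characteristic (Assumption~\ref{kernelass3}), so that $\Phi$ is injective and $\MMD$ is a genuine metric, i.e.\ $\MMD(\Q_1,\Q_2) = 0$ if and only if $\Q_1 = \Q_2$. In the ``otherwise'' case one has $\PYgx = \PYgxmj$ for almost every $\xbf$, hence $\MMD^2(\PYgX,\PYgXmj) = 0$ almost surely and the numerator vanishes, giving $\VI^{(j)} = 0$. In the complementary case, $\PYgx \neq \PYgxmj$ on a set of positive probability, on which $\MMD^2(\PYgX,\PYgXmj) > 0$ by the metric property, so the numerator $\E[\MMD^2(\PYgX,\PYgXmj)]$ is strictly positive. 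Finally, combining this with the variance decomposition above, which forces the denominator to dominate the now-positive numerator, shows the denominator is itself strictly positive, so the ratio is well-defined and $0 < \VI^{(j)} \leq 1$.
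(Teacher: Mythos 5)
Your proof is correct, and your positivity/vanishing argument is exactly the paper's: Assumption~\ref{kernelass3} makes $\Phi$ injective, so $\PYgx \neq \PYgxmj$ on a positive-probability set forces $\E[\norm{\mu(\Xbf)-\mu(\Xbf^{(-j)})}_{\H}^2] > 0$, and conversely the numerator vanishes when the two conditional distributions agree almost everywhere. Where you genuinely diverge is the upper bound. The paper obtains $\VI^{(j)} \leq 1$ in one line from the second identity of Proposition~\ref{prop:VI_MMD}, namely $\VI^{(j)} = 1 - \E[\MMD^2(\PY, \PYgXmj)]/\E[\MMD^2(\PY, \PYgX)]$, using only that MMD is nonnegative; that identity was itself established inside the proof of Proposition~\ref{prop:VI_MMD} by concrete kernel-expectation computations with independent copies of $\Ybf$. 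You instead re-derive the underlying fact abstractly, as the law of total variance for $\H$-valued random variables in $L^2(\Omega;\H)$, resting on conditional expectation being an orthogonal projection in that Bochner space. The two routes encode the same Pythagorean identity, so neither is more general, but the trade-off is real: the paper's argument is essentially free given Proposition~\ref{prop:VI_MMD} and sidesteps any Bochner-space machinery, whereas your projection step needs its own justification (square-integrability from boundedness of $k$, and separability of $\H$, which follows from \ref{kernelass1} as the paper notes in its appendix). In exchange, your route delivers a small bonus the paper leaves implicit: since the decomposition exhibits the denominator as the numerator plus a nonnegative term, strict positivity of the numerator automatically yields strict positivity of the denominator, so the ratio defining $\VI^{(j)}$ is well-defined in the nondegenerate case rather than being assumed so.
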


To deepen our discussion, we also need assumptions about the data distribution, given below.
\begin{enumerate}[label=(\textbf{D\arabic*})]
    \item\label{dataass1} The observations $\mathbf{X}_1,\ldots, \mathbf{X}_n$ are independent and identically distributed on $[0,1]^p$, with a density bounded from below and above by strictly positive constants.
        \item\label{dataass2} The mapping $\xbf \mapsto \mu(\xbf)=\E[ k(\Ybf,\cdot)  \mid \Xbf \myeq \xbf] \in \H $ is Lipschitz.
\end{enumerate}
Assumption \dataass{1} is standard when analyzing Random Forest, see e.g., \citet{wager2017estimation, athey2019generalized, DRF-paper, näf2023confidence}. Assumption \dataass{2} can be restated as
\begin{align*}
    \MMD(\P_{\Ybf \mid \Xbf=\xbf_1}, \P_{\Ybf \mid \Xbf=\xbf_2}) \leq L \| \xbf_1 - \xbf_2\|_{\R^p},
\end{align*}
for all $\xbf_1, \xbf_2 \in \R^p$, and some Lipschitz constant $L > 0$. Thus, whenever $\xbf_1$ and $\xbf_2$ are close, the corresponding distributions $\P_{\Ybf \mid \Xbf=\xbf_1}$ and $\P_{\Ybf \mid \Xbf=\xbf_2}$ need to be close in terms of MMD distance. This corresponds to a natural generalization of the Lipschitz condition usually assumed for standard Random Forest \citep{wager2018estimation, athey2019generalized}.

\paragraph{Consistency of DRF importance.}
To develop our theory, we consider infinite forests, a standard simplification also employed by~\citet{scornet2015consistency, wager2018estimation, athey2019generalized}, where the considered forest is defined as the limit when the number of trees $N$ grows to infinity. Such forest estimator $\hmun$ is obtained by averaging all $\binom{n}{s_n}$ possible subsets of $\{\Zbf_{i}\}_{i=1}^n$ of size $s_n$, and taking the expectation over the tree randomization $\varepsilon$.
This idealized version of our DRF predictor, which we will denote by $\hmun$ from now onwards, is given by 
\begin{equation}\label{finalestimator}
    \hmun = \binom{n}{s_n}^{-1} \hspace*{-5mm} \sum_{i_1 < \cdots < i_{s_n}} \hspace*{-4mm} \E\left[T_n(\xbf; \varepsilon, \{\mathbf{Z}_{i_1}, \ldots, \mathbf{Z}_{i_{s_n}}\}) \mid \Zcal_n \right].
\end{equation}

Following~\citet{wager2018estimation, athey2019generalized, DRF-paper, näf2023confidence}, the forest construction enforces that trees are honest, symmetric, $\alpha$-regular, each node may split on all variables with a positive probability, and the subsample size $s_n$ is defined by $s_n = n^{\beta}$, with $0 < \beta < 1$. Theses characteristics are formalized in Assumptions \forestass{1}--\forestass{5} in Appendix B. We now prove the consistency of the proposed DRF variable importance algorithm, stated in Equation \eqref{Injdef}. First, we slightly strengthen the result of consistency in \citet[Theorem 1]{DRF-paper}.

\begin{restatable}{proposition}{meanconsistency}\label{thm: meanconsistency}
Assume that the forest construction satisfies the properties~\textbf{(F1)}-\textbf{(F5)}. Additionally, assume  that $k$ meets Assumption~\ref{kernelass1}, and that \ref{dataass1} and \ref{dataass2} hold. Then, we have consistency of $\hmun$ in~\eqref{finalestimator} with respect to the RKHS norm in mean, that is
\begin{align*}
\E[\norm{\hmunX- \mu(\Xbf)}_\H] \longrightarrow 0.
\end{align*}
\end{restatable}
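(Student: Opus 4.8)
\textbf{Proof plan for Proposition~\ref{thm: meanconsistency}.}

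The plan is to bound the RKHS error by transferring the existing consistency result of \citet[Theorem 1]{DRF-paper}, which controls the pointwise MMD error $\MMD(\hPYgX, \PYgX) = \norm{\hmun - \mu(\xbf)}_\H$ in probability, into a statement about convergence in mean of the integrated error $\E[\norm{\hmunX - \mu(\Xbf)}_\H]$. The natural route is to first establish convergence in probability of the random quantity $\norm{\hmunX - \mu(\Xbf)}_\H$ (where the extra randomness is the test point $\Xbf$ drawn according to \dataass{1}), and then upgrade this to convergence in mean via uniform integrability, which is where boundedness of the kernel \ref{kernelass1} will do the essential work.

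First I would recall the adaptive-nearest-neighbor representation $\hmunX = \frac{1}{n}\sum_{i=1}^n w_i(\Xbf) k(\Ybf_i, \cdot)$ and use the triangle inequality together with the honesty, symmetry, $\alpha$-regularity and subsampling assumptions \forestass{1}--\forestass{5} to reproduce the bias--variance decomposition underlying \citet[Theorem 1]{DRF-paper}. The Lipschitz assumption \dataass{2} controls the bias term: points falling in the same leaf as $\xbf$ have conditional embeddings $\mu(\Xbf_i)$ close to $\mu(\xbf)$ because leaves shrink in diameter, so $\MMD(\P_{\Ybf\mid\Xbf=\Xbf_i},\PYgx) \le L\norm{\Xbf_i - \xbf}$ is small. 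The variance term is handled by the standard argument that $\alpha$-regularity with $s_n = n^\beta$ forces the effective number of neighbors to grow, shrinking the fluctuation of the weighted average in $\H$. This yields $\norm{\hmun - \mu(\xbf)}_\H \to 0$ in probability pointwise, and by \dataass{1} (density bounded away from $0$ and $\infty$ on $[0,1]^p$) the same holds for a random test point, giving $\norm{\hmunX - \mu(\Xbf)}_\H \to 0$ in probability.

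The main obstacle, and the step requiring the most care, is the passage from convergence in probability to convergence in mean. Here I would exploit that \ref{kernelass1} gives a uniform bound $k(\ybf,\ybf) \le \kappa$, hence $\norm{k(\Ybf_i,\cdot)}_\H = \sqrt{k(\Ybf_i,\Ybf_i)} \le \sqrt{\kappa}$ and $\norm{\mu(\xbf)}_\H \le \sqrt{\kappa}$ for every $\xbf$. Since the weights are nonnegative and sum to one, Jensen's inequality gives $\norm{\hmunX}_\H \le \sqrt{\kappa}$, so the error satisfies $\norm{\hmunX - \mu(\Xbf)}_\H \le 2\sqrt{\kappa}$ almost surely, a deterministic bound. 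A bounded sequence converging in probability converges in mean by the bounded convergence theorem (equivalently, uniform integrability is immediate), which delivers $\E[\norm{\hmunX - \mu(\Xbf)}_\H] \to 0$ and completes the proof.

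I expect the bias--variance reproduction to be routine given that \citet{DRF-paper} already carried it out for the pointwise MMD; the genuinely new content is twofold: handling the integration over the random test point $\Xbf$ (which \dataass{1} makes harmless) and the uniform-integrability upgrade. The slight strengthening over \citet[Theorem 1]{DRF-paper} is precisely that we obtain an $L^1$ statement rather than convergence in probability, and the uniform boundedness of the embeddings is exactly what makes this strengthening free.
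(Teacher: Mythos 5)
Your proposal is correct and follows essentially the same route as the paper's proof: cite the pointwise convergence in probability from \citet[Theorem 1]{DRF-paper}, pass to a random test point by integrating the conditional probability (dominated convergence), and then upgrade to convergence in mean using the uniform bound $\norm{\hmunX - \mu(\Xbf)}_\H \leq 2\sqrt{C}$ that follows from the convex-combination form of the forest estimate and the boundedness of $k$ in \ref{kernelass1}. The only cosmetic difference is that you propose to re-derive the bias--variance analysis behind the cited theorem, whereas the paper simply invokes it.
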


The consistency of $\hmunbruteX$ directly follows from Proposition \ref{thm: meanconsistency}, since $\hmunbruteX$ is trained on the data with the $j$-th input variable removed. We only need the following additional Lipschitz assumption to satisfy Assumption~\ref{dataass2} with a reduced set of inputs.
\begin{enumerate}[label=(\textbf{D\arabic*})]
    \setcounter{enumi}{2}
    \item\label{dataass3} The mapping $\xbf^{(-j)} \mapsto \E[ k(\Ybf,\cdot)  \mid \Xbf^{(-j)} \myeq \xbf^{(-j)}] \in \H $ is Lipschitz.
\end{enumerate}
\begin{restatable}{proposition}{projmeanconsistency}\label{thm: proj_meanconsistency}
Under the assumptions of Proposition \ref{thm: meanconsistency}, and provided that Assumption \ref{dataass3} is satisfied, we have
$\E[\norm{\hmunbruteX- \mu(\Xbf^{(-j)})}_\H] \longrightarrow 0$.
\end{restatable}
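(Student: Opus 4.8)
The plan is to observe that $\hmunbruteX = \mu_n(\Xbf^{(-j)})$ is \emph{itself} a DRF estimator, namely the one grown on the reduced data $\{(\Xbf_i^{(-j)}, k(\Ybf_i,\cdot))\}_{i=1}^n$ in which the $j$-th input coordinate has simply been discarded, and that $\mu(\Xbf^{(-j)}) = \E[k(\Ybf,\cdot)\mid\Xbf^{(-j)}]$ is the corresponding conditional mean embedding for this reduced problem. Consequently the claim is nothing more than an instance of Proposition \ref{thm: meanconsistency} applied with the input vector $\Xbf^{(-j)}\in\R^{p-1}$ in place of $\Xbf$, and the whole task reduces to checking that the hypotheses of that proposition carry over to the reduced setting. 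Since the reduced observations are measurable functions of the original i.i.d.\ observations, they remain i.i.d., so no new probabilistic structure is introduced.

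First I would dispatch the structural hypotheses. The forest-construction assumptions \forestass{1}--\forestass{5} are inherited verbatim: the DRF without variable $j$ is produced by the same algorithm on a feature space of dimension $p-1$, so honesty, symmetry, $\alpha$-regularity, the positive splitting probability on each of the remaining coordinates, and the subsampling rate $s_n=n^\beta$ all transfer unchanged (with the positive-split-probability clause now ranging over the $p-1$ surviving variables). The kernel assumption \ref{kernelass1} concerns only the output kernel $k$ and is untouched by dropping an input. The two data assumptions are then verified as follows. For \ref{dataass1}, the marginal density of $\Xbf^{(-j)}$ on $[0,1]^{p-1}$ is $f_{-j}(\xbf^{(-j)}) = \int_0^1 f(\xbf)\,dx^{(j)}$; integrating the bounds $0<c\le f\le C$ over the unit-length interval preserves them, giving $c\le f_{-j}\le C$, so \ref{dataass1} holds for the reduced inputs. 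For \ref{dataass2}, the required Lipschitz property of the reduced embedding $\xbf^{(-j)}\mapsto \E[k(\Ybf,\cdot)\mid\Xbf^{(-j)}=\xbf^{(-j)}]$ is exactly Assumption \ref{dataass3}, which is precisely why it was introduced. With every hypothesis in force, Proposition \ref{thm: meanconsistency} yields $\E[\norm{\hmunbruteX-\mu(\Xbf^{(-j)})}_\H]\to 0$.

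The main point to stress is that there is no genuine analytic obstacle here: the statement is engineered to be a corollary of Proposition \ref{thm: meanconsistency}, and the work is almost entirely bookkeeping. The only steps that deserve to be written out rather than asserted are the verification of \ref{dataass1} for the marginal (confirming that integrating out one coordinate of a density bounded above and below by strictly positive constants leaves it bounded by the same constants) and the explicit check that the forest assumptions restate unchanged in dimension $p-1$. I expect the former to be the most error-prone if one is careless about the domain of integration, but it is immediate once phrased as above.
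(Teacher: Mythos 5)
Your proposal is correct and takes exactly the same route as the paper, whose entire proof reads ``Direct application of Proposition \ref{thm: meanconsistency}'': the statement is engineered so that consistency of the refitted DRF follows from the general mean-consistency result applied to the reduced input $\Xbf^{(-j)}$, with Assumption \ref{dataass3} supplying the Lipschitz condition in place of \ref{dataass2}. Your additional bookkeeping (checking that \forestass{1}--\forestass{5} and \ref{dataass1} transfer to dimension $p-1$, in particular that integrating out one coordinate over $[0,1]$ preserves the density bounds) is exactly the verification the paper leaves implicit.
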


We deduce the consistency of our variable importance algorithm from the last two propositions.
\begin{restatable}{theorem}{VIconsistency}\label{thm: VIconsistency}
Assume that the forest construction satisfies the properties~\textbf{(F1)}-\textbf{(F5)}. Additionally, assume  that $k$ meets Assumption~\ref{kernelass1}, and that \ref{dataass1}-\ref{dataass3} hold. Then, we have consistency of $\smash{\VI_n^{(j)}}$ in \eqref{Injdef}, that is
\begin{equation*}
\VI_n^{(j)} \stackrel{p}{\longrightarrow} \VI^{(j)}.
\end{equation*}
\end{restatable}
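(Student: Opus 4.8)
The plan is to reduce the theorem to the two consistency results already proved. Write the numerator, the denominator, and the correction of \eqref{Injdef} as averages by setting
\begin{align*}
A_n &= \frac{1}{n}\sum_{i=1}^n \norm{\mu_n(\Xbf'_i) - \mu_n(\Xbf_i'^{(-j)})}_\H^2, \\
B_n &= \frac{1}{n}\sum_{i=1}^n \norm{\mu_n(\Xbf'_i) - \overline{\mu_n}}_\H^2,
\end{align*}
and letting $A_n^{(0)}$ be the analogue of $A_n$ in which $\mu_n(\Xbf_i'^{(-j)})$ is replaced by a second DRF fit $\mu_n^{(0)}$ using all inputs, so that $\VI_n^{(0)} = A_n^{(0)}/B_n$ and $\VI_n^{(j)} = (A_n - A_n^{(0)})/B_n$. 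Setting $a = \E[\MMD^2(\PYgX,\PYgXmj)] = \E[\norm{\mu(\Xbf)-\mu(\Xbf^{(-j)})}_\H^2]$ and $b = \E[\MMD^2(\PY,\PYgX)] = \VH[\mu(\Xbf)]$, I would establish the three limits $A_n \stackrel{p}{\longrightarrow} a$, $A_n^{(0)} \stackrel{p}{\longrightarrow} 0$, and $B_n \stackrel{p}{\longrightarrow} b$, and then conclude by the continuous mapping theorem—valid since $b = \VH[\mu(\Xbf)] > 0$ whenever $\Ybf$ and $\Xbf$ are not independent, the only regime in which the ratio is defined—together with Proposition~\ref{prop:VI_MMD}, which gives $a/b = \VI^{(j)}$.

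The core step is the numerator. Introduce the oracle average $\tilde A_n = \frac1n\sum_i \norm{\mu(\Xbf'_i) - \mu(\Xbf_i'^{(-j)})}_\H^2$; since $k$ is bounded the summands are i.i.d. and integrable, so the law of large numbers yields $\tilde A_n \stackrel{p}{\longrightarrow} a$, the identity $a = \E[\MMD^2(\PYgX,\PYgXmj)]$ following from $\mu(\Xbf)=\Phi(\PYgX)$ and $\mu(\Xbf^{(-j)})=\Phi(\PYgXmj)$. To control $A_n - \tilde A_n$ I would use the factorisation $|s^2 - t^2| = |s-t|\,(s+t)$, noting that all the norms involved are bounded by a kernel-dependent constant and that, by the triangle inequality in $\H$, the factor $|s-t|$ is at most $\norm{\mu_n(\Xbf'_i)-\mu(\Xbf'_i)}_\H + \norm{\mu_n(\Xbf_i'^{(-j)})-\mu(\Xbf_i'^{(-j)})}_\H$. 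Taking expectations then gives
\begin{align*}
\E\,|A_n - \tilde A_n| &\le C\,\E\norm{\mu_n(\Xbf') - \mu(\Xbf')}_\H \\
&\quad + C\,\E\norm{\mu_n(\Xbf'^{(-j)}) - \mu(\Xbf'^{(-j)})}_\H .
\end{align*}
Because $\Xbf'$ has the same law as $\Xbf$ and is independent of the training sample, these two expectations equal the quantities shown to vanish in Propositions~\ref{thm: meanconsistency} and~\ref{thm: proj_meanconsistency}; hence $A_n - \tilde A_n \to 0$ in $L^1$ and $A_n \stackrel{p}{\longrightarrow} a$.

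The denominator is treated in exactly the same way: its oracle version $\frac1n\sum_i\norm{\mu(\Xbf'_i)-\bar\mu}_\H^2$ converges to $\VH[\mu(\Xbf)] = b$ by the law of large numbers, and the same $|s^2-t^2|$ bound—after also controlling $\norm{\overline{\mu_n}-\bar\mu}_\H$ by an average of first-power errors—shows that $B_n$ shares this limit. For the correction, both fits entering $A_n^{(0)}$ use all inputs and so are consistent for the same $\mu$ by Proposition~\ref{thm: meanconsistency}; bounding $\norm{\mu_n(\Xbf'_i)-\mu_n^{(0)}(\Xbf'_i)}_\H$ by $\norm{\mu_n(\Xbf'_i)-\mu(\Xbf'_i)}_\H + \norm{\mu(\Xbf'_i)-\mu_n^{(0)}(\Xbf'_i)}_\H$, squaring, and using boundedness, gives $\E\,A_n^{(0)} \to 0$, so $A_n^{(0)} \stackrel{p}{\longrightarrow} 0$. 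Combining the three limits yields $\VI_n^{(j)} = (A_n - A_n^{(0)})/B_n \stackrel{p}{\longrightarrow} a/b = \VI^{(j)}$.

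I expect the principal obstacle to be bridging the gap between the first-power ($L^1$) consistency delivered by Propositions~\ref{thm: meanconsistency}–\ref{thm: proj_meanconsistency} and the squared-norm functionals that actually appear in $\VI_n^{(j)}$; the factorisation $|s^2-t^2|=|s-t|(s+t)$ together with uniform boundedness of the kernel embeddings is precisely what reduces the squared differences to controllable first-power errors. A secondary point needing care is the transfer of consistency from the training inputs $\Xbf$ to the independent evaluation points $\Xbf'_i$, which is legitimate because the latter form an independent copy with the same distribution.
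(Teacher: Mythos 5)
Your proof is correct and follows essentially the same route as the paper's: both reduce the squared-norm averages to their oracle counterparts using uniform boundedness of the kernel embeddings together with the $L^1$ consistency from Propositions~\ref{thm: meanconsistency} and~\ref{thm: proj_meanconsistency} (your factorisation $|s^2-t^2|=|s-t|(s+t)$ is algebraically the same step as the paper's inner-product expansion plus Cauchy--Schwarz), then invoke the law of large numbers for the oracle averages and conclude via the continuous mapping theorem with $\VH[\mu(\Xbf)]>0$. If anything, you are slightly more complete than the paper, whose proof works with the ratio alone and never addresses the $\VI_n^{(0)}$ correction term, whereas you prove explicitly that $A_n^{(0)} \stackrel{p}{\longrightarrow} 0$.
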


\paragraph{Projected DRF.}
The computation of our importance measure for all input variables involves a DRF retrain for each variable, i.e. $p+1$ training. While this approach is efficient for moderate dimensions, it is not tractable in high-dimensional settings. Indeed, \citet{DRF-paper} state that the computational complexity of a single DRF fit is $\O(B \times N \times p \times n \log n)$, where $B$ is the number of random features to approximate the MMD statistics for the splitting criterion. Therefore, our importance algorithm has a quadratic complexity with respect to the input dimension $p$. In fact, this limitation also occurs for Breiman's forest in the case of regression problems. A solution was proposed by \citet{Sobol_MDA} with the Projected Random Forest, whose complexity is independent of $p$, once the initial forest with all inputs is trained, and is therefore strongly efficient in high dimensional settings. 
The principle of projected forests is to eliminate a given input variable from the prediction mechanism of the forest by the projection of the tree partitions on the subspace generated by all other variables. In practice, the associated predictions can be easily computed using the original trees, and simply ignoring splits involving the discarded variable by sending data points on both sides of such splits. Following this procedure, both training points and the new query point fall in multiple terminal leaves of the original partition. Then, cells are intersected to get the training points falling in the exact same collections of leaves as the considered query point, and finally used to compute the projected tree prediction. The projected forest was later extended to Shapley effects in \citet{SHAFF}, and to local importance measures by \citet{amoukou2022consistent}.

As DRF is essentially a Random Forest with the dependent variable taking values in the RKHS $\H$, the same approach can be adapted as well, to obtain a Projected Distributional Random Forest. Applying the arguments of \citet{Sobol_MDA}, the projection approach reduces the computational complexity from the $\O(B\times N \times p^2 \times n \log(n)^2)$ of a DRF fit for each variable to $\O(B \times N \times n \log(n)^3)$. Here, we show that this approach still leads to a consistent estimator of $\VI^{(j)}$. In the sequel, we denote by $\smash{\hmunprojX}$ the projected DRF estimate, indicating that the projection was done after fitting DRF on the full data. This is in contrast to $\hmunbruteX$ which was already fitted with $X^{(j)}$ removed. 

\begin{restatable}{proposition}{mujconsistency}\label{thm: meanconsistencyprojected}
Assume that the forest construction satisfies the properties~\textbf{(F1)}-\textbf{(F5)}. Additionally, assume  that $k$ meets Assumption~\ref{kernelass1}, and that \ref{dataass1}-\ref{dataass3} hold. Then, we have consistency of $\smash{\hmunprojx}$ in probability,
\begin{align*}
  \norm{\hmunprojx- \mu(\xbf^{(-j)})}_\H = \O_{p}\left(n^{-\gamma} \right),
\end{align*}
for any $\gamma \leq \frac{1}{2} \min\left( 1- \beta, \frac{\pi \log(1-\alpha)}{p \log(\alpha)} \cdot \beta \right)$, where $\alpha$ and $\beta$ are chosen in \forestass{4} and \forestass{5} respectively. Moreover,
\begin{equation*}
\E[\norm{\hmunprojX- \mu(\Xbf^{(-j)})}_\H] \longrightarrow 0.
\end{equation*}
\end{restatable}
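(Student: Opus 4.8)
The plan is to follow the decomposition strategy of the Sobol-MDA analysis of \citet{Sobol_MDA}, transported to the RKHS-valued setting, establishing the pointwise rate first and then deducing the integrated statement by a boundedness argument. Writing the projected DRF estimate as an adaptive neighbour average $\hmunprojx = \sum_{i=1}^n \tilde w_i(\xbf)\,k(\Yibf,\cdot)$, where the $\tilde w_i(\xbf)$ are the self-normalised projected weights obtained by sending points to both children at every split on $X^{(j)}$, I would first exploit the crucial structural feature of the projection: since $j$-splits are ignored, membership of $\Xibf$ in the projected leaf containing $\xbf$ depends on $\Xibf^{(-j)}$ only. Combined with honesty (among the forest assumptions \forestass{1}--\forestass{5}), this implies that, conditionally on the $\sigma$-field $\mathcal G$ generated by the structure subsample and by $\{\Xibf^{(-j)}\}_i$, the weights $\tilde w_i(\xbf)$ are deterministic while the pairs $(X_i^{(j)},\Yibf)$ remain independent across $i$ with $(X_i^{(j)},\Yibf)\sim \P_{(X^{(j)},\Ybf)\mid \Xbf^{(-j)} \myeq \Xibf^{(-j)}}$. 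In particular $\E[k(\Yibf,\cdot)\mid\mathcal G] = \mu(\Xibf^{(-j)})$, \emph{even though $X^{(j)}$ was used to grow the trees}. This is the observation that makes the projected forest target the marginalised embedding $\mu(\xbf^{(-j)})$ rather than $\mu(\xbf)$.

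With this in hand I would split
\[
\hmunprojx - \mu(\xbf^{(-j)}) = \sum_{i=1}^n \tilde w_i(\xbf)\big[k(\Yibf,\cdot) - \mu(\Xibf^{(-j)})\big] + \Big(\sum_{i=1}^n \tilde w_i(\xbf)\,\mu(\Xibf^{(-j)}) - \mu(\xbf^{(-j)})\Big).
\]
Write $S_n(\xbf)$ for the first (stochastic) term. Conditionally on $\mathcal G$ it is a weighted sum of independent, mean-zero elements of $\H$ bounded by $2\sup_\ybf\sqrt{k(\ybf,\ybf)}<\infty$ under \kernelass{1}, so
\[
\E\big[\norm{S_n(\xbf)}_\H^2 \mid \mathcal G\big] = \sum_{i=1}^n \tilde w_i(\xbf)^2\,\E\big[\norm{k(\Yibf,\cdot)-\mu(\Xibf^{(-j)})}_\H^2 \mid \mathcal G\big] \le 4\big(\textstyle\sup_\ybf k(\ybf,\ybf)\big)\sum_{i=1}^n \tilde w_i(\xbf)^2.
\]
The honest, $\alpha$-regular construction with subsample size $s_n=n^\beta$ (\forestass{4}--\forestass{5}), together with the density bounds of \dataass{1}, guarantees an effective sample size $1/\sum_i\tilde w_i(\xbf)^2$ of order $n^{1-\beta}$, whence $\norm{S_n(\xbf)}_\H = \O_p(n^{-(1-\beta)/2})$. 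The second (bias) term is controlled by \dataass{3}: since $\tilde w_i(\xbf)$ is supported on points with $\Xibf^{(-j)}$ inside the projected cell, it is bounded by $L\,\sup_{i:\tilde w_i(\xbf)>0}\norm{\Xibf^{(-j)}-\xbf^{(-j)}}_{\R^{p}}$, so it remains to bound the diameter of the projected cell in the retained coordinates.

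The main obstacle is precisely this diameter bound. Here I would adapt the geometric argument of \citet{Sobol_MDA}: under \forestass{4} each cut along a coordinate $\ell\neq j$ shrinks the corresponding side of the cell by a factor at most $1-\alpha$, and the requirement that every coordinate is eligible to be split with positive probability forces each of the $p-1$ retained coordinates to receive a growing number of the $\sim \log s_n$ cuts on the path from the root to the projected leaf. Quantifying how these cuts distribute across coordinates yields a projected-cell diameter of order $\O_p(n^{-\gamma_2})$ with $\gamma_2=\tfrac12\,\tfrac{\pi\log(1-\alpha)}{p\log\alpha}\,\beta$. Combining the two contributions gives $\norm{\hmunprojx-\mu(\xbf^{(-j)})}_\H = \O_p(n^{-\gamma})$ with $\gamma=\min\{(1-\beta)/2,\gamma_2\}$, matching the stated bound. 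The transfer from the scalar Sobol-MDA setting to the RKHS is then routine, as the Euclidean magnitude of the response is everywhere replaced by $\norm{k(\Yibf,\cdot)-\mu(\Xibf^{(-j)})}_\H$, whose boundedness is supplied by \kernelass{1}.

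Finally, for the integrated statement I would argue by dominated convergence. By \kernelass{1} the integrand is uniformly bounded, $\norm{\hmunprojX - \mu(\Xbf^{(-j)})}_\H \le 2\sup_\ybf\sqrt{k(\ybf,\ybf)}$, and the pointwise rate just established gives $\norm{\hmunprojx - \mu(\xbf^{(-j)})}_\H \to 0$ in probability for every $\xbf$ in the interior of $[0,1]^p$, i.e. for $\Xbf$-almost every query point under \dataass{1}. Bounded convergence in the training randomness, followed by dominated convergence over the independent query point $\Xbf$, then yields $\E[\norm{\hmunprojX - \mu(\Xbf^{(-j)})}_\H]\to 0$.
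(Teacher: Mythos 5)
Your proposal has the same skeleton as the paper's proof — a variance term at rate $n^{-(1-\beta)/2}$, a bias term at rate $n^{-\frac{1}{2}\beta C_\alpha \pi/p}$ with $C_\alpha=\log((1-\alpha)^{-1})/\log(\alpha^{-1})$, and boundedness under \kernelass{1} to upgrade pointwise convergence to convergence of $\E[\norm{\hmunprojX-\mu(\Xbf^{(-j)})}_\H]$ — but both middle steps are executed by different means, and one of them has a genuine caveat. For the variance, the paper never bounds $\sum_i \tilde{w}_i(\xbf)^2$: it transfers the ANOVA/H\'ajek decomposition of \citet{DRF-paper} to the projected tree $T^{(-j)}$ (Lemma~\ref{variancebound}), obtaining $\VH(\hmunprojx)\le (s_n/n+s_n^2/n^2)\VH(T^{(-j)})$, and concludes by Markov's inequality. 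Your effective-sample-size bound is in fact true for the infinite-forest estimator \eqref{finalestimator2_projected}, but for a different reason than the one you give: it follows from subsampling dilution, $\max_i \tilde{w}_i(\xbf)\le s_n/n$ (each point appears in a fraction $s_n/n$ of subsamples, and per-tree weights are at most one), hence $\sum_i \tilde{w}_i(\xbf)^2\le \max_i \tilde{w}_i(\xbf)\le n^{\beta-1}$; it is not a consequence of $\alpha$-regularity or the density bounds of \dataass{1}. Indeed, a projected cell is an intersection of projected leaves and may contain very few estimation points, so no per-tree leaf-size argument is available — this is worth stating correctly, since otherwise a reader will look for a leaf-population argument that does not exist.

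The caveat concerns your conditioning step. For the weights to be $\mathcal{G}$-measurable and for $\E[k(\Yibf,\cdot)\mid \mathcal{G}]=\mu(\Xibf^{(-j)})$ to hold with $\mathcal{G}$ generated by the split structure and $\{\Xibf^{(-j)}\}_i$ only, you need the splits to be independent of $(X_i^{(j)},\Yibf)$ conditionally on $\Xibf^{(-j)}$. But \forestass{1} as stated explicitly allows the splits to depend on the covariates — including $X_i^{(j)}$ — of the estimation half, so this conditional independence does not follow from the stated assumptions; conditioning on a structure that has seen $X_i^{(j)}$ tilts the conditional law of $(X_i^{(j)},\Yibf)$, and the stochastic term is then no longer centered at $\sum_i \tilde{w}_i(\xbf)\mu(\Xibf^{(-j)})$. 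The paper sidesteps exactly this issue by working at the level of expectations: Lemma~\ref{helperlemma} (the analogue of Lemma~12 of \citet{DRF-paper}) gives $\E[T^{(-j)}]=\E[\E[k(\Ybf,\cdot)\mid \Xbf^{(-j)}\in\Lcal^{(-j)}(\xbf^{(-j)})]]$, which is then combined with \dataass{3} and a cell-diameter bound (Corollary~\ref{bias}) to control $\norm{\E[\hmunprojx]-\mu(\xbf^{(-j)})}_\H$. Relatedly, for the diameter you propose to re-derive the cut-counting geometry of \citet{Sobol_MDA} from scratch, whereas the paper's route is a one-line reduction, $\mathrm{diam}(\Lcal^{(-j)}(\xbf^{(-j)}))\le \mathrm{diam}(\Lcal(\xbf))$, followed by the existing Lemma~2 of \citet{wager2017estimation}; the rates agree, but the reduction is shorter and less error-prone. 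In summary: right architecture and right rates, but to make your version rigorous you must either strengthen honesty (splits independent of the estimation sample given its non-$j$ covariates) or retreat to the paper's expectation-level bias argument.
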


We then redefine the variable importance by simply exchanging $\hmunbruteXl$ by $\hmunprojXl$, and obtain
\begin{align}\label{Injdefprojected}
    \VI_{n, \textrm{proj}}^{(j)} = \frac{ \sum_{i=1}^n \norm{\mu_n(\Xbf'_i) - \hmunprojXl }_{\H}^2 }{\sum_{l=1}^n \norm{ \mu_n(\Xbf'_i) - \frac{1}{n} \sum_{i=1}^n \mu_n(\Xbf'_i) }_{\H}^2},
\end{align}
which is also consistent, as stated in this last result.
\begin{restatable}{theorem}{VIprojconsistency}\label{thm: VIprojconsistency}
Assume that the forest construction satisfies the properties~\textbf{(F1)}-\textbf{(F5)}. Additionally, assume  that $k$ meets Assumption~\ref{kernelass1}, and that~\ref{dataass1}-\ref{dataass3} hold. Then, $\smash{\VI_{n, \textrm{proj}}^{(j)}}$ in \eqref{Injdefprojected} is consistent, that is
\begin{equation*} 
 \VI_{n, \textrm{proj}}^{(j)} \stackrel{p}{\to} \VI^{(j)}.
\end{equation*}
\end{restatable}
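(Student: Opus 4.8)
The plan is to mirror the proof of Theorem~\ref{thm: VIconsistency}, replacing the consistency of the brute-force retrained estimator $\hmunbruteX$ (Proposition~\ref{thm: proj_meanconsistency}) by the consistency of the projected estimator given in Proposition~\ref{thm: meanconsistencyprojected}. Since $\VI_{n,\textrm{proj}}^{(j)}$ in \eqref{Injdefprojected} reuses the single forest $\mu_n$ together with its projection $\hmunprojXl$, no separate retraining with all variables enters the numerator, and consequently no analogue of the bias-correction term $\VI_n^{(0)}$ is needed. It therefore suffices to show that the (rescaled) numerator converges in probability to $\E[\MMD^2(\PYgX, \PYgXmj)]$ and the denominator to $\E[\MMD^2(\PY, \PYgX)] = \VH[\mu(\Xbf)]$; then, in the non-degenerate case $\VH[\mu(\Xbf)] > 0$, the continuous mapping theorem applied to $(x,y)\mapsto x/y$ together with Proposition~\ref{prop:VI_MMD} yields $\VI_{n,\textrm{proj}}^{(j)} \stackrel{p}{\to} \VI^{(j)}$.

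For the numerator I would set, for each query point $\Xbf'_i$ of the independent sample, $a_i = \mu_n(\Xbf'_i) - \mu(\Xbf'_i)$, $b_i = \hmunprojXl - \mu(\Xbf_i'^{(-j)})$, and $c_i = \mu(\Xbf'_i) - \mu(\Xbf_i'^{(-j)})$, so that $\mu_n(\Xbf'_i) - \hmunprojXl = c_i + (a_i - b_i)$. Expanding the squared RKHS norm produces $\frac1n\sum_i \norm{c_i}_\H^2 + \frac1n\sum_i \norm{a_i - b_i}_\H^2 + \frac2n\sum_i \langle c_i, a_i - b_i\rangle_\H$. The first average is an empirical mean of i.i.d.\ bounded terms (boundedness of $k$ in~\ref{kernelass1} makes every embedding lie in a fixed ball of $\H$), hence converges by the weak law of large numbers to $\E[\norm{\mu(\Xbf) - \mu(\Xbf^{(-j)})}_\H^2] = \E[\MMD^2(\PYgX,\PYgXmj)]$, the numerator of $\VI^{(j)}$ by Proposition~\ref{prop:VI_MMD}. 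For the second average, boundedness of $k$ upgrades the first-moment consistencies of Propositions~\ref{thm: meanconsistency} and~\ref{thm: meanconsistencyprojected} to second moments, since for a nonnegative bounded variable $V \le C$ one has $\E[V^2] \le C\,\E[V]$; thus $\E[\norm{a_1}_\H^2]\to 0$ and $\E[\norm{b_1}_\H^2]\to 0$, and because the summands are identically distributed, $\E[\frac1n\sum_i\norm{a_i-b_i}_\H^2] = \E[\norm{a_1-b_1}_\H^2] \le 2\E[\norm{a_1}_\H^2] + 2\E[\norm{b_1}_\H^2] \to 0$, so Markov's inequality gives convergence to $0$ in probability. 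The cross term is controlled by Cauchy--Schwarz, $|\frac1n\sum_i\langle c_i,a_i-b_i\rangle_\H| \le (\frac1n\sum_i\norm{c_i}_\H^2)^{1/2}(\frac1n\sum_i\norm{a_i-b_i}_\H^2)^{1/2}$, where the first factor is bounded in probability and the second vanishes. The denominator is treated identically: it is the empirical variance of $\mu_n(\Xbf'_i) = \mu(\Xbf'_i) + a_i$, and substituting this decomposition the $a_i$ contributions vanish by the same $L^2$ and Cauchy--Schwarz arguments, leaving the empirical variance of $\mu(\Xbf'_i)$, which converges to $\VH[\mu(\Xbf)] = \E[\MMD^2(\PY,\PYgX)]$ (using $\E[\mu(\Xbf)] = \Phi(\PY)$).

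The main obstacle is bridging the gap between the integrated, first-moment consistency supplied by Proposition~\ref{thm: meanconsistencyprojected} and the second-moment, sample-averaged quantities that arise once the RKHS norm is squared inside \eqref{Injdefprojected}. Its resolution rests on two points I would spell out carefully: first, the boundedness of the kernel in~\ref{kernelass1} confines every embedding $\mu_n(\xbf)$, $\hmunprojx$, and $\mu(\xbf)$ to a ball of fixed radius in $\H$, which both converts $L^1$ into $L^2$ convergence and guarantees the finite second moments required for the weak law of large numbers on the population terms; second, although the errors $a_i$ and $b_i$ are dependent across $i$ through the shared training sample, they remain identically distributed, so taking expectations term by term and invoking Markov's inequality is enough and no independence across $i$ is required. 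Finally, I would observe that the positivity of the limiting denominator $\VH[\mu(\Xbf)]$ is precisely the non-degeneracy condition under which $\VI^{(j)}$ is well defined, so the concluding continuous-mapping step is legitimate.
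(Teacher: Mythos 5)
Your proposal is correct and follows essentially the same route as the paper: the paper's own proof of this theorem is a one-line remark that, given Proposition~\ref{thm: meanconsistencyprojected}, the arguments of Theorem~\ref{thm: VIconsistency} apply verbatim, and your write-up is precisely that argument with the projected estimator $\hmunprojXl$ substituted for the retrained one (including the correct observation that \eqref{Injdefprojected} carries no $\VI_n^{(0)}$ correction). The only cosmetic difference is that you upgrade the $L^1$ consistencies to $L^2$ via boundedness before applying Cauchy--Schwarz, whereas the paper bounds the cross terms directly by the uniform RKHS bound $K$ on the second factor; both rest on the same ingredients, namely boundedness from \ref{kernelass1}, Propositions~\ref{thm: meanconsistency} and~\ref{thm: meanconsistencyprojected}, the law of large numbers in $\H$, and a Slutsky-type argument for the ratio.
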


In addition, we note that the last step of our algorithm is to compute $\smash{\VI_n^{(j)}}$ through Equation (\ref{eq:VIn}), which involves a matrix multiplication of complexity $\smash{\O(n^2)}$. Following \citet{DRF-paper}, we can use the Random Fourier approximation to compute the MMD of Equation (\ref{eq:VIn}), with a linear complexity with respect to the sample size $n$. Alternatively, once all predictions are computed, it is also possible to subsample the data to compute Equation (\ref{eq:VIn}), with a size of typically $1000$ points for large samples. This leads to a $O(1)$ complexity, and a minor impact on the accuracy of $\smash{\VI_n^{(j)}}$, since Equation (\ref{eq:VIn}) is simply an average over a large number of points.

\section{EXPERIMENTS} \label{sec:xp}

We run several batches of experiments to show the high performance of our importance measure on both simulated and real data, especially with respect to the main competitors. In particular, we run comparisons with the native DRF importance \citep{drf-package}, based on split frequencies and denoted by vimp-drf.
For univariate output cases, we also add the MDA \citep{breiman2001random} and Sobol-MDA \citep{Sobol_MDA} algorithms.
While Sobol-MDA also focuses on objective (1), MDA and vimp-drf are not specifically tailored for objective (1) or (2). Nonetheless, vimp-drf appears to be the only existing competing variable importance measure for DRF, to our best knowledge. Therefore, we only show that our method performs better than vimp-drf for our objective of interest. Additionally, the MDA is the most widely used importance measure for Breiman's forest, and is thus a useful baseline.
Finally, we use the Gaussian kernel with the median heuristic, and a number of random features $B = 10$, which is standard in the DRF implementation \citep{drf-package, DRF-paper}.
Code to reproduce the experiments is available in the Supplementary Material.

\subsection{Simulated data}

\paragraph{Univariate output.}
We run a first experiment with  a univariate output to compare our proposed variable importance algorithm to the existing vimp-drf based on split frequencies, and standard methods for regression forests. 
Hence, we consider a Gaussian input vector of dimension $p = 10$, where all variables have unit variance, and each pair of distinct variables have a correlation of $0.5$, except that $\mathrm{Cov}(X^{(1)}, X^{(10)}) = 0.9$. Then, the output is defined as 
\begin{align*}
    Y \sim \mathcal{N}(2X^{(1)} + X^{(2)}, (2|X^{(3)}| + 2|X^{(4)}| + 2|X^{(5)}|)^2).
\end{align*}
Based on this data distribution, we run the following experiment: a data sample of size $n = 3000$ is drawn, a DRF is fit with $N=500$ trees, and both $\smash{\VI_n^{(j)}}$ and vimp-drf are computed. We also fit a regression forest and compute the MDA \citep{breiman2001random} and Sobol-MDA \citep{Sobol_MDA}. This procedure is repeated $10$ times for uncertainties, and the average importance values are reported in Table \ref{table:xp_univariate}. Most standard deviations are small, and displayed in Table $1$ in Appendix A. Clearly, $\smash{\VI_n^{(j)}}$ is the only algorithm to identify the five relevant variables as the most important ones. On the other hand, both vimp-drf and MDA rank $X^{(10)}$ in second position, because of its strong correlation with $X^{(1)}$, although $X^{(10)}$ is not involved in the distribution of $Y$. Since variables $X^{(3)}$, $X^{(4)}$, and $X^{(5)}$ are not involved in the mean of $Y$ but only in its variance, they are only identified as important by $\smash{\VI_n^{(j)}}$ and vimp-drf, but not by the MDA and Sobol-MDA, as expected. While the Sobol-MDA gives a negligible importance to all variables not involved in the mean of $Y$, the MDA gives high negative values to the relevant variables $X^{(3)}$, $X^{(4)}$, and $X^{(5)}$. This phenomenon is not really surprising given the MDA's flaws, mentioned in the introduction, and extensively discussed in the literature. Finally, the regression forest has an explained variance of $13\%$, because of the strong noise involved in the definition of $Y$. This explains the quite small values of $X^{(1)}$, $X^{(2)}$ given by the Sobol-MDA, which estimates the proportion of output variance lost when a given input variable is removed.
\begin{table}
\setlength{\tabcolsep}{1pt}
\centering
\begin{tabular}{c c}
  \hline
   $X^{(j)}$ & $\VI_n^{(j)}$ \\ 
  \hline
  \blueft{$X^{(1)}$} & 0.181 \\ 
  \blueft{$X^{(4)}$} & 0.073 \\ 
  \blueft{$X^{(5)}$} & 0.073 \\ 
  \blueft{$X^{(2)}$} & 0.072 \\ 
  \blueft{$X^{(3)}$} & 0.065 \\ 
  $X^{(10)}$ & 0.010 \\ 
  $X^{(7)}$ & 0.005 \\ 
  $X^{(6)}$ & 0.005 \\ 
  $X^{(9)}$ & 0.005 \\ 
  $X^{(8)}$ & 0.005 \\ 
   \hline
\end{tabular}
\begin{tabular}{c c}
  \hline
   $X^{(j)}$ & v-drf \\ 
  \hline
  \blueft{$X^{(1)}$} & 0.649 \\ 
  $X^{(10)}$ & 0.096 \\ 
  \blueft{$X^{(2)}$} & 0.062 \\ 
  \blueft{$X^{(5)}$} & 0.059 \\ 
  \blueft{$X^{(4)}$} & 0.056 \\ 
  \blueft{$X^{(3)}$} & 0.050 \\ 
  $X^{(6)}$ & 0.007 \\ 
  $X^{(9)}$ & 0.007 \\ 
  $X^{(7)}$ & 0.007 \\ 
  $X^{(8)}$ & 0.007 \\ 
   \hline
\end{tabular}
\begin{tabular}{c c}
  \hline
   $X^{(j)}$ & MDA \\ 
  \hline
  \blueft{$X^{(1)}$} & 6.47 \\ 
  $X^{(10)}$ & 2.56 \\ 
  \blueft{$X^{(2)}$} & 1.33 \\ 
  $X^{(6)}$ & 0.25 \\ 
  $X^{(8)}$ & 0.24 \\ 
  $X^{(7)}$ & 0.18 \\ 
  $X^{(9)}$ & 0.18 \\ 
  \blueft{$X^{(3)}$} & -0.52 \\ 
  \blueft{$X^{(5)}$} & -0.62 \\ 
  \blueft{$X^{(4)}$} & -0.90 \\ 
   \hline
\end{tabular}
\begin{tabular}{c c}
  \hline
   $X^{(j)}$ & S-MDA \\ 
  \hline
  \blueft{$X^{(1)}$} & 0.014 \\ 
  \blueft{$X^{(2)}$} & 0.012 \\ 
  $X^{(8)}$ & -0.001 \\ 
  $X^{(7)}$ & -0.003 \\ 
  $X^{(9)}$ & -0.003 \\ 
  $X^{(6)}$ & -0.003 \\ 
  \blueft{$X^{(5)}$} & -0.003 \\ 
  \blueft{$X^{(3)}$} & -0.004 \\ 
  \blueft{$X^{(4)}$} & -0.004 \\ 
  $X^{(10)}$ & -0.006 \\ 
   \hline
\end{tabular}
\caption{Variable importance for the univariate output experiment for $\smash{\VI_n^{(j)}}$, vimp-drf (v-drf), MDA, and Sobol-MDA (S-MDA).}
\label{table:xp_univariate}
\end{table}

\paragraph{Bivariate output.}
A main feature of DRF is to handle multivariate outputs, and we therefore focus our second simulated experiment on such a case. We consider $p = 10$ input variables, following a uniform distribution on the unit cube, and two uniform outputs defined by $Y^{(1)} \sim \mathcal{U}(X^{(1)}, 1 + X^{(1)})$ and $Y^{(2)} \sim \mathcal{U}(0, X^{(2)})$.
Next, we draw a sample of size $n = 500$, fit a DRF of $N = 500$ trees, and finally compute our importance measure, as well as vimp-drf.
Table \ref{table:xp_bivariate} displays the mean importance over $10$ repetitions for each variable, where standard deviations are small, and thus omitted. Clearly, both methods identify the relevant variables $\smash{X^{(1)}}$ and $\smash{X^{(2)}}$ as the most relevant ones. However, $\smash{\VI_n^{(j)}}$ identifies $X^{(1)}$ as more important than $X^{(2)}$, as opposed to vimp-drf. By definition of $\smash{\VI_n^{(j)}}$ and the MMD distance, if $\smash{X^{(1)}}$ is removed from the training data, the conditional distribution estimated by DRF is closer to the true target than when $\smash{X^{(2)}}$ is removed. This shows that vimp-drf based on split frequencies can be misleading. Also notice that the importance given by $\smash{\VI_n^{(j)}}$ is negligible for all irrelevant variables, while vimp-drf gives higher values.
\begin{table}
\centering
\begin{tabular}{c c c c c c c}
  \hline
 & \blueft{$X^{(1)}$} & \blueft{$X^{(2)}$} & $X^{(6)}$ & $X^{(8)}$ & $X^{(3)}$ & $X^{(10)}$ \\ 
  \hline
$\VI_n^{(j)}$ & 0.68 & 0.41 & $9.10^{-4}$ & $8.10^{-4}$ & $7.10^{-4}$ & $6.10^{-4}$ \\ 
vimp-drf & 0.19 & 0.70 & 0.01 & 0.02 & 0.01 & 0.02 \\ 
   \hline
\end{tabular}
\caption{Top six variables for the bivariate output experiment for $\VI_n^{(j)}$ and vimp-drf.}
\label{table:xp_bivariate}
\end{table}
Finally, we also take advantage of this second experiment to show the scalability of our algorithm with respect to the sample size $n$. We run the same experiment with increasing sample sizes, up to $n = 10 000$. For the last step of the procedure, where we aggregate predictions using Equation (\ref{eq:VIn}), we use a subsample of fixed size $1000$, to preserve a linear complexity---see the end of Section $3$. Figure \ref{fig_xp_2_nsamples} displays the results and shows that the importance values (again averaged over $10$ repetitions) are roughly constant as the sample size increases, with a slight increase for $X^{(1)}$ and $X^{(2)}$.
\begin{figure}
	\begin{center}
		\includegraphics[scale = 0.45]{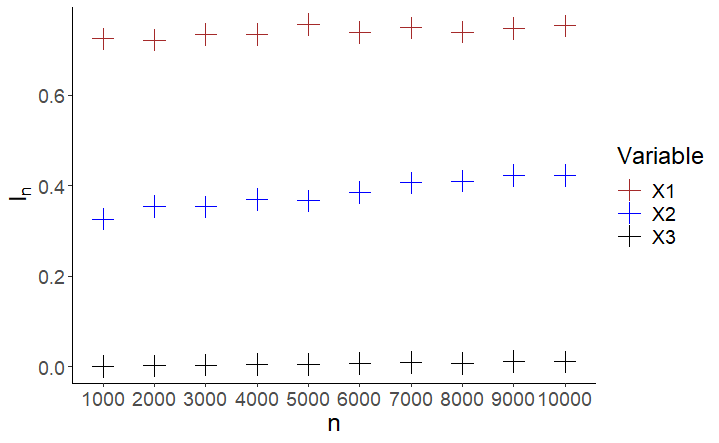}
		\caption{Values of $\VI_n^{(1)}$, $\VI_n^{(2)}$, and $\VI_n^{(3)}$, with an increasing sample size for the bivariate output experiment.}
		\label{fig_xp_2_nsamples}
	\end{center}
\end{figure}

\paragraph{High-dimensional case.}
Variable selection is frequently performed in high-dimensional settings. The goal of this third experiment is to show the good behavior of our algorithm in such cases. We consider again the experiment of the previous paragraph with a bivariate output. We simply set $p = 1000$ instead of $p = 10$, by adding uniform input variables, and keep all the other settings untouched. We obtain the results displayed in Table \ref{table:xp_bivariate_highdim}.
The good performance of our algorithm is preserved, and it still outperforms the existing competitor, which wrongly identifies variable $X^{(2)}$ as more important than $X^{(1)}$, and dilutes the importance of the two relevant variables compared to the original case with $p=10$---see Table \ref{table:xp_bivariate}.
Variables are ordered in decreasing order of $\smash{\mathrm{I}_n^{(j)}}$ in Table \ref{table:xp_bivariate_highdim}: although there are $1000$ variables involved, the highest value among the irrelevant variables is still small.
\begin{table}
\setlength{\tabcolsep}{3pt}
\centering
\begin{tabular}{c c c c c c c}
  \hline
    & \blueft{$X^{(1)}$} & \blueft{$X^{(2)}$} & $X^{(371)}$ & $X^{(698)}$ & $X^{(866)}$ & $X^{(705)}$ \\ 
  \hline
    $\mathrm{I}_n^{(j)}$ & 0.57 & 0.29 & 0.02 & 0.02 & 0.02 & 0.02 \\ 
    vimp-drf & 0.03 & 0.05 & 0.001 & 0.001 & 0.001 & 0.001 \\ 
   \hline
\end{tabular}
\caption{Top six variables for the bivariate output experiment with $p=1000$.}
\label{table:xp_bivariate_highdim}
\end{table}

\paragraph{Functional output.}
For this last simulated experiment, we address the more challenging case of a functional output.
That is, we assume that conditional on $\Xbf$, $Y$ is a Gaussian process \citep{rasmussen2006}: $Y(t) = X^{(1)} +  f(t)$, where for all $t \in \rset$, $f(t) \sim \mathcal{GP}\big(0, k_{\Xbf}(t, t)\big)$, with $k_{\Xbf}$ being a Gaussian kernel with bandwidth parameter $1/X^{(2)}$. Then, the vector $\Ybf$ given $\Xbf$ is obtained by sampling from this Gaussian process on a fixed regular grid of $[-5, 5]$ of size $d = 30$. As before, $\Xbf$ is uniformly distributed on the unit cube, with $p = 10$, and we also set $n = 2000$ and $N = 500$ trees.
Table \ref{table:xp_functional} shows that the DRF importance measures obtain the right ordering, and especially detect the dependence of $\Ybf$ on $X^{(2)}$, which is a notoriously difficult problem. 
\begin{table}
\centering
\begin{tabular}{c c c c c c c }
  \hline
 & \blueft{$X^{(1)}$} & \blueft{$X^{(2)}$} & $X^{(6)}$ & $X^{(3)}$ & $X^{(4)}$ & $X^{(9)}$ \\ 
  \hline
$\VI_n^{(j)}$ & 0.70 & 0.34 & 0.03 & 0.03 & 0.03 & 0.03 \\ 
vimp-drf & 0.61 & 0.26 & 0.02 & 0.02 & 0.02 & 0.02 \\ 
   \hline
\end{tabular}
\caption{Top six variables for the functional output experiment for $\VI_n^{(j)}$ and vimp-drf.}
\label{table:xp_functional}
\end{table}

\subsection{Recursive feature elimination for real data}

\begin{figure*}
    \begin{center}
        \includegraphics[width=0.45 \textwidth]{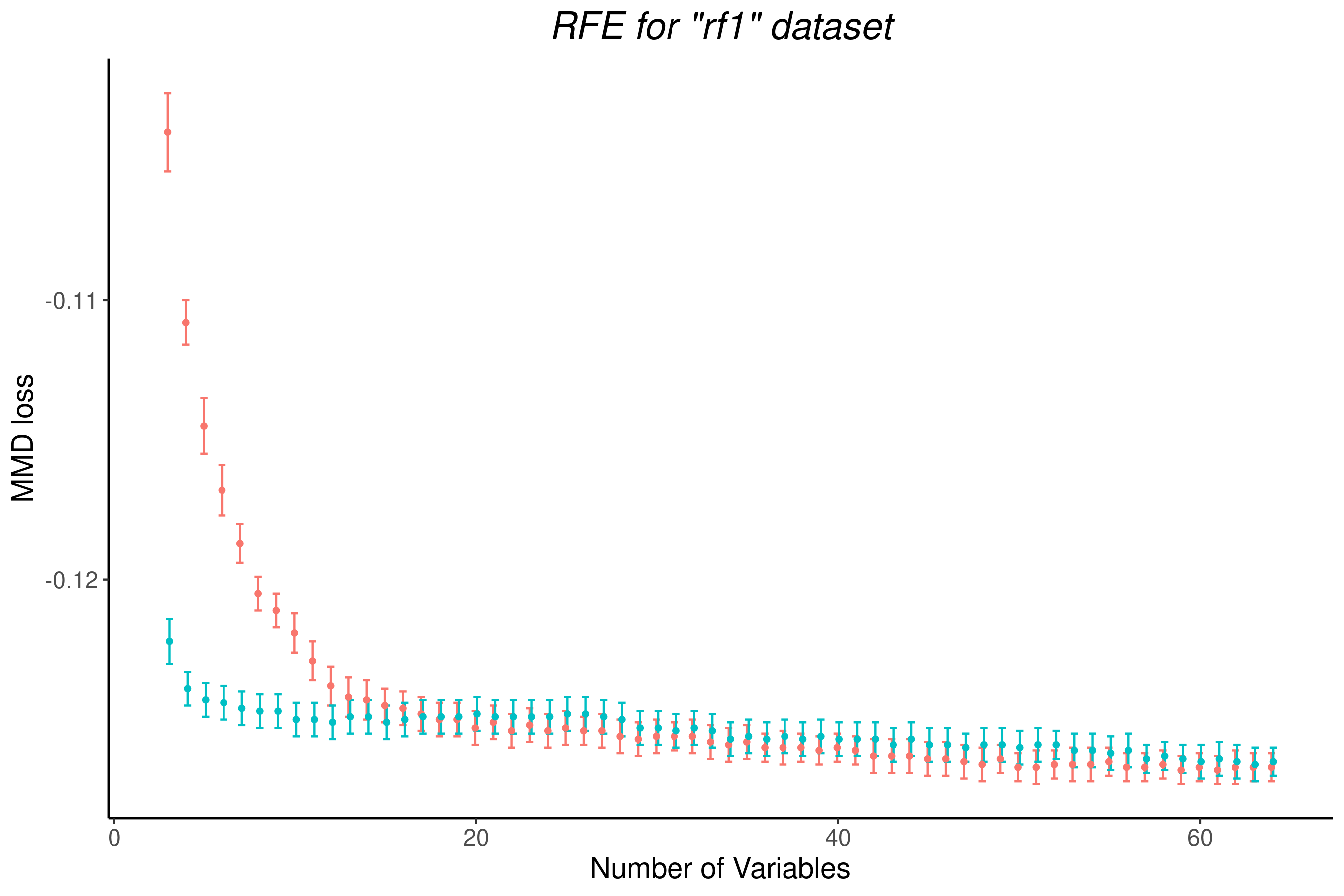} 
        \includegraphics[width=0.45 \textwidth]{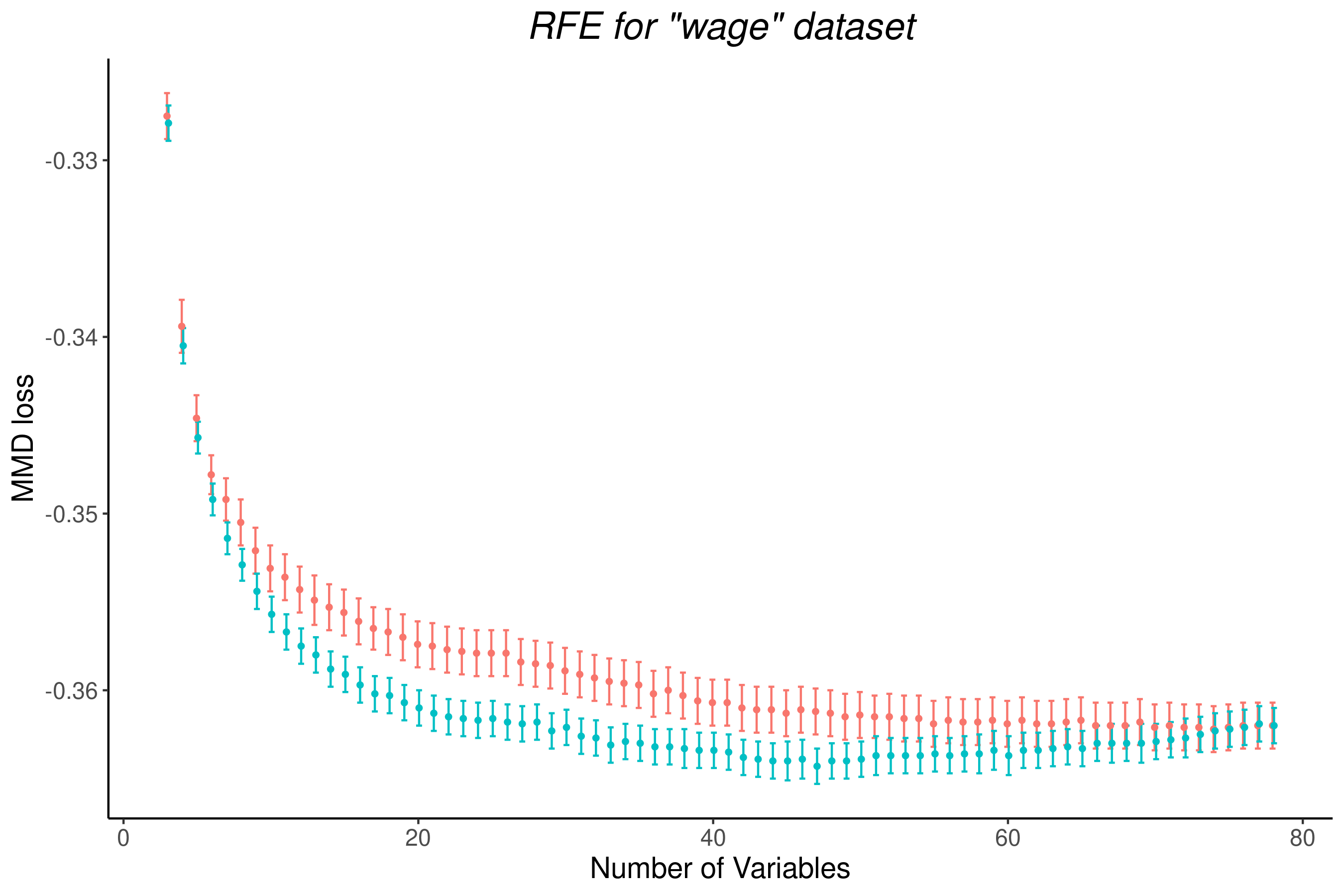}
    \end{center}
    \caption{RFE for `rf1' (left panel) and `wage' (right panel) datasets, using $\VI_n^{(j)}$ (blue) or vimp-drf (red).
    The RFE procedure is repeated $40$ times to compute the standard error of the loss at each step, displayed as error bars.}
    \label{fig:RFE_wage_rf1}
\end{figure*}

In this subsection, we show the efficiency of our importance measure to perform backward variable selection on real data with multivariate outputs. Hence, we use the recursive feature elimination algorithm (RFE), originally introduced by \citet{guyon2002gene} for variable selection with support vector machines, and first adapted to random forests for regression problems by \citet{gregorutti2017correlation}.
The main principle of the RFE is to iteratively remove the less important variable. At each step, the learning algorithm is rerun on the data with the reduced set of inputs, and the importance measure is computed for all variables involved. Then, the less important variable is removed from the data, and the algorithm moves on to the next iteration. In our case, a DRF is fit at each step of the RFE, and we compare our method introduced in Section \ref{sec:vimp}, and vimp-drf. As explained in the introduction, our algorithm focuses on objective (1) of finding a small number of variables with a maximized accuracy. The recursive feature elimination procedure built from $\smash{\mathrm{I}_n^{(j)}}$, removes highly correlated redundant features. Indeed, all variables of a highly correlated and important group have low importance, until only one variable of this group remains in the data over the backward selection, by definition of $\smash{\mathrm{I}_n^{(j)}}$, which estimates the information loss when a given input is removed. To assess the quality of the variable selection, we need an appropriate loss to quantify the accuracy of the empirical conditional measure $\hat{\P}_{\mathbf{Y} \mid \mathbf{X}^{(\mathbf{J})}}$ given by DRF fit with a subset of inputs $\mathbf{J} \subset \{1,\hdots, p\}$, with respect to the theoretical target measure $\P_{\mathbf{Y} \mid \mathbf{X}}$. The MMD provides a natural distance between these two quantities of interest, defined by $\E[\MMD^2(\P_{\mathbf{Y} \mid \mathbf{X}}, \hat{\P}_{\mathbf{Y} \mid \mathbf{X}^{(\mathbf{J})}})]$. We simplify this metric by subtracting $\E[\|\Phi(\P_{\mathbf{Y} \mid \mathbf{X}})\|^2_{\H}]$, since this last term does not rely on the estimated distribution, and we are only interested in relative comparisons of our approach with competitors. Therefore, our evaluation metric writes $\E[\|\mu_{N,n}(\Xbf^{(\mathbf{J})})\|^2_{\H}] - 2 \E[\langle k(\Ybf, \cdot), \mu_{N,n}(\Xbf^{(\mathbf{J})}) \rangle_{\H}]$.
Then our final empirical loss $\smash{\mathcal{L}_n^{(\mathbf{J})}}$ is defined as a Monte-Carlo estimate of the above quantity using an independent sample $(\Xbf'_1, \Ybf'_1), \hdots, (\Xbf'_n, , \Ybf'_n)$, i.e.
\begin{align*}
    \mathcal{L}_n^{(\mathbf{J})} = \frac{1}{n} \sum_{i=1}^n \wbf(\Xbf'_i)^{\top} \Kbf \wbf(\Xbf'_i) - 2\wbf(\Xbf'_i)^{\top} \kbf(\Ybf'_i),
\end{align*}
where the index $\mathbf{J}$ of $\Xbf'^{(\mathbf{J})}_i$ is dropped to lighten notations, and $\kbf(\Ybf'_i)=( k(\Ybf_1,\Ybf'_i) , \ldots, k(\Ybf_{n},\Ybf'_i) )^{\top}$. Notice that this loss can be negative since we subtract a constant positive term, but is obviously still a valid metric for comparisons. Finally, this loss $\smash{\mathcal{L}_n^{(\mathbf{J})}}$ provides a value at each step of the RFE, and to get a unique score, we sum $\smash{\mathcal{L}_n^{(\mathbf{J})}}$ over all steps of the RFE.
Next, we consider a wide variety of real datasets, inspired from \citet{DRF-paper}, which mainly come from \citet{tsoumakas2011mulan}---all dataset details are provided in Appendix A. DRF is fit using $N = 500$ trees, the RFE is stopped when only $3$ variables remain (forests struggle in very small dimensions), the procedure is repeated $40$ times for uncertainties, and each dataset is split in two halves: one to fit the forest and compute the importance measure with out-of-bag predictions, and the other half to estimate the MMD loss.
\begin{table}
\centering
\setlength{\tabcolsep}{3pt}
\begin{tabular}{c c c c c c c}
  \hline
 & $n$ & $p$ & $d$ & $\VI_n^{(j)}$ \tiny{(std)} & vimp-drf \tiny{(std)} \\ 
  \hline
  \textbf{enb} & 768 & 8 & 2 & \textbf{-3.478} \tiny{(0.02)} & -3.105 \tiny{(0.02)} \\ 
  jura & 359 & 15 & 3 & -0.921 \tiny{(0.01)} & -0.906 \tiny{(0.01)} \\ 
  \textbf{wq} & 1060 & 16 & 14 & \textbf{-0.164} \tiny{(0.003)} & -0.155 \tiny{(0.003)} \\ 
  air & 2000 & 15 & 6 & -0.146 \tiny{(0.003)} & -0.143 \tiny{(0.002)} \\ 
  births & 2000 & 24 & 4 & -0.328 \tiny{(0.003)} & -0.329 \tiny{(0.003)} \\ 
  \textbf{rf1} & 2000 & 64 & 8 & \textbf{-7.970} \tiny{(0.04)} & -7.886 \tiny{(0.04)} \\ 
  \textbf{scm20d} & 2000 & 61 & 16 & \textbf{0.289} \tiny{(0.001)} & 0.292 \tiny{(0.001)} \\ 
  \textbf{wage} & 2000 & 78 & 2 & \textbf{-28.032} \tiny{(0.08)} & -27.850 \tiny{(0.1)} \\ 
  \textbf{oes97} & 334 & 263 & 16 & \textbf{1.159} \tiny{(0.004)} & 1.183 \tiny{(0.004)} \\ 
   \hline
\end{tabular}
\caption{Cumulated MMD-loss over RFE steps, using our importance measure $\smash{\VI_n^{(j)}}$ or vimp-drf. Datasets are in bold when the gap between the two losses is higher than the sum of the two standard deviations.}
\label{table:rfe}
\end{table}

Results are displayed in Table \ref{table:rfe} for the considered datasets. In each case, we provide the cumulative MMD-loss, averaged over the $40$ repetitions, along with the standard deviation of this mean value. Table \ref{table:rfe} clearly shows a significant improvement of our importance measure over vimp-drf for most datasets. 
The variable ranking provided by $\smash{\mathrm{I}_n^{(j)}}$ and vimp-drf can be quite different. Considering the dataset `enb' with $8$ inputs for example, the top three variables given by the two methods do not overlap.
Additionally, Figure $\ref{fig:RFE_wage_rf1}$ shows the full path of the MMD-loss at each step of the RFE for several cases. For example, we observe a major improvement in variable selection for `rf1' and `wage' datasets. Figure $1$ in Appendix A displays results for the `jura' dataset, an example where the cumulative loss gap is not really significant, according to Table \ref{table:rfe}. Nevertheless, this figure shows that the loss is flat over all iterations of the RFE, except the last one, which tells us that removing most variables does not hurt DRF estimates, and our method still outperforms vimp-drf at the final step of the RFE.
Overall, our proposed importance measure improves variable selection for all datasets, except ``Births'' and ``Air''. In these two cases, the MMD-loss is constant over most iterations of the RFE, meaning that variables are removed without hurting the quality of the conditional output distribution estimates. This means that few variables are detected as influential by the DRF, making different competitors of equal efficiency to select variables.

\section{CONCLUSION} \label{sec:conclusion}
We have introduced a variable importance algorithm for Distributional Random Forest, which generalizes total Sobol indices using the MMD, to quantify the influence of each input variable on a multivariate output distribution. The method enjoys good theoretical properties with provable consistency, and shows high performance on experiments with both simulated and real data, especially for recursive feature elimination. Besides, the extension of this approach to MMD-based Shapley effects seems an interesting research direction for future work, since Shapley effects are strongly valuable to interpret various learning algorithms, and are so far limited to the detection of inputs impacting output means.


\bibliography{bibfile}

\section*{Checklist}

\begin{enumerate}

 \item For all models and algorithms presented, check if you include:
 \begin{enumerate}
   \item A clear description of the mathematical setting, assumptions, algorithm, and/or model. [Yes]
   \item An analysis of the properties and complexity (time, space, sample size) of any algorithm. [Yes]
   \item (Optional) Anonymized source code, with specification of all dependencies, including external libraries. [Yes]
 \end{enumerate}

 \item For any theoretical claim, check if you include:
 \begin{enumerate}
   \item Statements of the full set of assumptions of all theoretical results. [Yes]
   \item Complete proofs of all theoretical results. [Yes]
   \item Clear explanations of any assumptions. [Yes]     
 \end{enumerate}

 \item For all figures and tables that present empirical results, check if you include:
 \begin{enumerate}
   \item The code, data, and instructions needed to reproduce the main experimental results (either in the supplemental material or as a URL). [Yes]
   \item All the training details (e.g., data splits, hyperparameters, how they were chosen). [Yes]
         \item A clear definition of the specific measure or statistics and error bars (e.g., with respect to the random seed after running experiments multiple times). [Yes]
         \item A description of the computing infrastructure used. (e.g., type of GPUs, internal cluster, or cloud provider). [Yes]
 \end{enumerate}

 \item If you are using existing assets (e.g., code, data, models) or curating/releasing new assets, check if you include:
 \begin{enumerate}
   \item Citations of the creator If your work uses existing assets. [Yes]
   \item The license information of the assets, if applicable. [Not Applicable]
   \item New assets either in the supplemental material or as a URL, if applicable. [Not Applicable]
   \item Information about consent from data providers/curators. [Not Applicable]
   \item Discussion of sensible content if applicable, e.g., personally identifiable information or offensive content. [Not Applicable]
 \end{enumerate}

 \item If you used crowdsourcing or conducted research with human subjects, check if you include:
 \begin{enumerate}
   \item The full text of instructions given to participants and screenshots. [Not Applicable]
   \item Descriptions of potential participant risks, with links to Institutional Review Board (IRB) approvals if applicable. [Not Applicable]
   \item The estimated hourly wage paid to participants and the total amount spent on participant compensation. [Not Applicable]
 \end{enumerate}

 \end{enumerate}

\appendix
\onecolumn

\vspace*{1mm}

\begin{center}
\textbf{\Large Supplementary Material for ``MMD-based Variable Importance for Distributional Random Forest''}
\end{center}

\vspace*{1cm}


\FloatBarrier

Appendix A provides additional experiments and details for Section $4$ of the main article. Then, Appendix B states the proofs of all theorems and propositions.

\section{Experiments} \label{appendix:xp}

The first subsection is dedicated to additional details for experiments with simulated data, while the second subsection focuses on RFE for real datasets. In particular, we provide figures of the RFE path for the nine datasets, along with data sources. 

\subsection{Simulated data}

For simulated data, all experiment settings are given in the main article. For the univariate case, we provide an extended version of Table \ref{table:xp_univariate} with standard deviations of the mean importance values in small font. Notice that these uncertainties are small, and are thus omitted in the paper.

\begin{table}
\setlength{\tabcolsep}{1pt}
\centering
\begin{tabular}{c c}
  \hline
   $X^{(j)}$ & $\VI_n^{(j)}$ \\ 
  \hline
  \blueft{$X^{(1)}$} & 0.181 \tiny{0.009} \\ 
  \blueft{$X^{(4)}$} & 0.073 \tiny{0.007} \\ 
  \blueft{$X^{(5)}$} & 0.073 \tiny{0.008} \\ 
  \blueft{$X^{(2)}$} & 0.072 \tiny{0.004} \\ 
  \blueft{$X^{(3)}$} & 0.065 \tiny{0.004} \\ 
  $X^{(10)}$ & 0.010 \tiny{0.001} \\ 
  $X^{(7)}$ & 0.005 \tiny{0.0004} \\ 
  $X^{(6)}$ & 0.005 \tiny{0.0005} \\ 
  $X^{(9)}$ & 0.005 \tiny{0.0003} \\ 
  $X^{(8)}$ & 0.005 \tiny{0.0002} \\ 
   \hline
\end{tabular} \hspace*{2mm}
\begin{tabular}{c c}
  \hline
   $X^{(j)}$ & v-drf \\ 
  \hline
  \blueft{$X^{(1)}$} & 0.649 \tiny{0.01} \\ 
  $X^{(10)}$ & 0.096 \tiny{0.01} \\ 
  \blueft{$X^{(2)}$} & 0.062 \tiny{0.004} \\ 
  \blueft{$X^{(5)}$} & 0.059 \tiny{0.007} \\ 
  \blueft{$X^{(4)}$} & 0.056 \tiny{0.005} \\ 
  \blueft{$X^{(3)}$} & 0.050 \tiny{0.003} \\ 
  $X^{(6)}$ & 0.007 \tiny{0.0007} \\ 
  $X^{(9)}$ & 0.007 \tiny{0.0005} \\ 
  $X^{(7)}$ & 0.007 \tiny{0.0007} \\ 
  $X^{(8)}$ & 0.007 \tiny{0.0005} \\ 
   \hline
\end{tabular}  \hspace*{2mm}
\begin{tabular}{c c}
  \hline
   $X^{(j)}$ & MDA \\ 
  \hline
  \blueft{$X^{(1)}$} & 6.47 \tiny{0.6} \\ 
  $X^{(10)}$ & 2.56 \tiny{0.5} \\ 
  \blueft{$X^{(2)}$} & 1.33 \tiny{0.4} \\ 
  $X^{(6)}$ & 0.25 \tiny{0.1} \\ 
  $X^{(8)}$ & 0.24 \tiny{0.2} \\ 
  $X^{(7)}$ & 0.18 \tiny{0.2} \\ 
  $X^{(9)}$ & 0.18 \tiny{0.1} \\ 
  \blueft{$X^{(3)}$} & -0.52 \tiny{0.2} \\ 
  \blueft{$X^{(5)}$} & -0.62 \tiny{0.1} \\ 
  \blueft{$X^{(4)}$} & -0.90 \tiny{0.1} \\ 
   \hline
\end{tabular}  \hspace*{2mm}
\begin{tabular}{c c}
  \hline
   $X^{(j)}$ & S-MDA \\ 
  \hline
  \blueft{$X^{(1)}$} & 0.014 \tiny{0.003} \\ 
  \blueft{$X^{(2)}$} & 0.012 \tiny{0.003} \\ 
  $X^{(8)}$ & -0.001 \tiny{0.001} \\ 
  $X^{(7)}$ & -0.003 \tiny{0.001} \\ 
  $X^{(9)}$ & -0.003 \tiny{0.0006} \\ 
  $X^{(6)}$ & -0.003 \tiny{0.0007} \\ 
  \blueft{$X^{(5)}$} & -0.003 \tiny{0.002} \\ 
  \blueft{$X^{(3)}$} & -0.004 \tiny{0.001} \\ 
  \blueft{$X^{(4)}$} & -0.004 \tiny{0.002} \\ 
  $X^{(10)}$ & -0.006 \tiny{0.001} \\ 
   \hline
\end{tabular}
\caption{Variable importance for the univariate output experiment for $\smash{\VI_n^{(j)}}$, vimp-drf (v-drf), MDA, and Sobol-MDA (S-MDA), with standard deviations of mean importance values in small font.}
\label{table_app:xp_univariate}
\end{table}

\subsection{RFE for real datasets}

We first recall Table $5$ of the main article in Table \ref{table_app:rfe} below. The choice of these datasets was driven by the original DRF paper \citep{DRF-paper}, originally inspired by \citet{tsoumakas2011mulan}, which provides data with multivariate outputs. The data was found in the following repository : \url{https://github.com/tsoumakas/mulan}, accessed in August 2023.
For the details and sources of ``Births'', ``Air quality'', and ``wage'' datasets, see Appendix C in \citet{DRF-paper}. Notice that our ``Births'' data corresponds to ``birth2'' in  \citet{DRF-paper}.
When the sample size exceeds $n = 2000$, a random subsampling of $2000$ observations is done at each repetition of the RFE, to keep a reasonable computational cost.
The RFE experiments were conducted on a cluster with the following characteristics: 27 computational nodes Ice Lake with 48 cores (CPU: Intel Xeon Gold 6342, 2x24 cores, 2.80 Ghz. Memory : 263 GB (5.4 GB/cores)).

Then, Figure \ref{fig:RFE_enb_jura} provides the RFE paths for ``enb'' and ``jura'' datasets, Figure \ref{fig:RFE_wq_scm20d} for ``wq'' and ``scm20d'' datasets, Figure \ref{fig:RFE_births_oes97} for ``Births'' and ``oes97'' datasets, and Figure \ref{fig:RFE_air} for ``Air quality'' dataset. Overall, our proposed importance measure improves variable selection for all datasets, except ``Births'' and ``Air''. In these two cases, the MMD-loss is constant over most iterations of the RFE, meaning that variables are removed without hurting the quality of the conditional output distribution estimates. This means that few variables are detected as influential by the DRF, making different competitors of equal efficiency to select variables.

\begin{table}
\centering
\setlength{\tabcolsep}{2pt}
\begin{tabular}{c c c c c c c c}
  \hline
 & $n$ & $p$ & $d$ & $\VI_n^{(j)}$ & std & vimp-drf & std \\ 
  \hline
  \textbf{enb} & 768 & 8 & 2 & \textbf{-3.478} & 0.020 & -3.105 & 0.017 \\ 
  jura & 359 & 15 & 3 & -0.921 & 0.014 & -0.906 & 0.014 \\ 
  \textbf{wq} & 1060 & 16 & 14 & \textbf{-0.164} & 0.003 & -0.155 & 0.003 \\ 
  air & 2000 & 15 & 6 & -0.146 & 0.003 & -0.143 & 0.002 \\ 
  births & 2000 & 24 & 4 & -0.328 & 0.003 & -0.329 & 0.003 \\ 
  \textbf{rf1} & 2000 & 64 & 8 & \textbf{-7.970} & 0.037 & -7.886 & 0.035 \\ 
  \textbf{scm20d} & 2000 & 61 & 16 & \textbf{0.289} & 0.001 & 0.292 & 0.001 \\ 
  \textbf{wage} & 2000 & 78 & 2 & \textbf{-28.032} & 0.077 & -27.850 & 0.098 \\ 
  \textbf{oes97} & 334 & 263 & 16 & \textbf{1.159} & 0.004 & 1.183 & 0.004 \\ 
   \hline
\end{tabular}
\caption{Cumulated MMD-loss over RFE steps, using our importance measure $\smash{\VI_n^{(j)}}$ or vimp-drf. Datasets are in bold when the gap between the two losses is higher than the sum of the two standard deviations.}
\label{table_app:rfe}
\end{table}

\begin{figure}
    \begin{center}
        \includegraphics[width=0.5 \textwidth]{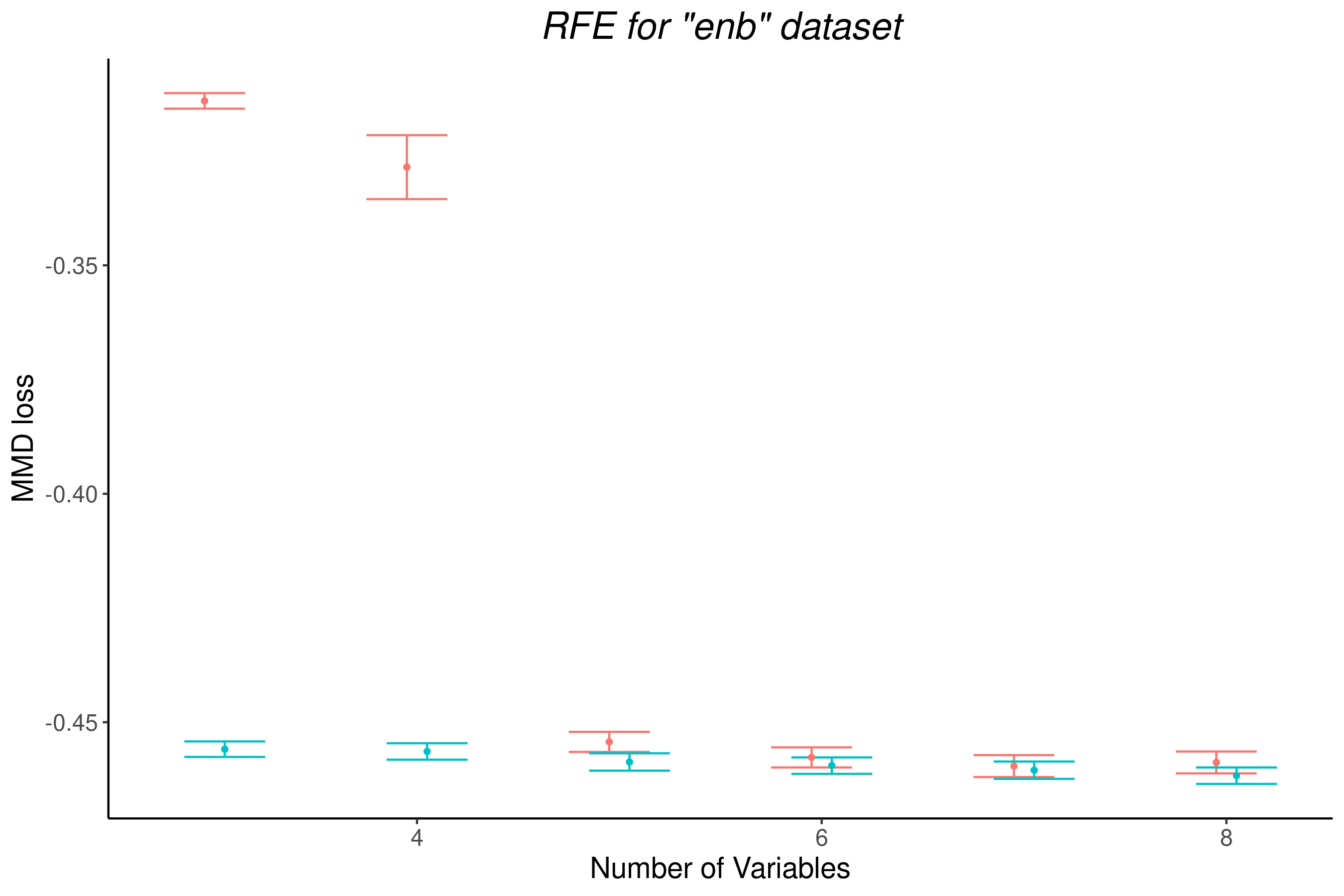} \hspace*{-3mm}
        \includegraphics[width=0.5 \textwidth]{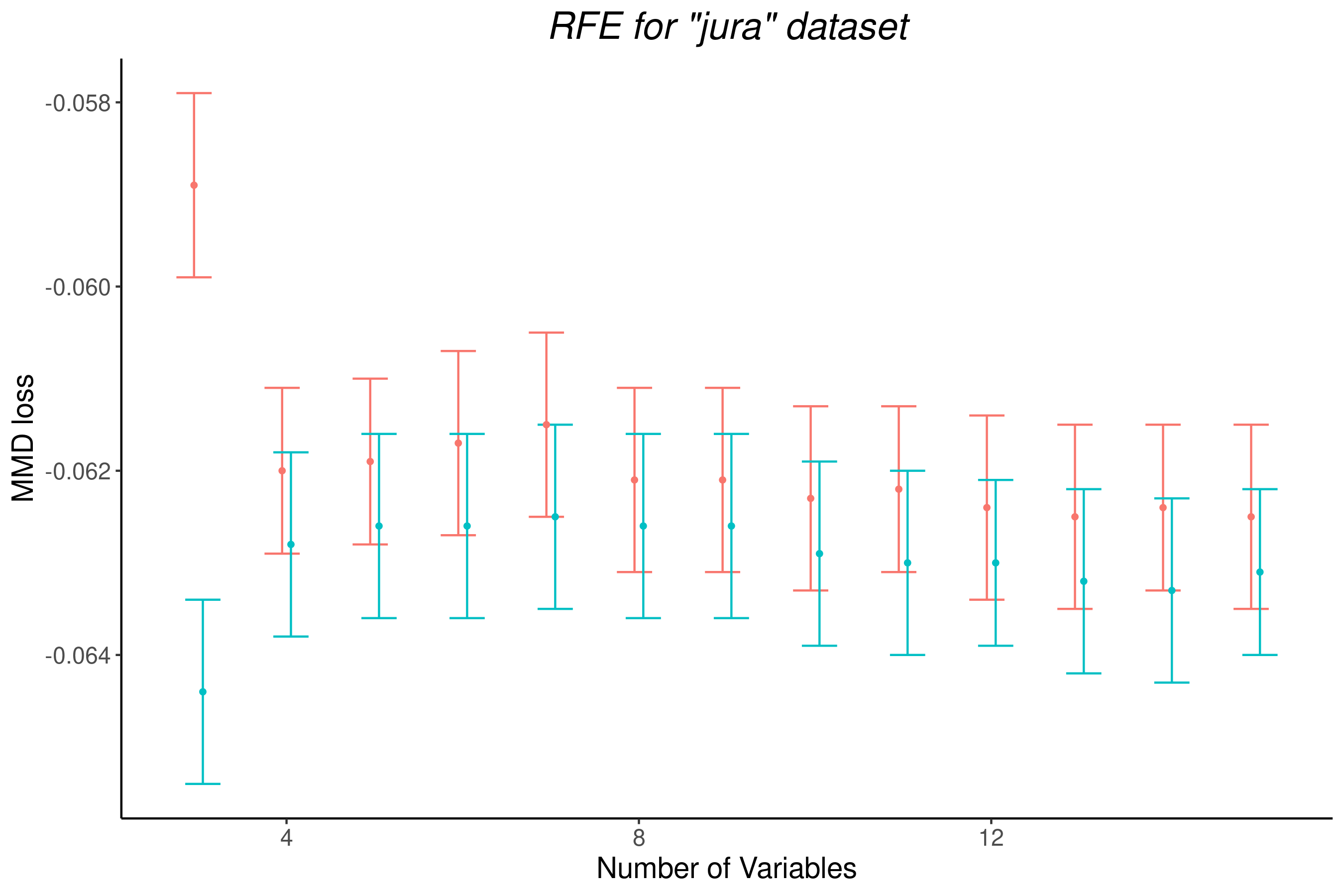}
    \end{center}
    \caption{RFE for `enb' (left panel) and `jura' (right panel) datasets, using our importance measure $\VI_n^{(j)}$ (blue) or vimp-drf (red).}
    \label{fig:RFE_enb_jura}
\end{figure}

\begin{figure}
    \begin{center}
        \includegraphics[width=0.5 \textwidth]{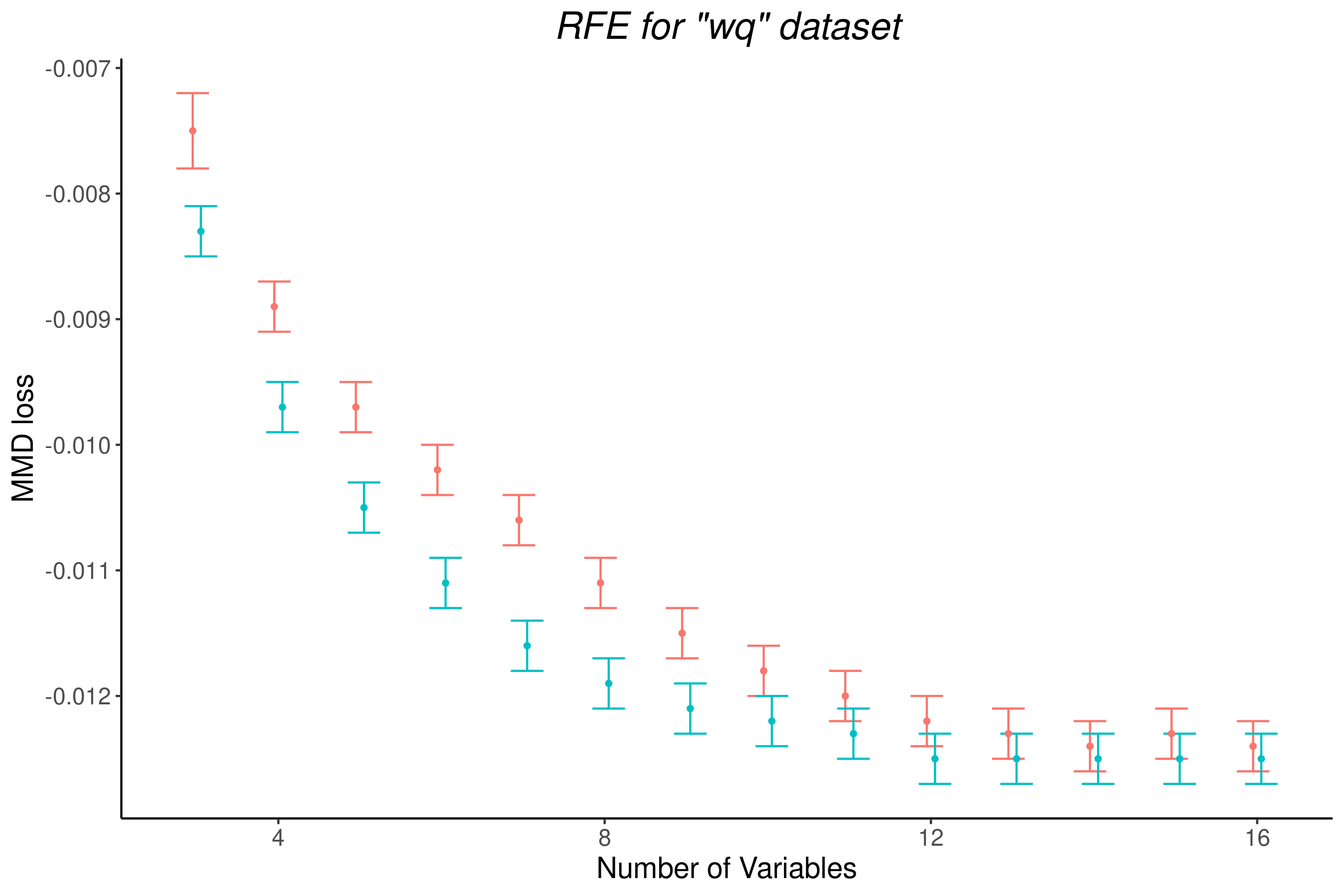} \hspace*{-3mm}
        \includegraphics[width=0.5 \textwidth]{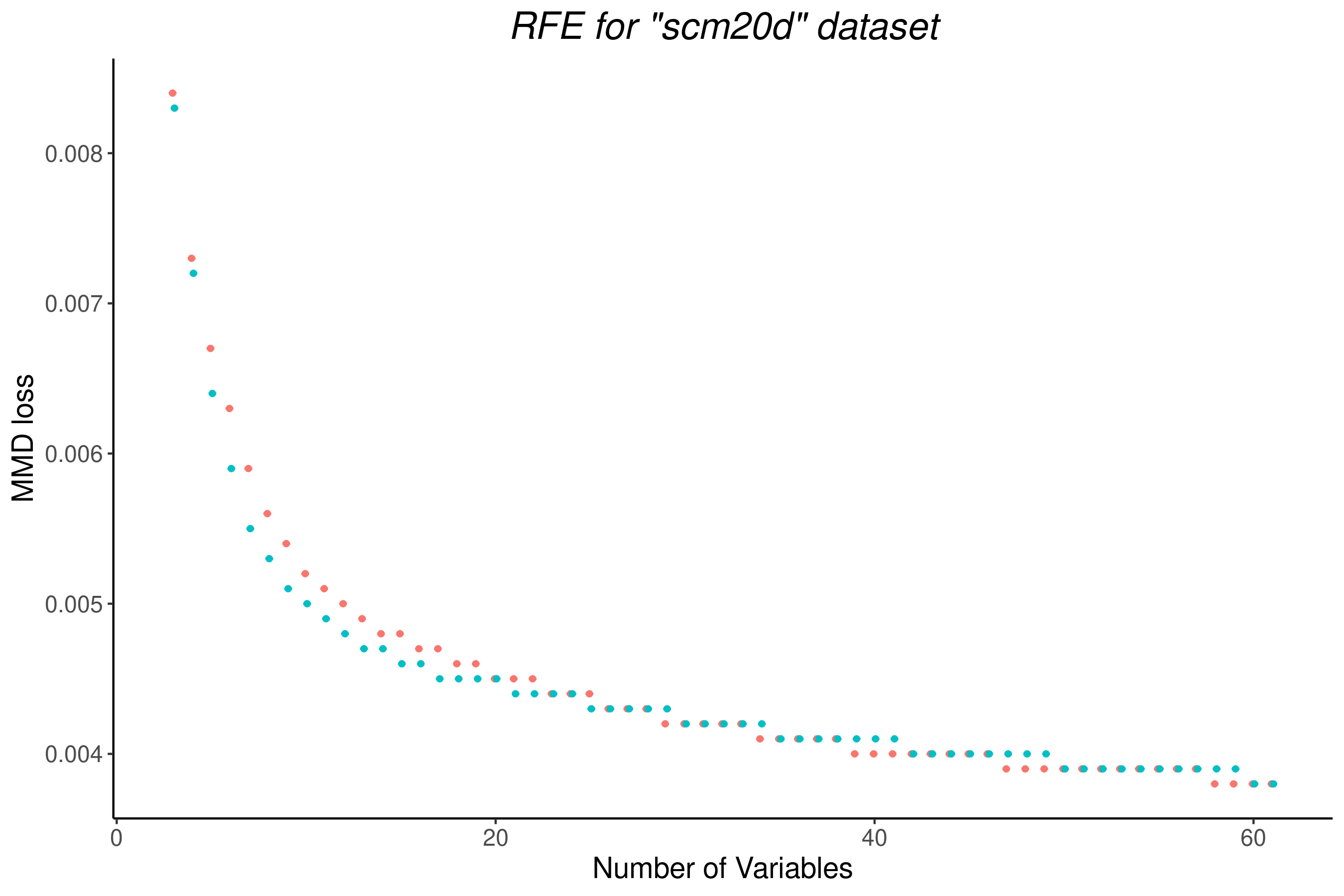}
    \end{center}
    \caption{RFE for `wq' (left panel) and `scm20d' (right panel) datasets, using our importance measure $\VI_n^{(j)}$ (blue) or vimp-drf (red).}
    \label{fig:RFE_wq_scm20d}
\end{figure}

\begin{figure}
    \begin{center}
        \includegraphics[width=0.5 \textwidth]{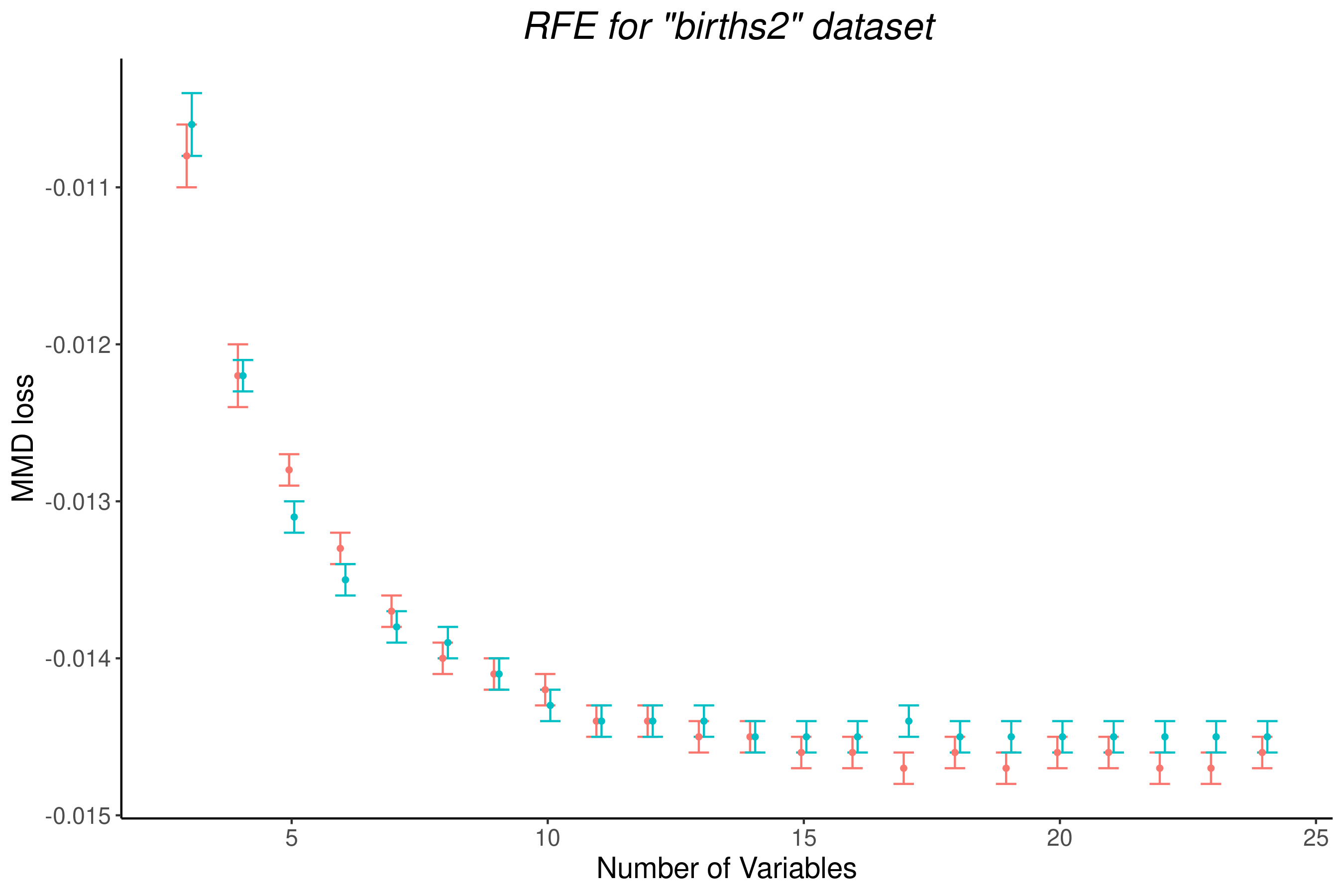} \hspace*{-3mm}
        \includegraphics[width=0.5 \textwidth]{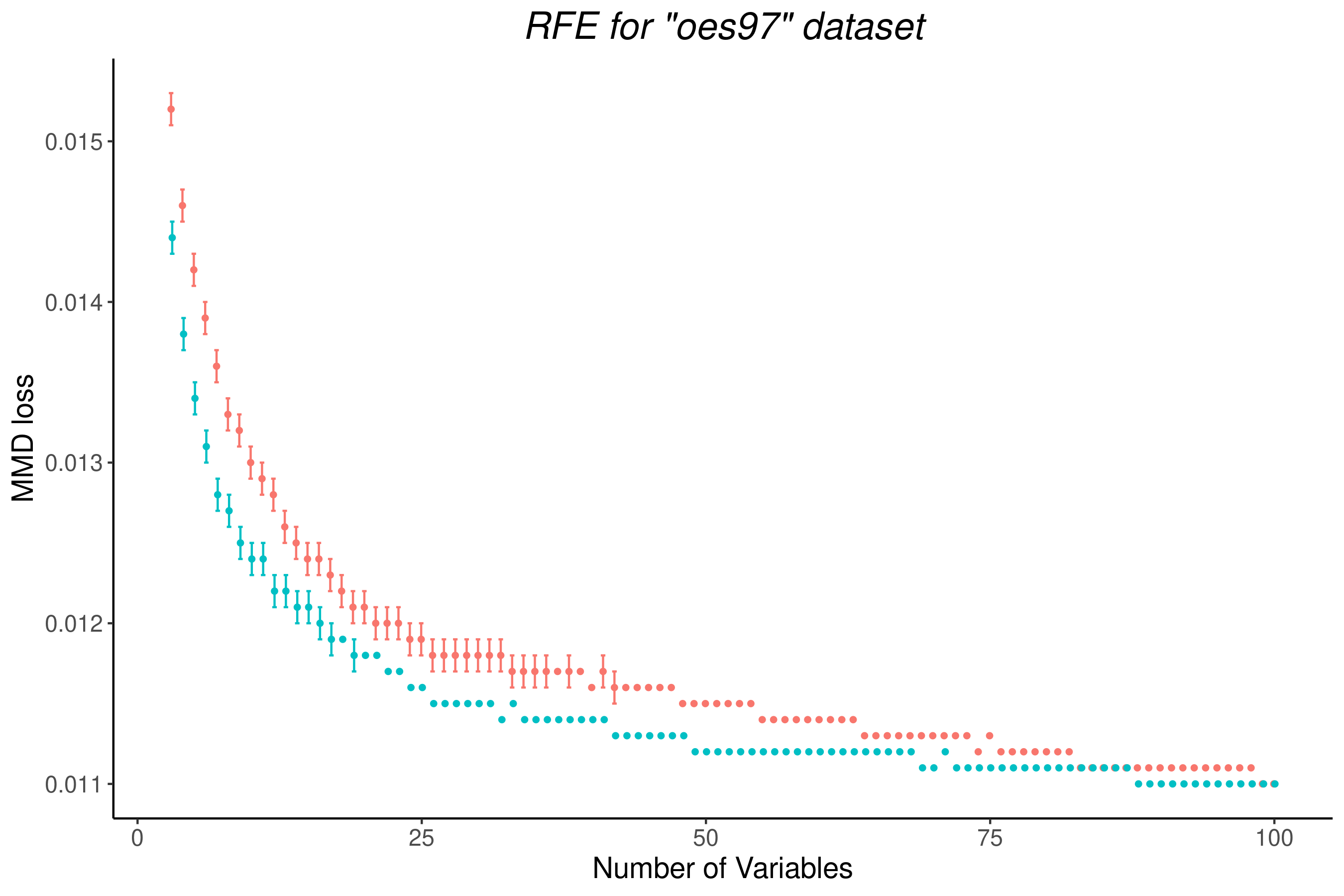}
    \end{center}
    \caption{RFE for `Births' (left panel) and `oes97' (right panel) datasets, using our importance measure $\VI_n^{(j)}$ (blue) or vimp-drf (red).}
    \label{fig:RFE_births_oes97}
\end{figure}

\begin{figure}
    \begin{center}
        \includegraphics[width=0.5 \textwidth]{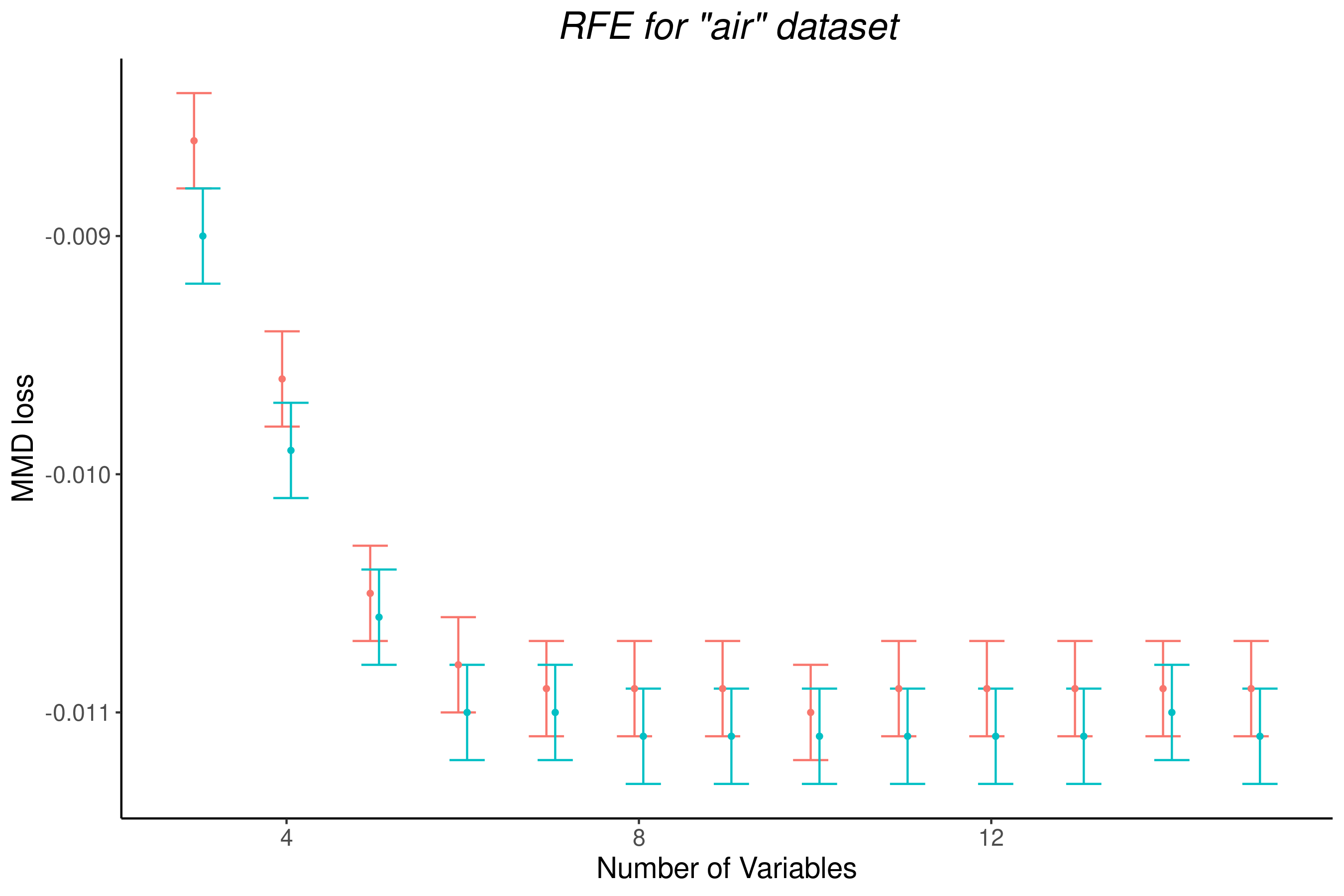} \hspace*{-3mm}
    \end{center}
    \caption{RFE for `Air' dataset, using our importance measure $\VI_n^{(j)}$ (blue) or vimp-drf (red).}
    \label{fig:RFE_air}
\end{figure}

\section{Proofs}\label{appendix: proofs}

We first recall the main equations of the article.
\begin{align}\label{eq_app:DRFvarimportance}
    \VI^{(j)} \defeq \frac{\E[ \VH[\mu(\Xbf) \mid \Xbf^{(-j)}]]}{\VH[\mu(\Xbf)]},
\end{align}
\begin{align}\label{eq_app:Injdef}
    \VI_n^{(j)} = \frac{ \sum_{i=1}^n \norm{\mu_{N,n}(\Xbf'_i) - \mu_{N,n}(\Xbf_i'^{(-j)}) }_{\H}^2 }{\sum_{i=1}^n \norm{ \mu_{N,n}(\Xbf'_i) - \overline{\mu_{N,n}} }_{\H}^2} - \VI_n^{(0)},
\end{align}
\begin{equation}\label{eq_app:finalestimator}
    \hmun = \binom{n}{s_n}^{-1} \hspace*{-5mm} \sum_{i_1 < \cdots < i_{s_n}} \hspace*{-4mm} \E\left[T_n(\xbf; \varepsilon, \{\mathbf{Z}_{i_1}, \ldots, \mathbf{Z}_{i_{s_n}}\}) \mid \Zcal_n \right].
\end{equation}

We put the following assumptions on the forest construction.
\begin{enumerate}[label=(\textbf{F\arabic*})]
    \item\label{forestass1} (\textit{Honesty}) The data used for constructing each tree is split into two halves; the first is used for determining the splits and the second for populating the leaves and thus for estimating the response. The covariates in the second sample may be used for the splits, to enforce the subsequent assumptions, but not the response.
    \item\label{forestass2} (\textit{Random-split}) At every split point and for all feature dimensions $j=1,\ldots,p$, the probability that the split occurs along the feature $X_j$ is bounded from below by $\pi/p$ for some $\pi > 0$.
    \item\label{forestass3} (\textit{Symmetry}) The (randomized) output of a tree does not depend on the ordering of the training samples.
    \item\label{forestass4} (\textit{$\alpha$-regularity}) 
   After splitting a parent node, each child node contains at least a fraction $\alpha \leq 0.2$ of the parent's training samples. Moreover, the trees are grown until every leaf contains between $\kappa$ and $2\kappa - 1$ many observations for some fixed tuning parameter $\kappa \in \N$. 
    \item\label{forestass5}(\textit{Data sampling}) 
    To grow a tree, a subsample of size $s_n$ out of the $n$ training data points is sampled. We consider $s_n=n^{\beta}$ with $0 < \beta < 1$.
\end{enumerate}

We also recall the assumptions of the article.
\begin{enumerate}[label=(\textbf{K\arabic*})]
    \item\label{kernelass1_app} The kernel $k$ is bounded, and the function $(\xbf,\ybf) \mapsto k(\xbf,\ybf)$ is (jointly) continuous.
    \item\label{kernelass3_app} The kernel $k$ is characteristic.
\end{enumerate}

\begin{enumerate}[label=(\textbf{D\arabic*})]
    \item\label{dataass1_app} The observations $\mathbf{X}_1,\ldots, \mathbf{X}_n$ are independent and identically distributed on $[0,1]^p$, with a density bounded from below and above by strictly positive constants.
    \item\label{dataass2_app} The mapping $\xbf \mapsto \mu(\xbf)=\E[ k(\Ybf,\cdot)  \mid \Xbf \myeq \xbf] \in \H $ is Lipschitz.
    \item\label{dataass3_app} The mapping $\xbf^{(-j)} \mapsto \E[ k(\Ybf,\cdot)  \mid \Xbf^{(-j)} \myeq \xbf^{(-j)}] \in \H $ is Lipschitz.
\end{enumerate}

For a sequence of random variables $X_n\colon \Omega \to \R$ and $a_n \in (0,+\infty)$, $n \in \N$, we write $X_n=\O_p(a_n)$ if 
\[
\lim_{M \to \infty} \sup_{n} \P(a_n^{-1} |X_n| > M) = 0.
\]

\VIMMD*

\begin{proof}
    By definition of our importance measure given in Equation \eqref{eq_app:DRFvarimportance}, we have 
    \begin{align*}
        \VI^{(j)} \defeq \frac{\E[ \VH[\mu(\Xbf) \mid \Xbf^{(-j)}]]}{\VH[\mu(\Xbf)]}.
    \end{align*}
    We first consider the numerator, and write
    \begin{align*}
        \E[\VH[\mu(\Xbf) \mid \Xbf^{(-j)}]] &= \E[\|\mu(\Xbf) - \E[\mu(\Xbf) \mid \Xbf^{(-j)}]\|^2_{\H}] 
        = \E[\|\E[k(\Ybf,\cdot) \mid \Xbf] - \E[k(\Ybf,\cdot) \mid \Xbf^{(-j)}]\|^2_{\H}] \\ 
        &= \E[\|\Phi(\PYgX) - \Phi(\PYgXmj)\|^2_{\H}] = \E[\MMD^2(\PYgX, \PYgXmj)].        
    \end{align*} 
    For the denominator, we have
    \begin{align*}
        \VH[\mu(\Xbf)] &= \E[\|\mu(\Xbf) - \E[\mu(\Xbf)]\|^2_{\H}] = \E[\|\mu(\Xbf)\|^2_{\H}] - \|\E[\mu(\Xbf)]\|^2_{\H}
        \\ &= \E[\|\E[k(\Ybf,\cdot) \mid \Xbf]\|^2_{\H}] - \|\E[k(\Ybf,\cdot)]\|^2_{\H}.        
    \end{align*}
    Using the RKHS properties of $\H$, and with $\Ybf'$ an independent copy of $\Ybf$, we can write
    \begin{align*}
        \|\E[k(\Ybf,\cdot)]\|^2_{\H} = \langle \E[k(\Ybf,\cdot)], \E[k(\Ybf',\cdot)] \rangle_{\H} = \E[\langle k(\Ybf,\cdot), k(\Ybf',\cdot) \rangle_{\H}] = \E[k(\Ybf,\Ybf')].
    \end{align*}
   For the other term, we introduce $\tilde{\Ybf}$ distributed as $\Ybf$, and independent and identically distributed as $\Ybf$ conditional on $\Xbf$, then
    \begin{align*}
        \|\E[k(\Ybf,\cdot) \mid \Xbf]\|^2_{\H} = \langle \E[k(\Ybf,\cdot)\mid \Xbf], \E[k(\tilde{\Ybf},\cdot) \mid \Xbf] \rangle_{\H} = \E[\langle k(\Ybf,\cdot), k(\tilde{\Ybf},\cdot) \rangle_{\H} \mid \Xbf] = \E[k(\Ybf, \tilde{\Ybf}) \mid \Xbf].
    \end{align*}
    Overall, $\VH[\mu(\Xbf)] =  \E[k(\Ybf, \tilde{\Ybf})] - \E[k(\Ybf,\Ybf')]$. Next, by definition of the MMD distance, we have
    \begin{align*}
        \E[\MMD^2(\PY, \PYgX)] = \E[k(\Ybf,\Ybf')] + \E[k(\Ybf,\tilde{\Ybf})] - 2 \E[k(\Ybf',\tilde{\Ybf})],
    \end{align*}
    and since $\Ybf'$ and $\Xbf$ are independent, $\E[k(\Ybf',\tilde{\Ybf})] = \E[k(\Ybf',\Ybf)]$, and we get 
     \begin{align*}
        \E[\MMD^2(\PY, \PYgX)] = \E[k(\Ybf, \tilde{\Ybf})] - \E[k(\Ybf,\Ybf')] = \VH[\mu(\Xbf)].
    \end{align*}
    Finally, we obtain  
    \begin{align*}
        \VI^{(j)} &= \frac{\E[\MMD^2(\PYgX, \PYgXmj)]}{\E[\MMD^2(\PY, \PYgX)]}.
    \end{align*}
    Similarly, it holds that 
    \begin{align*}
        \E[\MMD^2(\PYgX, \PYgXmj)] = \E[k(\Ybf, \tilde{\Ybf})] - \E[k(\Ybf,\tilde{\Ybf}^{(-j)})],
    \end{align*}
    where $\tilde{\Ybf}^{(-j)}$ is distributed as $\Ybf$, and independent and identically distributed as $\Ybf$ conditional on $\Xbf^{(-j)}$. Consequently, we get
    \begin{align*}
        \E[\MMD^2(\PY, \PYgX)] - \E[\MMD^2(\PY, \PYgXmj)] = \E[\MMD^2(\PYgX, \PYgXmj)],
    \end{align*}
    and we also obtain
    \begin{align*}
        \VI^{(j)} &= 1 - \frac{\E[\MMD^2(\PY, \PYgXmj)]}{\E[\MMD^2(\PY, \PYgX)]}.
    \end{align*}
\end{proof}

\VIpositive*

\begin{proof}
    The condition that $\PYgx \neq \PYgxmj$ over a set of non-null probability, combined with Assumption \ref{kernelass3_app}, implies that $\norm{\mu(\xbf)-\mu(\xbf^{(-j)})}_{\H} > 0 $, for $\xbf$ in a set with nonzero probability. This in turn implies $\E[\| \mu(\Xbf) - \mu(\Xbf^{(-j)}) \|_{\H}^2] > 0$, and thus $\VI^{(j)} > 0$.
    Additionally, $\VI^{(j)} \leq 1$ is a direct consequence of Proposition \ref{prop:VI_MMD}, since MMD takes non-negative values.
\end{proof}

\meanconsistency*

\begin{proof}
    \citet[Theorem 1]{DRF-paper} implies that
    \begin{equation}\label{eq: rate} 
\norm{\hmun- \mu(\xbf)}_\H \stackrel{p}{\to} 0.
\end{equation}
As a first consequence, we have for a random test point $\Xbf$
\begin{align}\label{eq: convergenceinprobRandomTestpoint}
    \norm{\mu_n(\Xbf)- \mu(\Xbf)}_\H \stackrel{p}{\to} 0,
\end{align}
as 
\begin{align*}
    \Prob( \norm{\mu_n(\Xbf)- \mu(\Xbf)}_\H > \varepsilon ) \leq  \E[\Prob( \norm{\mu_n(\Xbf)- \mu(\Xbf)}_\H > \varepsilon \mid \Xbf )] \to 0,
\end{align*}
by boundedness of $\Prob( \norm{\mu_n(\xbf)- \mu(\xbf)}_\H > \varepsilon \mid \Xbf=\xbf )$.
Crucially, since $\sup_{\ybf_1, \ybf_2}k(\ybf_1, \ybf_2 ) \leq C$ by assumption \kernelass{1}, $\norm{\mu_n(\Xbf)- \mu(\Xbf)}_\H $ is bounded as well. Indeed, $\mu_n(\Xbf)$ in \eqref{eq_app:finalestimator} can also be written as a convex combination of $k(\Ybf_i, \cdot)$,
\begin{align*}
    \mu_n(\Xbf)=\sum_{i=1}^n w_i(\Xbf) k(\Ybf_i, \cdot). 
\end{align*}
Thus, for all $x \in [0,1]$, $n \in \mathbb{N}^{\star}$,
\begin{align*}
    \norm{\mu_n(\Xbf)- \mu(\Xbf)}_\H &= \norm{ \sum_{i=1}^n w_i(\Xbf) (k(\Ybf_i, \cdot) - \E[k(\Ybf,\cdot) \mid \mathbf{X}] ) }_\H\\
    &\leq \max_{i} \norm{ k(\Ybf_i, \cdot) - \E[k(\Ybf,\cdot) \mid \mathbf{X}]  }_\H\\
    & \leq \max_{i} \norm{k(\Ybf_i, \cdot) }_\H + \E[ \norm{ k(\Ybf,\cdot)}_\H \mid \mathbf{X}]\\
    & \leq 2 \sqrt{C},
\end{align*}
since $\norm{k(\Ybf_i, \cdot) }_\H^2=k(\Ybf_i, \Ybf_i) \leq C$.

As for a bounded random variable convergence in probability implies convergence in expectations, the result follows.
\end{proof}

\projmeanconsistency*

\begin{proof}
    Direct application of Proposition \ref{thm: meanconsistency}.
\end{proof}

\VIconsistency*

\begin{proof}
We recall that for the number of trees $N \to \infty$, $\mu_n$ is defined in \eqref{eq_app:finalestimator}, and
\begin{align*}
   I_n^{(j)} = \frac{ \sum_{i=1}^n \norm{ \mu_n(\Xbf_i') - \hmunbruteXl }_{\H}^2 }{\sum_{i=1}^n \norm{ \mu_n(\Xbf_i') - \frac{1}{n} \sum_{i'=1}^n \mu_n(\Xbf'_{i'}) }_{\H}^2}
\end{align*}

We first study the upper part and show
\begin{claim}
   \begin{align}\label{eq: meanconvergenceupper}
  \E[ |\norm{ \mu_n(\Xbf'_1) - \mu_n(\Xbf'^{(-j)}_1) }_{\H}^2 - \norm{ \mu(\Xbf'_1) - \mu(\Xbf'^{(-j)}_1)}_{\H}^2|] \to 0 . 
\end{align} 
\end{claim}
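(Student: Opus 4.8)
The plan is to establish the claim in~\eqref{eq: meanconvergenceupper} by reducing the convergence of the squared-norm difference to the already-established convergence of each forest estimate to its target, namely $\norm{\mu_n(\Xbf) - \mu(\Xbf)}_\H \stackrel{p}{\to} 0$ from Proposition~\ref{thm: meanconsistency} and $\norm{\mu_n(\Xbf^{(-j)}) - \mu(\Xbf^{(-j)})}_\H \stackrel{p}{\to} 0$ from Proposition~\ref{thm: proj_meanconsistency}. The key algebraic observation is that for any elements $a, b, a', b'$ in the Hilbert space $\H$, the difference of squared norms $\norm{a-b}_\H^2 - \norm{a'-b'}_\H^2$ can be controlled by the distances $\norm{a-a'}_\H$ and $\norm{b-b'}_\H$ together with a bound on the norms of the vectors involved. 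Writing $a = \mu_n(\Xbf'_1)$, $b = \mu_n(\Xbf'^{(-j)}_1)$, $a' = \mu(\Xbf'_1)$, $b' = \mu(\Xbf'^{(-j)}_1)$, I would exploit the factorization $\norm{a-b}_\H^2 - \norm{a'-b'}_\H^2 = \langle (a-b)-(a'-b'), (a-b)+(a'-b')\rangle_\H$ and bound it via Cauchy--Schwarz.

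The first step is therefore to record the uniform boundedness already proved inside Proposition~\ref{thm: meanconsistency}: since $\sup_{\ybf_1,\ybf_2} k(\ybf_1,\ybf_2) \leq C$ by~\kernelass{1}, each of $\mu_n(\Xbf'_1)$, $\mu(\Xbf'_1)$, $\mu_n(\Xbf'^{(-j)}_1)$, $\mu(\Xbf'^{(-j)}_1)$ has $\H$-norm at most $\sqrt{C}$, being convex combinations (or expectations) of the $k(\Ybf_i,\cdot)$. Consequently $\norm{(a-b)+(a'-b')}_\H$ is bounded by a fixed constant (of order $\sqrt{C}$), and the triangle inequality gives $\norm{(a-b)-(a'-b')}_\H \leq \norm{a-a'}_\H + \norm{b-b'}_\H$. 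Combining these, the integrand in~\eqref{eq: meanconvergenceupper} is bounded by a constant multiple of $\norm{\mu_n(\Xbf'_1)-\mu(\Xbf'_1)}_\H + \norm{\mu_n(\Xbf'^{(-j)}_1)-\mu(\Xbf'^{(-j)}_1)}_\H$, which converges to zero in probability by the two cited propositions.

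The second step is to upgrade this convergence in probability of the integrand to convergence in expectation, which is exactly where the uniform boundedness pays off a second time: the integrand of~\eqref{eq: meanconvergenceupper} is itself bounded almost surely (each squared norm is at most a constant of order $C$), so by the bounded convergence theorem the convergence in probability of the integrand entails convergence of its expectation to zero. This mirrors the final dominated-convergence argument already used to close Proposition~\ref{thm: meanconsistency}.

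The main obstacle, though a mild one, is bookkeeping the independence and measurability so that the convergence statements—originally phrased for a generic random test point $\Xbf$—apply to the evaluation point $\Xbf'_1$ drawn from the independent sample, and simultaneously to its projection $\Xbf'^{(-j)}_1$; the convergence of $\mu_n$ at a random test point was handled in Proposition~\ref{thm: meanconsistency} by conditioning on the test point and invoking boundedness, and the same conditioning device transfers verbatim here since $\Xbf'_1$ is independent of the training sample $\Zcal_n$. Once the claim~\eqref{eq: meanconvergenceupper} is in hand, the analogous argument for the denominator (comparing $\norm{\mu_n(\Xbf'_i)-\overline{\mu_n}}_\H^2$ with its population counterpart $\VH[\mu(\Xbf)]$) and an application of Slutsky's theorem, together with the positivity of the denominator guaranteed by Proposition~\ref{prop:VI_positive}, will deliver the full consistency of $\VI_n^{(j)}$.
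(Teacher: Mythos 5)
Your proof is correct and takes essentially the same route as the paper's: both arguments control the difference of squared norms via Cauchy--Schwarz together with the uniform bound (of order $\sqrt{C}$) on all four embeddings, and then reduce everything to the mean consistency $\E[\norm{\mu_n(\Xbf'_1)-\mu(\Xbf'_1)}_\H]\to 0$ and $\E[\norm{\mu_n(\Xbf_1'^{(-j)})-\mu(\Xbf_1'^{(-j)})}_\H]\to 0$ from Propositions~\ref{thm: meanconsistency} and~\ref{thm: proj_meanconsistency}. Your polarization identity $\norm{u}_\H^2-\norm{v}_\H^2=\langle u-v,\, u+v\rangle_\H$ is just a tidier packaging of the paper's three-term telescoping decomposition, and since those propositions already give convergence in expectation you could even skip the final bounded-convergence step by taking expectations of your pointwise bound directly.
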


\begin{claimproof}
First,
\begin{align*}
    &\E[ |\norm{ \mu_n(\Xbf_1') - \mu_n(\Xbf_1'^{(-j)})  }_{\H}^2  - \norm{ \mu(\Xbf'_1) - \mu(\Xbf'^{(-j)}_1)}_{\H}^2| ] = \\
    &\E[ |\langle \mu_n(\Xbf_1') - \mu_n(\Xbf_1'^{(-j)}), \mu_n(\Xbf_1') - \mu_n(\Xbf_1'^{(-j)})  \rangle_{\H} - \norm{ \mu(\Xbf'_1) - \mu(\Xbf'^{(-j)}_1)}_{\H}^2| ] =\\
    & \E[| \langle \mu_n(\Xbf_1') - \mu(\Xbf_1') + \mu(\Xbf_1') - \mu(\Xbf_1'^{(-j)}) + \mu(\Xbf_1'^{(-j)}) - \mu_n(\Xbf_1'^{(-j)}), \mu_n(\Xbf_1') - \mu_n(\Xbf_1'^{(-j)})  \rangle_{\H} -\\
    & \norm{ \mu(\Xbf'_1) - \mu(\Xbf'^{(-j)}_1)}_{\H}^2|]\leq \E[ |\langle \mu_n(\Xbf_1') - \mu(\Xbf_1') , \mu_n(\Xbf_1') - \mu_n(\Xbf_1'^{(-j)})  \rangle_{\H}| ] +\\
        &  \E[ |\langle  \mu(\Xbf_1'^{(-j)}) - \mu_n(\Xbf_1'^{(-j)}), \mu_n(\Xbf_1') - \mu_n(\Xbf_1'^{(-j)})  \rangle_{\H}| ]+ \\
        &\E[ |\langle \mu(\Xbf_1') - \mu(\Xbf_1'^{(-j)}) , \mu_n(\Xbf_1') - \mu_n(\Xbf_1'^{(-j)})  \rangle_{\H} - \norm{ \mu(\Xbf'_1) - \mu(\Xbf'^{(-j)}_1)}_{\H}^2|] 
\end{align*}
Now using that all norms of estimates are uniformly bounded by $K < \infty$ and the Cauchy-Schwarz inequality,
\begin{align*}
    \E[| \langle \mu_n(\Xbf_1') - \mu(\Xbf_1') , \mu_n(\Xbf_1') - \mu_n(\Xbf_1'^{(-j)})  \rangle_{\H}| ] &\leq  \E[ \norm{ \mu_n(\Xbf_1') - \mu(\Xbf_1')}_{\H} ] K \to 0\\
    \E[ |\langle  \mu(\Xbf_1'^{(-j)}) - \mu_n(\Xbf_1'^{(-j)}), \mu_n(\Xbf_1') - \mu_n(\Xbf_1'^{(-j)})  \rangle_{\H}| ] &\leq  \E[ \norm{\mu(\Xbf_1'^{(-j)}) - \mu_n(\Xbf_1'^{(-j)})}_{\H} ] K \to 0,
\end{align*}
by Propositions \ref{thm: meanconsistency} and \ref{thm: proj_meanconsistency}. Moreover, 
\begin{align*}
    &\E[ |\langle \mu(\Xbf_1') - \mu(\Xbf_1'^{(-j)}) , \mu_n(\Xbf_1') - \mu_n(\Xbf_1'^{(-j)})  \rangle_{\H} - \norm{ \mu(\Xbf'_1) - \mu(\Xbf'^{(-j)}_1)}_{\H}^2|]= \\
    & \E[ |\langle \mu(\Xbf_1') - \mu(\Xbf_1'^{(-j)}) , \mu_n(\Xbf_1') - \mu_n(\Xbf_1'^{(-j)})  - (\mu(\Xbf_1') - \mu(\Xbf_1'^{(-j)}) ) \rangle_{\H}|  ],
\end{align*}
since again from Propositions \ref{thm: meanconsistency} and \ref{thm: proj_meanconsistency},
\begin{align*}
    &\E[| \langle \mu(\Xbf_1') - \mu(\Xbf_1'^{(-j)}) , \mu_n(\Xbf_1') - \mu_n(\Xbf_1'^{(-j)})  - (\mu(\Xbf_1') - \mu(\Xbf_1'^{(-j)}) ) \rangle_{\H} | ] \leq \\
    &K \E[ \norm{ \mu_n(\Xbf_1') - \mu_n(\Xbf_1'^{(-j)})  - \mu(\Xbf_1') + \mu(\Xbf_1'^{(-j)})}_{\H}] \leq \\
    &K \left( \E[ \norm{ \mu_n(\Xbf_1')   - \mu(\Xbf_1')}_{\H}] + \E[ \norm{ \mu(\Xbf_1'^{(-j)})-\mu_n(\Xbf_1'^{(-j)})}_{\H}] \right) \to 0.
\end{align*}
Thus, we have that \eqref{eq: meanconvergenceupper} holds. 

\end{claimproof}

Given \eqref{eq: meanconvergenceupper}, we can now show that

\begin{claim}
   \begin{align}\label{eq: convergencewewant}
      \E\left[\left | \frac{1}{n} \sum_{i=1}^n \norm{ \mu_n(\Xbf_i') - \hmunbruteXl }_{\H}^2 -\E[\norm{ \mu(\Xbf_1) - \mu(\Xbf_1^{(-j)})}_{\H}^2] \right | \right] \to 0.
 \end{align}
  
\end{claim}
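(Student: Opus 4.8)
The plan is to control the quantity inside the expectation in (\ref{eq: convergencewewant}) by the triangle inequality, splitting it into a term that compares the forest estimates with their population targets and a term that is the fluctuation of an i.i.d.\ average about its mean. Writing $Z_i \defeq \norm{\mu(\Xbf_i') - \mu(\Xbf_i'^{(-j)})}_{\H}^2$, I would first bound
\begin{align*}
    \left| \frac{1}{n} \sum_{i=1}^n \norm{\mu_n(\Xbf_i') - \hmunbruteXl}_{\H}^2 - \E[Z_1] \right|
    &\leq \frac{1}{n}\sum_{i=1}^n \left| \norm{\mu_n(\Xbf_i') - \hmunbruteXl}_{\H}^2 - Z_i \right| \\
    &\quad + \left| \frac{1}{n}\sum_{i=1}^n Z_i - \E[Z_1] \right|,
\end{align*}
and note that, since the evaluation points $\Xbf_1', \ldots, \Xbf_n'$ are i.i.d.\ copies of $\Xbf$, we have $\E[Z_1] = \E[\norm{\mu(\Xbf_1) - \mu(\Xbf_1^{(-j)})}_{\H}^2]$, the target constant.

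The first term is handled by taking expectations and exploiting that the $\Xbf_i'$ are i.i.d.\ and independent of the training sample $\Zcal_n$ defining $\mu_n$, so that every summand shares the expectation of the $i=1$ summand:
\begin{align*}
    \E\!\left[ \frac{1}{n}\sum_{i=1}^n \left| \norm{\mu_n(\Xbf_i') - \hmunbruteXl}_{\H}^2 - Z_i \right| \right] = \E\!\left[ \left| \norm{\mu_n(\Xbf'_1) - \mu_n(\Xbf'^{(-j)}_1)}_{\H}^2 - \norm{\mu(\Xbf'_1) - \mu(\Xbf'^{(-j)}_1)}_{\H}^2 \right| \right],
\end{align*}
which tends to $0$ by the previously established claim (\ref{eq: meanconvergenceupper}).

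For the second term I would use boundedness: Assumption \kernelass{1} gives $\norm{\mu(\xbf)}_{\H} \leq \sqrt{C}$ for all $\xbf$ (by Jensen applied to $\mu(\xbf) = \E[k(\Ybf,\cdot)\mid\Xbf=\xbf]$ together with $\norm{k(\ybf,\cdot)}_{\H}^2 = k(\ybf,\ybf)\leq C$), so that $0 \leq Z_i \leq 4C$ and in particular $\Var(Z_1) < \infty$. Since the $Z_i$ are i.i.d., Jensen's inequality then yields
\begin{align*}
    \E\!\left[ \left| \frac{1}{n}\sum_{i=1}^n Z_i - \E[Z_1] \right| \right] \leq \left( \E\!\left[ \left( \frac{1}{n}\sum_{i=1}^n Z_i - \E[Z_1] \right)^{\!2} \right] \right)^{1/2} = \left( \frac{\Var(Z_1)}{n} \right)^{1/2} \longrightarrow 0,
\end{align*}
and adding the two bounds establishes (\ref{eq: convergencewewant}).

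Since the genuinely delicate estimate is already contained in (\ref{eq: meanconvergenceupper}), I do not expect a real obstacle here; the only points requiring care are invoking the independence of the evaluation sample from $\Zcal_n$ to reduce the first term to a single identically distributed summand, and using the kernel bound to secure the finite variance needed for the $L^2$ control of the i.i.d.\ average. Alternatively, one could observe that $\frac{1}{n}\sum_{i=1}^n Z_i \to \E[Z_1]$ almost surely by the strong law of large numbers and pass to $L^1$ convergence through the uniform bound $0 \le Z_i \le 4C$ and dominated convergence, but the direct variance computation above is cleaner.
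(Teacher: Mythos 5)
Your proof is correct and follows essentially the same route as the paper: the identical triangle-inequality decomposition, reduction of the forest-versus-target term to a single identically distributed summand controlled by \eqref{eq: meanconvergenceupper}, and a law-of-large-numbers argument for the i.i.d.\ average (the paper passes from convergence in probability to $L^1$ via uniform boundedness, whereas you use a direct variance bound, a negligible difference you yourself note).
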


\begin{claimproof}
First, 
    \begin{align*}
     &\E\left[\left | \frac{1}{n} \sum_{i=1}^n \norm{ \mu_n(\Xbf_i') - \hmunbruteXl }_{\H}^2 -\E[\norm{ \mu(\Xbf_1) - \mu(\Xbf_1^{(-j)})}_{\H}^2] \right | \right]\leq \\
     &\E\left[\left | \frac{1}{n} \sum_{i=1}^n \norm{ \mu_n(\Xbf_i') - \hmunbruteXl }_{\H}^2 - \frac{1}{n} \sum_{i=1}^n \norm{ \mu(\Xbf_i')  - \mu(\Xbf_i'^{(-j)})}_{\H}^2 \right | \right] +\\
     &\E\left[\left | \frac{1}{n} \sum_{i=1}^n \norm{ \mu(\Xbf_i') - \mu(\Xbf_i'^{(-j)}) }_{\H}^2 -\E[\norm{ \mu(\Xbf_1) - \mu(\Xbf_1^{(-j)})}_{\H}^2] \right | \right]
\end{align*}
For the first term, by the triangle inequality,
\begin{align*}
    &\E\left[\left | \frac{1}{n} \sum_{i=1}^n \norm{ \mu_n(\Xbf_i') - \hmunbruteXl }_{\H}^2 - \frac{1}{n} \sum_{i=1}^n \norm{ \mu(\Xbf_i')  - \mu(\Xbf_i'^{(-j)})}_{\H}^2 \right | \right]\\
        &\leq \frac{1}{n} \sum_{i=1}^n \E\left[\left |  \norm{ \mu_n(\Xbf_i') - \hmunbruteXl }_{\H}^2 -\norm{ \mu(\Xbf'_i) - \mu(\Xbf'^{(-j)}_i)}_{\H}^2 \right | \right]\\
    &=\E\left[\left | \norm{ \mu_n(\Xbf_1') - \mu_n(\Xbf_1'^{(-j)}) }_{\H}^2 -  \norm{ \mu(\Xbf_1')  - \mu(\Xbf_1'^{(-j)})}_{\H}^2 \right | \right],
\end{align*}
 the latter goes to zero due to \eqref{eq: meanconvergenceupper}.

For the second term, 
\begin{align*}
    \left | \frac{1}{n} \sum_{i=1}^n \norm{ \mu(\Xbf_i') - \mu(\Xbf_i'^{(-j)}) }_{\H}^2 -\E[\norm{ \mu(\Xbf_1) - \mu(\Xbf_1^{(-j)})}_{\H}^2] \right | \stackrel{p}{\to} 0,
\end{align*}
by the law of large numbers. Since the sequence is again uniformly bounded, the same arguments as in Proposition \eqref{thm: meanconsistency}, also imply 
\[
\E \left[ \left | \frac{1}{n} \sum_{i=1}^n \norm{ \mu(\Xbf_i') - \mu(\Xbf_i'^{(-j)}) }_{\H}^2 -\E[\norm{ \mu(\Xbf_1) - \mu(\Xbf_1^{(-j)})}_{\H}^2] \right |\right ] \stackrel{p}{\to} 0,
\]
proving the claim.
\end{claimproof}

In turn, \eqref{eq: convergencewewant} implies weak convergence of the upper part of $I_n^{(-j)}$, that is,
\begin{align*}
    \frac{1}{n} \sum_{i=1}^n \norm{ \mu_n(\Xbf_i') - \hmunbruteXl }_{\H}^2  \stackrel{p}{\to} \E[\norm{ \mu(\Xbf_1) - \mu(\Xbf_1^{(-j)})}_{\H}^2].
\end{align*}

For the lower part, we first proof

\begin{claim}
   \begin{align}\label{condition}
\E[\| \frac{1}{n} \sum_{i=1}^n \mu_n(\Xbf_i') - \E[\mu(\Xbf_1') ] \|_{\H}] \to 0.   
\end{align} 
\end{claim}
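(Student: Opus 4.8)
The plan is to decompose the quantity inside the norm by inserting the population embedding evaluated on the evaluation sample, and then to treat the forest-error term and the law-of-large-numbers term separately. Concretely, by the triangle inequality,
\begin{align*}
    \norm{ \frac{1}{n} \sum_{i=1}^n \mu_n(\Xbf_i') - \E[\mu(\Xbf_1')] }_{\H} &\leq \frac{1}{n} \sum_{i=1}^n \norm{ \mu_n(\Xbf_i') - \mu(\Xbf_i') }_{\H} \\
    &\quad + \norm{ \frac{1}{n} \sum_{i=1}^n \mu(\Xbf_i') - \E[\mu(\Xbf_1')] }_{\H},
\end{align*}
so it suffices to show that each of the two terms on the right converges to zero in expectation.

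First I would handle the forest-error term. Taking expectations and using that the evaluation points $\Xbf_1', \ldots, \Xbf_n'$ are identically distributed and independent of the training sample $\Zcal_n$ on which $\mu_n$ is built, every summand shares the same expectation, so
\begin{align*}
    \E\left[ \frac{1}{n} \sum_{i=1}^n \norm{ \mu_n(\Xbf_i') - \mu(\Xbf_i') }_{\H} \right] = \E[ \norm{ \mu_n(\Xbf_1') - \mu(\Xbf_1') }_{\H} ] \longrightarrow 0,
\end{align*}
where the limit is exactly Proposition~\ref{thm: meanconsistency} applied to the random test point $\Xbf_1'$. For the second term I would exploit that $\mu(\Xbf_1'), \ldots, \mu(\Xbf_n')$ are i.i.d.\ $\H$-valued elements that are uniformly bounded, since $\norm{\mu(\xbf)}_{\H} \leq \sqrt{C}$ under Assumption~\ref{kernelass1_app}. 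Rather than invoking a Banach-space law of large numbers, it is cleanest to control the second moment directly: after centering, independence kills the cross terms, giving
\begin{align*}
    \E\left[ \norm{ \frac{1}{n} \sum_{i=1}^n \mu(\Xbf_i') - \E[\mu(\Xbf_1')] }_{\H}^2 \right] = \frac{1}{n} \left( \E[\norm{\mu(\Xbf_1')}_{\H}^2] - \norm{\E[\mu(\Xbf_1')]}_{\H}^2 \right) = \frac{1}{n} \VH[\mu(\Xbf_1')] \longrightarrow 0,
\end{align*}
the right-hand side being finite by boundedness. A final application of Jensen's inequality, $\E[\norm{\cdot}_{\H}] \leq (\E[\norm{\cdot}_{\H}^2])^{1/2}$, upgrades this to convergence of the expected norm, and combining the two bounds yields~\eqref{condition}.

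The main difficulty here is conceptual rather than computational: one must keep track of which randomness is being averaged. The reduction of the averaged forest error to a single expectation hinges on the evaluation sample $\{\Xbf_i'\}$ being independent of $\Zcal_n$ and identically distributed, which is precisely the setting in which Proposition~\ref{thm: meanconsistency} is stated (a single random test point drawn independently of the training data). Everything else reduces to the uniform boundedness of the kernel embeddings, already established in the proof of Proposition~\ref{thm: meanconsistency}, so no additional assumptions are needed.
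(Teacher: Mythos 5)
Your proof is correct, and it shares the paper's skeleton: the same triangle-inequality decomposition inserting $\frac{1}{n}\sum_{i} \mu(\Xbf_i')$, and the same reduction of the forest-error term to $\E[\norm{\mu_n(\Xbf_1') - \mu(\Xbf_1')}_{\H}] \to 0$ via identical distribution of the evaluation points and Proposition~\ref{thm: meanconsistency}. Where you genuinely diverge is the empirical-mean term. The paper invokes the law of large numbers in Hilbert spaces to get $\norm{\frac{1}{n}\sum_{i}\mu(\Xbf_i') - \E[\mu(\Xbf_1')]}_{\H} \stackrel{p}{\to} 0$, and then uses uniform boundedness to upgrade convergence in probability to convergence in expectation, the same two-step device used in Proposition~\ref{thm: meanconsistency}. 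You instead compute the second moment directly: by independence of the evaluation points the cross terms vanish, so $\E[\norm{\frac{1}{n}\sum_{i}\mu(\Xbf_i') - \E[\mu(\Xbf_1')]}_{\H}^2] = \frac{1}{n}\VH[\mu(\Xbf_1')]$, which is finite by boundedness of the kernel, and Jensen's inequality then gives $L^1$ convergence. Your route is more elementary and self-contained — it avoids the citation to an abstract Banach-space LLN and yields an explicit $\O(n^{-1/2})$ rate for this term as a by-product — but it leans on the Hilbert structure: the factorization $\E[\langle \xi_i, \xi_j\rangle_{\H}] = \langle \E[\xi_i], \E[\xi_j]\rangle_{\H}$ for independent Bochner-integrable elements (justified by conditioning, and requiring the separability of $\H$ that Assumption~\ref{kernelass1_app} provides, as the paper notes). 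The paper's argument, by contrast, would carry over verbatim to a general separable Banach space. Both arguments are complete; nothing is missing from yours.
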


\begin{claimproof}
    To show \eqref{condition}, note that
\begin{align*}
    &\E[ \norm{\frac{1}{n} \sum_{i=1}^n \mu_n(\Xbf_i') - \E[\mu(\Xbf_1')]  }_{\H}] \leq  \\
    &\E[  \norm{ \frac{1}{n} \sum_{i=1}^n \mu_n(\Xbf_i') -  \frac{1}{n} \sum_{i=1}^n  \mu(\Xbf_i')  }_{\H}] +\E[ \norm{ \frac{1}{n} \sum_{i=1}^n \mu(\Xbf_i')   - \E[\mu(\Xbf_1')] }_{\H}] 
\end{align*}

For the first term,
\begin{align*}
   \E[  \norm{ \frac{1}{n} \sum_{i=1}^n \mu_n(\Xbf_i') -  \frac{1}{n} \sum_{i=1}^n  \mu(\Xbf_i')  }_{\H}]  &\leq  \frac{1}{n} \sum_{i=1}^n \E[ \norm{\mu_n(\Xbf_i') - \mu(\Xbf_i')  }_{\H}] \\
   &=\E[ \norm{\mu_n(\Xbf_1) - \mu(\Xbf_1)  }_{\H}] \to 0.
\end{align*}
Moreover, it follows by the Law of Large numbers on $\H$ (see e.g., \citet[Chapter 7]{hilbertspacebook}), that
\begin{align}
    \norm{ \frac{1}{n} \sum_{i=1}^n \mu(\Xbf_i')   - \E[\mu(\Xbf_1') }_{\H} \stackrel{p}{\to} 0.
\end{align}
As $\norm{ \frac{1}{n} \sum_{i=1}^n \mu(\Xbf_i')   - \E[\mu(\Xbf_1') }_{\H}$ is uniformly bounded, we have 
\[
\E[ \norm{ \frac{1}{n} \sum_{i=1}^n \mu(\Xbf_i')   - \E[\mu(\Xbf_1')] }_{\H}] \to 0.
\]

\end{claimproof}

Thus, $\frac{1}{n} \sum_{i=1}^n \mu_n(\Xbf_i')$ is a mean-consistent estimate of $\E[\mu(\Xbf_1')]$ and the same argument to prove \eqref{eq: convergencewewant} applied again shows that 
\begin{align*}
    \frac{1}{n} \sum_{i=1}^n \norm{ \mu_n(\Xbf_i') - \frac{1}{n} \sum_{i'=1}^n \mu_n(\Xbf'_{i'}) }_{\H}^2 \stackrel{p}{\to} \E[ \norm{ \mu(\Xbf_1) - \E[ \mu(\Xbf_1)]}_{\H}^2] = \VH[\mu(\Xbf_1)].
\end{align*}

\end{proof}

We now consider the consistency of the projected DRF,
\begin{align}\label{finalestimator2_projected}
    \hmunprojx = \binom{n}{s_n}^{-1}  \sum_{i_1 < i_2 < \ldots < i_{s_n}} \E_{\varepsilon} \left[ T^{(-j)}(\xbf^{(-j)}, \varepsilon; \Zbf_{i_1}, \ldots, \Zbf_{i_{s_n}}) \right],
\end{align}
where the sum is taken over all $\binom{n}{s_n}$ possible subsamples $\Zbf_{i_1}, \ldots, \Zbf_{i_{s_n}}$ of $\Zbf_{1}, \ldots, \Zbf_{n}$ and $s_n \to \infty$ with $n$ and where
\begin{align*}
    T^{(-j)}(\xbf^{(-j)}, \varepsilon; \Zbf_{1}, \ldots, \Zbf_{s_n})=\sum_{i=1}^{s_n} \frac{\1(\Xbf_{i}^{(-j)} \in \mathcal{L}^{(-j)}(\xbf^{(-j)}))}{|\mathcal{L}^{(-j)}(\xbf^{(-j)})|} k(\Ybf_i, \cdot ).
\end{align*}
For simplicity we write here the sum from $j=1, \ldots, s_n$, though it should be understood that $\1(\Xbf_{i}^{(-j)} \in \mathcal{L}^{(-j)}(\xbf^{(-j)}))=0$ for $i$ that are used for tree building and not to populate the leaves, according to \forestass{1}.

It should be noted that \eqref{finalestimator2_projected} is not the same as fitting a forest on the data $((\Ybf_1, \Xbf^{(-j)}_1), \ldots , (\Ybf_n, \Xbf^{(-j)}_n) )$, as growing a tree includes $X_j$ implicitly, while populating the leaves or predicting does not. Nonetheless, key assumptions about the estimator translate from $\hmun$ to $\hmunprojx$:

\begin{enumerate}[label=(\textbf{F\arabic*}')]
    \item\label{forestass1j} (\textit{Honesty}) The data used for constructing $T^{(-j)}$ is split into two halves; the first is used for determining the splits and the second for populating the leaves and thus for estimating the response. The covariates in the second sample may be used for the splits, to enforce the subsequent assumptions, but not the response.
    \item\label{forestass2j} (\textit{Random-split}) At every split point and for all feature dimensions $l \in \{1,\ldots,p\} \setminus j$, the probability that the split occurs along the feature $X_l$ is bounded from below by $\pi/p$ for some $\pi > 0$.
    \item\label{forestass3j} (\textit{Symmetry}) The (randomized) output of $T^{(-j)}$ does not depend on the ordering of the training samples.
    \item\label{forestass5j}(\textit{Data sampling}) 
    To grow $T^{(-j)}$, a subsample of size $s_n$ out of the $n$ training data points is sampled. We consider $s_n=n^{\beta}$ with $0 < \beta < 1$.
\end{enumerate}

\begin{lemma}
    \forestass{1}, \forestass{2}, \forestass{3}, \forestass{5} for $\hmun$ imply respectively \ref{forestass1j}, \ref{forestass2j}, \ref{forestass3j} and \ref{forestass5j} for $\hmunprojx$.
    \end{lemma}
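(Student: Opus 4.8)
The plan is to treat the lemma as a verification exercise, exploiting the fact that the projected tree $T^{(-j)}$ is grown by the \emph{same} procedure as the original tree $T_n$: the honest split of the training data into two halves, the choice of candidate splits and the split-selection rule, and the subsampling scheme are all inherited verbatim. The only difference between $T_n$ and $T^{(-j)}$ occurs at the prediction stage, where splits on $X_j$ are neutralized by routing both the query point and the training points to both children. The key observation organizing the whole proof is that each of \forestass{1}, \forestass{2}, \forestass{3}, \forestass{5} constrains only the \emph{construction} of the tree, never its prediction map, so I would establish each implication by checking that the relevant property is already a property of the shared construction.

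First, for honesty I would note that the partition of the data into a structure half (for determining splits) and a leaf-population half is part of the training procedure that $\hmunprojx$ inherits unchanged from $\hmun$; the projection acts only on how leaves are read off at prediction time and never reuses the response of the second half for splitting, so \forestass{1} yields \ref{forestass1j} directly. For the random-split property, I would observe that the split-selection mechanism of $T^{(-j)}$ is identical to that of $T_n$, so for every coordinate $l \in \{1,\ldots,p\}\setminus j$ the probability of splitting on $X_l$ remains bounded below by $\pi/p$; the guarantee originally attached to $X_j$ itself is irrelevant to \ref{forestass2j}, which only quantifies over $l \neq j$. Symmetry transfers because the output of $T^{(-j)}$ is a deterministic function of the (symmetric) tree partition composed with the projection rule, and neither the partition nor the cell-intersection step depends on the ordering of the training sample, giving \ref{forestass3j}. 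Finally, \ref{forestass5j} is immediate, since $T^{(-j)}$ uses exactly the same subsample of size $s_n = n^{\beta}$ as $T_n$.

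The point I would emphasize is that there is no real obstacle here beyond bookkeeping, \emph{provided} one resists the temptation to also carry over $\alpha$-regularity \forestass{4}: the cell-intersection performed by the projection can produce terminal cells that are much smaller, or differently shaped, than the $\alpha$-regular leaves of the original partition, so \forestass{4} genuinely fails to transfer and is deliberately absent from the list. Making this distinction explicit is the conceptually important step, because it is exactly why the subsequent consistency argument for $\hmunprojx$ cannot simply reuse the rate established for $\hmun$ and must instead rely on the separate, more delicate diameter-control argument producing the exponent involving $\alpha$, $\beta$, $p$, and $\pi$ in Proposition \ref{thm: meanconsistencyprojected}.
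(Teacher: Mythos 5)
Your proposal is correct and takes essentially the same route as the paper's proof, which simply observes that \textbf{(F1')}--\textbf{(F5')} are restatements of \forestass{1}--\forestass{3}, \forestass{5} with the tree replaced by $T^{(-j)}$, and that none of these construction-stage properties is affected by the projection, which only alters the prediction map. Your added remark that $\alpha$-regularity \forestass{4} deliberately does not transfer---the intersected projected cells need not satisfy the leaf-size guarantees, which is why the paper controls $\text{diam}(\Lcal^{(-j)}(\xbf^{(-j)}))$ separately via $\text{diam}(\Lcal^{(-j)}(\xbf^{(-j)})) \leq \text{diam}(\Lcal(\xbf))$ and the original tree's regularity---is accurate and makes explicit a point the paper leaves implicit.
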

\begin{proof}
   \ref{forestass1j}--\ref{forestass5j} are simply restatements of \forestass{1}--\forestass{3}, \forestass{5}, with tree replaced by $T^{(-j)}$. As each is not impacted by the projection, they continue holding for the projected DRF.
\end{proof}

In particular, the conditional independence statements derived from honesty \forestass{1}, crucial for the proofs in \citet{DRF-paper}, remain the same. Moreover, $T^{(-j)}$ is still a weighted mean involving $k(\ybf_i, \cdot)$. As such most results follow in exactly the same way as in \citet{DRF-paper} and are thus mostly stated for completeness. Throughout we assume that expectations on $\H$ are well-defined. In particular, without always explicitly stating it we assume \kernelass{1} holds, such that $\H$ is separable and measurability issues do not arise, as in \citet{DRF-paper, näf2023confidence}.

Applying the decomposition in \citet[Lemma 9]{DRF-paper} to $\hmunprojx$, we obtain

\begin{lemma}\label{variancebound}
Assume $T(\xbf, \varepsilon; \Zcal_{s_n})$ satisfies \forestass{3}. Then,
\begin{align}
   \VH(\hmunprojx)
   &\leq \left( \frac{s_n}{n} + \frac{s_n^2}{n^2} \right) \VH(T^{(-j)}),
\end{align}
\end{lemma}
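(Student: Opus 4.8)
The plan is to recognize $\hmunprojx$ in \eqref{finalestimator2_projected} as an (infinite) average of the symmetric, $\H$-valued statistic $T^{(-j)}$ over all size-$s_n$ subsamples, i.e. as a Hilbert-space–valued U-statistic with kernel $h \defeq \E_\varepsilon[T^{(-j)}(\xbf^{(-j)},\varepsilon;\,\cdot\,)]$, and then to invoke the variance decomposition of \citet[Lemma 9]{DRF-paper} verbatim. Before applying it I would verify the two structural prerequisites. First, that $h$ is a symmetric function of its $s_n$ arguments: this holds because $T^{(-j)}$ inherits \forestass{3} (restated as \ref{forestass3j} in the preceding lemma) and averaging over $\varepsilon$ preserves symmetry. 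Second, that $h$ is bounded and $\H$ is separable, both guaranteed by \kernelass{1}, so that the $\H$-valued ANOVA decomposition and the orthogonality of its layers are well defined.

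With these in place I would run the decomposition exactly as in \citet[Lemma 9]{DRF-paper}. Writing $g_1,\dots,g_{s_n}$ for the completely degenerate canonical terms of $h$, the pairwise orthogonality of the projection layers yields the closed form
\[
\VH(\hmunprojx)=\sum_{c=1}^{s_n}\frac{\binom{s_n}{c}^2}{\binom{n}{c}}\,\VH(g_c),
\qquad\text{where}\qquad
\sum_{c=1}^{s_n}\binom{s_n}{c}\,\VH(g_c)=\VH(h).
\]
I would then split off the first-order layer and bound the remainder. For $c=1$ the coefficient is $s_n^2/n$ while $s_n\,\VH(g_1)\le\VH(h)$, giving a contribution at most $(s_n/n)\,\VH(h)$. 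For $c\ge 2$ I would use the elementary inequality $\binom{s_n}{c}/\binom{n}{c}=\prod_{i=0}^{c-1}\frac{s_n-i}{n-i}\le (s_n/n)^c\le (s_n/n)^2$, so that $\sum_{c\ge 2}\frac{\binom{s_n}{c}^2}{\binom{n}{c}}\VH(g_c)\le (s_n^2/n^2)\sum_{c\ge 2}\binom{s_n}{c}\VH(g_c)\le (s_n^2/n^2)\,\VH(h)$. Adding the two pieces gives $\VH(\hmunprojx)\le\big(\tfrac{s_n}{n}+\tfrac{s_n^2}{n^2}\big)\VH(h)$.

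Finally I would pass from $h=\E_\varepsilon[T^{(-j)}]$ back to $T^{(-j)}$ via the conditional-variance (Jensen) inequality $\VH(\E_\varepsilon[T^{(-j)}])\le\VH(T^{(-j)})$, which closes the bound. The only place requiring genuine care—and hence the main obstacle—is confirming that the projection step does not disturb the hypotheses of \citet[Lemma 9]{DRF-paper}: although $T^{(-j)}$ still uses $X^{(j)}$ to grow the splits while discarding it when populating the leaves, both the splitting and the leaf-averaging treat the $s_n$ subsampled points interchangeably, so symmetry, and with it the orthogonality underpinning the decomposition, is preserved. Checking this is precisely the content of the preceding lemma, which is why the entire decomposition—and therefore the variance bound—transfers from $\hmun$ to $\hmunprojx$ unchanged.
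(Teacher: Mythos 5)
Your proposal is correct and follows essentially the same route as the paper: the paper's proof simply notes that \forestass{3} implies the symmetry condition \ref{forestass3j} for $T^{(-j)}$ and then defers to the ANOVA/Hoeffding decomposition of \citet[Lemmas 9 and 10]{DRF-paper}, which is exactly the argument you spell out (U-statistic decomposition into degenerate layers $g_c$, the coefficient bounds $\binom{s_n}{c}^2/\binom{n}{c}$, and Jensen's inequality to pass from $\E_{\varepsilon}[T^{(-j)}]$ to $T^{(-j)}$). Your version is just a self-contained expansion of the citation, with all steps valid.
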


\begin{proof}
Since \forestass{3} implies \ref{forestass3j}, the proof of this result is analogous to the one of Lemma 10 in~\citet{DRF-paper}, using the ANOVA decomposition in Lemma \citet[Lemma 9]{DRF-paper}.
\end{proof}

We then need a previous result, which we restate for convenience:

\begin{lemma} \label{lemma2}
Let $T$ be a tree satisfying \forestass{2} and \forestass{4} 
that is trained on data $\Zcal_{s_n}$. Suppose that assumption~\dataass{1} holds for $\Xbf_1,\ldots, \Xbf_{s_n}$. Then,
\begin{align}
    \P \left( \text{\textup{diam}} (\Lcal^{(-j)}(\xbf^{(-j)})) \geq \sqrt{p} \left( \frac{s_n}{2k-1}\right) ^{-0.51 \frac{\log((1-\alpha)^{-1})}{\log(\alpha^{-1})} \frac{\pi}{p}}   \right) \leq p \left( \frac{s_n}{2k-1}\right) ^{-1/2 \frac{\log((1-\alpha)^{-1})}{\log(\alpha^{-1})} \frac{\pi}{p}}.
\end{align}
\end{lemma}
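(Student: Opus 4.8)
The plan is to deduce the statement from the classical leaf--diameter bound for a single regular, random--split tree---available for the non--projected case from \citet{wager2018estimation} and used in the same form by \citet{DRF-paper}---by showing that projecting out the $j$-th coordinate can only shrink the cell along every retained coordinate. Let $\Lcal(\xbf)$ be the ordinary (hyperrectangular) leaf of $T$ into which $\xbf$ falls, and for $l \neq j$ let $I_l$ be its side along coordinate $l$, of length $\mathrm{diam}_l(\Lcal(\xbf))$. The key reduction is the containment
\[
\Lcal^{(-j)}(\xbf^{(-j)}) \subseteq \prod_{l \neq j} I_l ,
\]
which gives $\mathrm{diam}\big(\Lcal^{(-j)}(\xbf^{(-j)})\big) \leq \big(\sum_{l\neq j}\mathrm{diam}_l(\Lcal(\xbf))^2\big)^{1/2} \leq \sqrt{p}\,\max_{l\neq j}\mathrm{diam}_l(\Lcal(\xbf))$, so that a per--coordinate control of $\Lcal(\xbf)$ suffices.

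First I would establish this containment, which is the projection--specific part of the argument and follows the mechanism of \citet{Sobol_MDA}. By definition, a training point $\Xbf_i$ belongs to $\Lcal^{(-j)}(\xbf^{(-j)})$ only if it reaches exactly the same collection of terminal leaves as $\xbf$ once the $j$-splits are disregarded. Since $\xbf$ trivially reaches its own leaf $\Lcal(\xbf)$, any such $\Xbf_i$ must reach $\Lcal(\xbf)$ as well, which means $\Xbf_i$ lies on the same side as $\xbf$ of every split on a coordinate $l \neq j$ along the path from the root to $\Lcal(\xbf)$. For each fixed $l \neq j$, these constraints cut out precisely the interval $I_l$, hence $\Xbf_i^{(l)} \in I_l$; this yields the displayed containment.

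Next I would invoke the standard diameter estimate for $\Lcal(\xbf)$ coordinate by coordinate. Using \forestass{4}, the depth of $\Lcal(\xbf)$ is at least $\log(s_n/(2k-1))/\log(\alpha^{-1})$, since each node on the path to $\Lcal(\xbf)$ retains at least a fraction $\alpha$ of its parent's points while the leaf holds at most $2k-1$ of them. By \forestass{2} each split lands on a given coordinate $l$ with probability at least $\pi/p$, so a Chernoff bound controls the lower deviation of the number of $l$-splits, and \dataass{1} converts the $\alpha$-mass-regularity of each such split into a multiplicative shrinkage of the side length of $I_l$. Exactly as in the non-projected bound, this yields, for each $l \neq j$,
\[
\P\!\left( \mathrm{diam}_l(\Lcal(\xbf)) \geq \Big(\tfrac{s_n}{2k-1}\Big)^{-0.51\frac{\log((1-\alpha)^{-1})}{\log(\alpha^{-1})}\frac{\pi}{p}} \right) \leq \Big(\tfrac{s_n}{2k-1}\Big)^{-\frac12\frac{\log((1-\alpha)^{-1})}{\log(\alpha^{-1})}\frac{\pi}{p}} .
\]
A union bound over the at most $p$ coordinates $l \neq j$, combined with the containment of the first step and the factor $\sqrt{p}$, then produces the claimed inequality.

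The main obstacle is the containment step: one must argue carefully that the ``intersection of leaf collections'' defining the projected cell cannot spread points further apart along a retained coordinate than the single leaf $\Lcal(\xbf)$ does. Once this is secured, the remainder is the classical regular/random--split computation, which transfers unchanged because the projection affects neither the split probabilities of \forestass{2} nor the $\alpha$-regularity of \forestass{4}; in particular the constants $0.51$ and $\tfrac12$ arise for the same reason as in the non-projected case, namely the Chernoff slack between the mean number of on-coordinate splits and its lower-tail deviation.
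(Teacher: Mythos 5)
Your proposal is correct and follows essentially the same route as the paper's proof: the paper likewise reduces the projected cell to the original leaf via $\mathrm{diam}(\Lcal^{(-j)}(\xbf^{(-j)})) \leq \mathrm{diam}(\Lcal(\xbf))$ (citing the proof of Lemma 6 in \citet{Sobol_MDA}, whose mechanism is exactly your containment argument) and then applies Lemma 2 of \citet{wager2017estimation} to $\mathrm{diam}(\Lcal(\xbf))$. The only difference is that you unpack the two cited results (the containment and the per-coordinate Chernoff/union-bound computation) rather than invoking them as black boxes.
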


\begin{proof}
   Using Lemma 2 of~\citet{wager2017estimation} for $\text{\textup{diam}} (\Lcal(\xbf))$ and using the fact that,
   \begin{align*}
      \text{\textup{diam}} (\Lcal^{(-j)}(\xbf^{(-j)}))\leq  \text{\textup{diam}} (\Lcal(\xbf)),
   \end{align*}
   see e.g., \citet[Proof of Lemma 6]{Sobol_MDA}, gives the result.
\end{proof}

\begin{lemma}\label{helperlemma}
Let $T$ be a tree satisfying \forestass{1} and \forestass{5}.
Then,
\begin{align}\label{star1star}
    \E[T^{(-j)}(\Zcal_{s_n})] = \E[ \E[k(\Ybf,\cdot) \mid \Xbf^{(-j)} \in \Lcal^{(-j)}(\xbf^{(-j)}) ] ].
\end{align}
\end{lemma}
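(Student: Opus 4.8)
The plan is to condition out the leaf-populating responses using honesty and then recognise the resulting object as an empirical leaf average. Let $\mathcal{F}$ be the $\sigma$-algebra generated by the splitting sub-sample, the tree randomness $\varepsilon$, and all covariates $\Xbf_1,\ldots,\Xbf_{s_n}$ of the tree (but not the responses used to populate its leaves). By \forestass{1} the estimation-sample covariates may enter the split choices, so the projected leaf $\Lcal^{(-j)}(\xbf^{(-j)})$, the indicators $\1(\Xbf_i^{(-j)}\in\Lcal^{(-j)}(\xbf^{(-j)}))$, and the count $|\Lcal^{(-j)}(\xbf^{(-j)})|$ are all $\mathcal{F}$-measurable, whereas the responses are not used in their construction.

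First I would apply the tower property to the summand of \eqref{finalestimator2_projected}. Honesty \forestass{1} guarantees that, conditional on $\mathcal{F}$, each $k(\Ybf_i,\cdot)$ is distributed according to the law of $\Ybf$ given $\Xbf_i$ and is independent of the partition, so $\E[k(\Ybf_i,\cdot)\mid\mathcal{F}]=\mu(\Xbf_i)$. Using $\mathcal{F}$-measurability of the weights, this gives
\begin{align*}
\E[T^{(-j)}(\Zcal_{s_n})\mid\mathcal{F}]=\sum_{i=1}^{s_n}\frac{\1(\Xbf_i^{(-j)}\in\Lcal^{(-j)}(\xbf^{(-j)}))}{|\Lcal^{(-j)}(\xbf^{(-j)})|}\,\mu(\Xbf_i),
\end{align*}
i.e.\ the empirical average of $\mu(\Xbf_i)=\E[k(\Ybf,\cdot)\mid\Xbf=\Xbf_i]$ over the estimation points whose projected covariates fall in the leaf.

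Finally I would take the outer expectation and identify this empirical average with the population conditional mean over the leaf. Since the observations are i.i.d.\ by \dataass{1}, conditioning on the (random) leaf region and on membership in it makes the surviving points behave like draws from the law of $\Xbf$ restricted to $\{\Xbf^{(-j)}\in\Lcal^{(-j)}(\xbf^{(-j)})\}$, so that the expectation of the leaf average equals $\E[\mu(\Xbf)\mid\Xbf^{(-j)}\in\Lcal^{(-j)}(\xbf^{(-j)})]$, which by the tower property is $\E[k(\Ybf,\cdot)\mid\Xbf^{(-j)}\in\Lcal^{(-j)}(\xbf^{(-j)})]$; a further expectation over the partition then yields the right-hand side of \eqref{star1star}. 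This last identification is the delicate step, and it is where the argument follows the corresponding computation in \citet{DRF-paper}: the care is needed precisely because \forestass{1} lets the estimation covariates influence $\Lcal^{(-j)}$, so the membership event and the leaf are dependent, and it is the honest conditioning above that restores the exchangeability making the empirical leaf average an unbiased estimate of the conditional mean. All remaining manipulations are routine, since $T^{(-j)}$ is, exactly like $T$, a weighted mean of the terms $k(\Ybf_i,\cdot)$.
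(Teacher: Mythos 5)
Your overall strategy is the same as the paper's: the paper's entire proof consists of noting that \forestass{1} and \forestass{5} carry over to the projected tree (the preceding transfer lemma giving \textbf{(F1')} and \textbf{(F5')}) and then citing \citet[Lemma 12]{DRF-paper} as ``completely analogous''; you attempt to reconstruct that underlying argument. The first half of your reconstruction is sound: the projected leaf, its membership indicators and its count are $\mathcal{F}$-measurable, honesty plus i.i.d.\ sampling yields $\E[k(\Ybf_i,\cdot)\mid\mathcal{F}]=\mu(\Xbf_i)$ for the estimation-half points, and hence $\E[T^{(-j)}\mid\mathcal{F}]$ is the within-leaf average of the $\mu(\Xbf_i)$.

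The gap is in your last step, and specifically in the reason you give for it. You assert that i.i.d.\ sampling together with the ``honest conditioning'' makes the estimation points landing in $\Lcal^{(-j)}(\xbf^{(-j)})$ behave like draws from the law of $\Xbf$ restricted to that region, so that the expected within-leaf average of $\mu(\Xbf_i)$ equals the right-hand side of \eqref{star1star}. Honesty cannot deliver this: it decouples the partition from the estimation \emph{responses} (which is exactly what you already used to obtain $\E[k(\Ybf_i,\cdot)\mid\mathcal{F}]=\mu(\Xbf_i)$), but it says nothing about the dependence between the partition and the estimation \emph{covariates}, a dependence that \forestass{1} explicitly permits, since the second-half covariates may be used to place splits. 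Because the leaf region is built from the very covariates whose conditional law you are invoking, the covariates that fall in a realized leaf are in general \emph{not} distributed as i.i.d.\ draws from the restriction of $\P_{\Xbf}$ to that leaf. For instance, an honest, symmetric rule that splits a cell only when \emph{all} estimation covariates happen to fall in one half of it is admissible under \forestass{1} and \forestass{5} alone, and for such a rule the expected within-leaf average of $\mu(\Xbf_i)$ and the quantity $\E[\E[\mu(\Xbf)\mid\Xbf^{(-j)}\in\Lcal^{(-j)}(\xbf^{(-j)})]]$ need not coincide; so an exchangeability argument of the kind you sketch cannot close this step by itself. This identification is precisely the nontrivial content of the computation in \citet[Lemma 12]{DRF-paper}, which both you and the paper ultimately defer to, and it rests on the structure of that argument (how the indicator $\1(\Xbf_i^{(-j)}\in\Lcal^{(-j)}(\xbf^{(-j)}))$ and the count $|\Lcal^{(-j)}(\xbf^{(-j)})|$ are handled jointly), not on a generic consequence of honesty. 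Deferring that step via citation, as the paper does, is acceptable; but your explanatory sentence claiming that honest conditioning ``restores the exchangeability'' is incorrect as mathematics and should be deleted or replaced by the citation, otherwise the proof reads as complete when its key step is unproven.
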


\begin{proof}
    As \forestass{1} and \forestass{5} imply \ref{forestass1j} and \ref{forestass5j} the proof is completely analogous to the proof of \citet[Lemma 12]{DRF-paper}.\footnote{Though $d$ needs to be exchanged by $p$, a typo in the original paper.}
\end{proof}

\begin{corollary}\label{bias}
In addition to the conditions of Lemma~\ref{lemma2}, assume \dataass{2} and that the trees $T(\xbf, \varepsilon; \Zcal_{s_n})$ in the forest satisfy \forestass{1}. Then, we have
\begin{equation}\label{biasbound}
    \| \E[\hmunprojx] - \mu(\xbf^{(-j)}) \|_{\H} = \O\left( s_n^{-1/2 \frac{\log((1-\alpha)^{-1})}{\log(\alpha^{-1})} \frac{\pi}{p}}\right).
\end{equation}
\end{corollary}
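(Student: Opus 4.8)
The plan is to reduce the bias to the expected diameter of the projected leaf and then invoke Lemma~\ref{lemma2}. First I would note that, by the symmetry property \ref{forestass3j} and the definition of $\hmunprojx$ in \eqref{finalestimator2_projected} as an average over all size-$s_n$ subsamples of $\E_\varepsilon[T^{(-j)}]$, taking the data expectation collapses every summand to the same value, so that $\E[\hmunprojx] = \E[T^{(-j)}(\Zcal_{s_n})]$. Applying Lemma~\ref{helperlemma} then gives
\[
\E[\hmunprojx] = \E\!\left[\E[k(\Ybf,\cdot)\mid \Xbf^{(-j)} \in \Lcal^{(-j)}(\xbf^{(-j)})]\right],
\]
so the bias $\E[\hmunprojx] - \mu(\xbf^{(-j)})$ is the expectation of the difference between the leaf-averaged embedding and the target.

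Second, I would pull the RKHS norm inside the two expectations by Jensen's inequality and rewrite the inner leaf average via the tower property as $\E[k(\Ybf,\cdot)\mid\Xbf^{(-j)}\in L] = \E[\mu(\Xbf^{(-j)})\mid\Xbf^{(-j)}\in L]$, where $\mu(\cdot^{(-j)})$ denotes the reduced conditional mean embedding and $L = \Lcal^{(-j)}(\xbf^{(-j)})$. This yields
\[
\| \E[\hmunprojx] - \mu(\xbf^{(-j)}) \|_{\H} \leq \E\!\left[\E\big[\,\|\mu(\Xbf^{(-j)}) - \mu(\xbf^{(-j)})\|_\H \ \big|\ \Xbf^{(-j)} \in \Lcal^{(-j)}(\xbf^{(-j)})\big]\right].
\]
Because $\xbf^{(-j)}$ belongs to its own projected leaf by construction, every $\Xbf^{(-j)}$ entering the inner conditional expectation lies within $\text{diam}(\Lcal^{(-j)}(\xbf^{(-j)}))$ of $\xbf^{(-j)}$, so the Lipschitz continuity of $\xbf^{(-j)}\mapsto\mu(\xbf^{(-j)})$ (the reduced-covariate Lipschitz assumption, i.e.\ \dataass{3}) gives
\[
\| \E[\hmunprojx] - \mu(\xbf^{(-j)}) \|_{\H} \leq L\,\E\!\left[\text{diam}\big(\Lcal^{(-j)}(\xbf^{(-j)})\big)\right].
\]

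Third, I would control $\E[\text{diam}(\Lcal^{(-j)})]$ with Lemma~\ref{lemma2}. Since all covariates live in $[0,1]^p$, the diameter never exceeds $\sqrt p$, so I split the expectation over the favourable event of Lemma~\ref{lemma2} and its complement. Writing $c = \frac{\log((1-\alpha)^{-1})}{\log(\alpha^{-1})}\frac{\pi}{p}$, on the favourable event the diameter is $\O(s_n^{-0.51\,c})$, while the complement has probability $\O(s_n^{-\tfrac12 c})$, on which I simply bound the diameter by $\sqrt p$. Both contributions are $\O(s_n^{-\tfrac12 c})$, the slower-decaying complement term (exponent $\tfrac12 c$ rather than $0.51\,c$) dominating, which produces exactly the stated rate $\O\!\left(s_n^{-\tfrac12 \frac{\log((1-\alpha)^{-1})}{\log(\alpha^{-1})}\frac{\pi}{p}}\right)$.

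The main obstacle is the second step: justifying that the \emph{projected} leaf controls the bias. One must keep in mind that $T^{(-j)}$ is not a tree fit on $\Xbf^{(-j)}$ — its splits still use $X^{(j)}$ — so the validity of the identity $\E[k(\Ybf,\cdot)\mid\Xbf^{(-j)}\in L]$ rests on honesty \ref{forestass1j} (already used in Lemma~\ref{helperlemma}), together with the fact, borrowed from \citet[Proof of Lemma 6]{Sobol_MDA}, that the projected leaf diameter is dominated by the original leaf diameter. Getting the conditioning and the tower property right, so that only the reduced Lipschitz constant enters and the small projected diameter genuinely bounds $\|\mu(\Xbf^{(-j)})-\mu(\xbf^{(-j)})\|_\H$, is the delicate point; the remaining steps are routine given Lemmas~\ref{lemma2} and~\ref{helperlemma}.
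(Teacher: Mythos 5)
Your proof is correct and takes essentially the same route as the paper, whose own proof simply defers to \citet[Corollary 13]{DRF-paper}: the representation $\E[\hmunprojx]=\E[\E[k(\Ybf,\cdot)\mid \Xbf^{(-j)}\in\Lcal^{(-j)}(\xbf^{(-j)})]]$ from Lemma~\ref{helperlemma}, Jensen plus the tower property and Lipschitz continuity to reduce the bias to $\E[\mathrm{diam}(\Lcal^{(-j)}(\xbf^{(-j)}))]$, and then Lemma~\ref{lemma2} with the split into the favourable event and its complement, the complement's $s_n^{-c/2}$ probability giving the dominant (stated) rate. Your remark that the Lipschitz property actually invoked is \dataass{3} for the reduced map $\xbf^{(-j)}\mapsto\mu(\xbf^{(-j)})$, rather than the \dataass{2} written in the corollary's statement, is also the right reading: \dataass{3} is what the argument needs, and it is assumed in the enclosing Proposition~\ref{thm: meanconsistencyprojected}.
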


\begin{proof}
    Again the proof follows the exact same steps as in \citet[Corollary 13]{DRF-paper}, using the fact that $\Lcal^{(-j)}(\xbf^{(-j)})$ gets smaller in all dimensions from Lemma~\ref{lemma2} and \eqref{star1star} (recalling that $\mu(\xbf^{(-j)})=\E[k(\Ybf, \cdot) \mid \Xbf^{(-j)}=\xbf^{(-j)}]$).
\end{proof}

\mujconsistency*

\begin{proof}
Again the proof works in the same way as the proof of \citet[Theorem 2]{DRF-paper}, but since the result is more important than the previous ones, we state it here for completeness:

We first note that \kernelass{1} implies $\VH(T^{(-j)}) < \infty$. Thus, from Markov's inequality and Lemma \ref{variancebound},
\begin{align*}
    \P\left( n^{\gamma} \norm{\hmunprojx - \E[ \hmunprojx]}_{\mathcal{H}}  > \varepsilon \right)\leq \frac{n^{2\gamma}}{\varepsilon^2} (s/n + s^2/n^2) \VH(T^{(-j)}) =\frac{1}{\varepsilon^2} \mathcal{O}(n^{2\gamma+\beta-1}),
\end{align*}
where the last step followed from \forestass{5}.
Thus 
\[
n^{\gamma} ||\hmunprojx - \E[ \hmunprojx]||_{\mathcal{H}} =\mathcal{O}_p(1),
\]
for $\gamma \leq (1- \beta)/2$. In particular, it goes to zero for any $\varepsilon > 0$, if $\gamma < (1- \beta)/2$. Since,
\[
n^{\gamma} \left\| \hmunprojx -  \mu(\mathbf{x}) \right\|_{\mathcal{H}} \leq  n^{\gamma}\left\| \hmunprojx -  \E[\hmunprojx] \right\| + n^{\gamma}\left\| \E[\hmunprojx] -  \mu(\mathbf{x}^{(-j)}) \right\|_{\mathcal{H}}, 
\]
the result follows as soon as the second expression goes to zero. Now from Theorem \ref{bias}, with $C_{\alpha}=\frac{\log((1-\alpha)^{-1})}{\log(\alpha^{-1})}$,
\[
n^{\gamma}\| \E[\hmunprojx] - \mu(\mathbf{x}^{(-j)}) \|_{\mathcal{H}} = \mathcal{O}\left( n^{\gamma} s_n^{-1/2 C_{\alpha}  \frac{\pi}{p}}\right)= \mathcal{O}\left( n^{\gamma-1/2 \beta C_{\alpha} \frac{\pi}{p}}\right).
\]
This is bounded provided that,
\begin{align*}
    1/2 \beta C_{\alpha} \frac{\pi}{p} \geq  \gamma.
\end{align*}
This proves convergence in probability. Using \kernelass{1} convergence in expectation follows in the same way as argued in Proposition \ref{thm: meanconsistency}.
\end{proof}

Finally, given Proposition \ref{thm: meanconsistencyprojected}, Theorem 2 can be proven with the same arguments as in Theorem \ref{thm: VIconsistency}.

\vfill

\end{document}


%

%

\onecolumn
\aistatstitle{Instructions for Paper Submissions to AISTATS 2024: \\
Supplementary Materials}

\section{FORMATTING INSTRUCTIONS}

To prepare a supplementary pdf file, we ask the authors to use \texttt{aistats2024.sty} as a style file and to follow the same formatting instructions as in the main paper.
The only difference is that the supplementary material must be in a \emph{single-column} format.
You can use \texttt{supplement.tex} in our starter pack as a starting point, or append the supplementary content to the main paper and split the final PDF into two separate files.

Note that reviewers are under no obligation to examine your supplementary material.

\section{MISSING PROOFS}

The supplementary materials may contain detailed proofs of the results that are missing in the main paper.

\subsection{Proof of Lemma 3}

\textit{In this section, we present the detailed proof of Lemma 3 and then [ ... ]}

\section{ADDITIONAL EXPERIMENTS}

If you have additional experimental results, you may include them in the supplementary materials.

\subsection{The Effect of Regularization Parameter}

\textit{Our algorithm depends on the regularization parameter $\lambda$. Figure 1 below illustrates the effect of this parameter on the performance of our algorithm. As we can see, [ ... ]}

\vfill